\theoremstyle{plain}
\newtheorem{theorem}{Theorem}[section]
\newtheorem{proposition}[theorem]{Proposition}
\newtheorem{lemma}[theorem]{Lemma}
\theoremstyle{definition}
\newtheorem{definition}[theorem]{Definition}
\theoremstyle{remark}
\newtheorem{remark}[theorem]{Remark}
\newcommand{\TODO}[1]{\textcolor{red}{[TODO\@ifnotempty{#1}{: #1}]}}
\newtheorem{fact}[theorem]{Fact}
\newcommand{\N}{\mathbb{N}}
\newcommand{\R}{\mathbb{R}}
\newcommand{\cU}{\mathcal{U}}
\newcommand{\eps}{\varepsilon}
\newcommand{\PAREN}[1]{{\left( {#1} \right)}}
\newcommand{\paren}[1]{{( {#1} )}}
\newcommand{\indicator}[1]{\mathds{1}_{\left[#1\right]}}
\newcommand{\set}[1]{\left\{ {#1} \right\}}
\newcommand{\card}[1]{\left| {#1} \right|}
\newcommand{\norm}[1]{\left\Vert {#1} \right\Vert}
\newcommand{\MaxCut}{\textsc{MaxCut}\xspace}
\newcommand{\errMaxCut}{\mathcal{E}rr}
\newcommand{\optMaxCut}{b^*}
\newcommand{\optVC}{\mathcal{C}}
\newcommand{\vertexHeavy}{V_{\ge \Delta}}
\newcommand{\vertexLight}{V_{< \Delta}}
\newcommand{\optVcHeavy}{\mathcal{C}_{\ge \Delta}}
\newcommand{\optVcLight}{\mathcal{C}_{< \Delta}}
\newcommand{\scLight}{\mathcal{I}_{< \Delta}}
\newcommand{\optSC}{\mathcal{J}^*}
\newcommand{\optScHeavy}{\mathcal{J}^*_{\ge \Delta}}
\newcommand{\elemInOptScHeavy}{U^*_{\ge \Delta}}
\newcommand{\optScLight}{\mathcal{J}^*_{< \Delta}}
\newcommand{\elemInOptScLight}{U^*_{< \Delta}}
\newcommand{\learnedSolnSc}{J_{learned}}
\newcommand{\elemInlearnedSc}{U_{learned}}
\newcommand{\approxSolnSc}{J_{approx}}
\newcommand{\elemInApproxSc}{U_{approx}}
\newcommand{\optScApproxSc}{\mathcal{J}^*_{approx}}
\newcommand{\fixSolnSc}{J_{fix}}
\newcommand{\elemInFixSc}{U_{fix}}
\NewDocumentCommand\p{ m g }{
  \ensuremath{
    \IfNoValueTF{#2}
    {\Pr [ #1 ]}
    {\Pr_{#1}[#2]}
  }
}
\RenewDocumentCommand\P{ m g }{
  \ensuremath{
    \IfNoValueTF{#2}
    {\Pr \left[#1\right]}
    {\Pr_{#1}\left[#2\right]}
  }
}
\DeclareMathOperator*{\Expectation}{\mathbb{E}}
\NewDocumentCommand\E{ g g }{
  \ensuremath{
    \IfNoValueTF{#2}
    {\IfNoValueTF{#1}
        {\Expectation}
        {\Expectation \left[#1\right]}
    }
    {\Expectation_{#1}\left[#2\right]}
  }
}
\DeclareMathOperator*{\Variance}{\mathbb{V}\mathrm{ar}}
\NewDocumentCommand\Var{ m g }{
  \ensuremath{
    \IfNoValueTF{#2}
    {\Variance \left[#1\right]}
    {\Variance_{#1}\left[#2\right]}
  }
}
\newcommand{\State}{\STATE}
\newcommand{\Comment}{\hfill $\triangleright\,$}
\newcommand{\For}{\FOR}
\newcommand{\EndFor}{\ENDFOR}
\newcommand{\If}{\IF}
\newcommand{\ElsIf}{\ELSIF}
\newcommand{\EndIf}{\ENDIF}
\icmltitlerunning{Improved Approximations for Hard Graph Problems using Predictions}
\begin{document}

\twocolumn[
\icmltitle{Improved Approximations for Hard Graph Problems using Predictions}

\icmlsetsymbol{equal}{*}

\begin{icmlauthorlist}
    \icmlauthor{Anders Aamand}{equal,KU}
    \icmlauthor{Justin Y. Chen}{MIT}
    \icmlauthor{Siddharth Gollapudi}{Ind}
    \icmlauthor{Sandeep Silwal}{Mad}
    \icmlauthor{Hao WU}{UW}
\end{icmlauthorlist}

\icmlaffiliation{KU}{University of Copenhagen  }
\icmlaffiliation{MIT}{MIT  }
\icmlaffiliation{Ind}{UC Berkeley}
\icmlaffiliation{Mad}{UW-Madison  }
\icmlaffiliation{UW}{University of Waterloo}

\icmlcorrespondingauthor{Anders Aamand}{aa@di.ku.dk}
\icmlcorrespondingauthor{Justin Y. Chen}{justc@mit.edu}
\icmlcorrespondingauthor{Siddharth Gollapudi}{sgollapu@berkeley.edu}
\icmlcorrespondingauthor{Sandeep Silwal}{silwal@cs.wisc.edu}
\icmlcorrespondingauthor{Hao WU}{hao.wu1@uwaterloo.ca}

\icmlkeywords{Machine Learning, ICML}

\vskip 0.3in
]

\printAffiliationsAndNotice{$^*$Authors listed alphabetically.} %

\begin{abstract}
We design improved approximation algorithms for NP-hard graph problems by incorporating predictions (e.g., learned from past data).  
Our prediction model builds upon and extends the $\varepsilon$-prediction framework by Cohen-Addad, d'Orsi, Gupta, Lee, and Panigrahi (NeurIPS 2024).  
We consider an edge-based version of this model, where each edge provides two bits of information, corresponding to predictions about whether each of its endpoints belong to an optimal solution. Even with weak predictions where each bit is only $\varepsilon$-correlated with the true solution, this information allows us to break approximation barriers in the standard setting. We develop algorithms with improved approximation ratios for MaxCut, Vertex Cover, Set Cover, and Maximum Independent Set problems (among others). Across these problems, our algorithms share a unifying theme, where we separately satisfy constraints related to high degree vertices (using predictions) and low-degree vertices (without using predictions) and carefully combine the answers.

\end{abstract}

\section{Introduction}
\label{sec:introduction}

Learning-augmented algorithms have recently emerged as a popular paradigm for beyond worst-case analysis via incorporating machine learning into classical algorithm design. By utilizing predictions, e.g. those learned from past or similar instances, prior works have improved upon worst-case bounds for competitive ratios for online algorithms \cite{lykouris2021competitive, mitzenmacher2022algorithms}, space usage in streaming applications \cite{hsu2019learning, jiang2020learning, chentriangle}, and running times for classical algorithms \cite{mitzenmacher2022algorithms, chen2022faster}, among many other highlights. 

We focus on a recent line of work on improving approximation guarantees for fundamental NP-hard problems.
The goal of this line of work is to use predictions to circumvent hardness of approximation results in the worst case (under standard complexity conjectures). One compelling motivation for learning-augmented algorithms for NP-hard problems is the following: if we run a heavy-duty computation on a single input, such as running a commercial integer linear programming solver, can we avoid repeated computation for a \emph{similar} future inputs? Can we augment classical approximation algorithms with insights learned from an expensive computation to warm-start any future calls to the algorithm? More generally, how can we incorporate noisy information, which is weakly correlated with an optimal solution, into the algorithm design pipeline for NP-hard problems?

To formalize this, the starting point of our paper is the recent work of \citeauthor{cohen2024learning}, which introduced the $\eps$-accurate vertex prediction model for the MaxCut problem. Their model provides the algorithm designer with vertex predictions: one bit for the side of the MaxCut each vertex lies in. Every bit is equal to the correct answer with probability $1/2 + \eps$.

We are motivated by three follow up directions. The first direction explores the setting where we are optimizing a global objective function (as in MaxCut) with additional \emph{edge-constraints}. For example, this captures the classic vertex-cover or independent set problems.

\begin{quote}
    \textit{Can we generalize the setting of \citeauthor{cohen2024learning} to obtain learning-augmented algorithms for NP-hard graph optimization problems with edge constraints?} 
\end{quote}

\begin{table*}[!h]
\centering
{\renewcommand{\arraystretch}{1.75}
\begin{tabular}{c|c|c|c}
\textbf{Problem}        & \textbf{Classical Approx.}                            & \textbf{\begin{tabular}[c]{@{}c@{}}Our Learning-Augmented Approx.\\ with $\eps$-correlated Predictions\end{tabular}} & \textbf{Reference} \\ \hline
Vertex Cover            & 2                                                     & $2-\Omega \left( \frac{\log \log 1/\eps }{\log 1/\eps} \right)$                                                      & \cref{thm:vc-main}.               \\
Weighted Vertex Cover   & 2                                                     & $2-\Omega \left( \frac{\log \log 1/\eps }{\log 1/\eps} \right)$                                                      & \cref{thm:weighted_vc}.               \\
Set Cover               & $O(\log n)$                                           & $O(\log 1/\eps)$                                                                                                     & \cref{thm:Set-Cover-Main}.               \\
Maximum Independent Set & $\Omega \left( \frac{\log n}{n \log \log n}  \right)$ & $\Omega\left( \frac{\varepsilon^2}{\log \log 1/\varepsilon} \right)$                                                 & \cref{thm:learned_mis}.               \\
MaxCut                  & $\alpha_{GW} \approx .8786$                           & $\alpha_{GW} + \tilde{\Omega}(\eps^2)$                                                                               & \cref{thm:approximation-ratio-of-learned-max-cut}.              
\end{tabular}
}
\caption{\label{table:summary} Summary of our main results. All approximation factors are for multiplicative approximation. Note that the first three problems are \emph{minimization} problems, so we want the approximation factor to be as as small as possible. The last two problems are \emph{maximization} problems, so we desire a large approximation factor. The bounds listed above are consistent with this distinction.  $\alpha_{GW}$ refers to the Goemans-Williamson constant \cite{GoemansW95}. For all of these problems, the stated classical approximation bounds are tight assuming $P \ne NP$ and the Unique-Games conjecture (for MaxCut).}
\end{table*}

This naturally leads us to introduce \emph{edge predictions} as an extension of the vertex prediction model \citep{cohen2024learning}. 
In our edge prediction model, every edge is equipped with i.i.d. bits for each variable participating in the edge constraint. Similar to the vertex prediction model, our predicted bits are also $\eps$-correlated with the ground truth (see Definition~\ref{def:prediction-model-vc} for a formal definition of our model), meaning each bit is equal to the correct assignment (e.g. is the vertex adjacent to the edge in the optimal vertex cover or not) with probability $1/2 + \eps$. The second direction explores the power of our newly introduced edge predictions.

\begin{quote}
   \textit{What is the power of edge predictions in the learning-augmented setting for NP-Hard graph problems? Can we understand a general framework for designing augmented graph algorithms with edge-predictions?}
\end{quote}

It is important to note that several concurrent or follow-up works to \cite{cohen2024learning} have appeared, such as \citep{BampisEX24,ghoshal2025constraint,antoniadis2024approximation,BravermanDSW24} which also study graph problems with edge constraints, albeit with only vertex predictions, as opposed to edge predictions introduced in our work (see Section~\ref{sec:related work} for a detailed comparison). Thus, we also ask:
\begin{quote}
    \textit{Can our edge predictions lead to improved approximation algorithms beyond vertex predictions for fundamental NP-Hard graph problems?}
\end{quote}

\paragraph{Our Contributions}
We individually address the three directions above and our contributions towards them.

\noindent \textbf{Direction 1:} For the first direction, we introduce \emph{edge predictions} for NP-hard graph problems. As a prototypical example, the following is our prediction model for the Vertex Cover problem. Recall in the Vertex Cover problem we want to find the smallest subset of vertices such that every edge is adjacent to some vertex in the subset.

\begin{definition}[VC Prediction Model]\label{def:prediction-model-vc}
  Given an input unweighted and undirected graph $G$, we fix some optimal vertex cover $\optVC$ of $G$. 
  Every edge $e = (u,v)$ outputs two bits $(b_u(e), b_v(e))$ of predictions. 
  If $u \in \optVC$ then $b_u(e) = 1$ with probability $1/2 + \eps$ and $0$ with probability $1/2 - \eps$. Similarly if $u \not \in \optVC$ then $b_u(e) = 0$ with probability $1/2 + \eps$ and $1$ with probability $1/2 - \eps$.  
  $b_v(e)$ is similarly sampled and all predictions are i.i.d. across all edges and bits.  
\end{definition}

We appropriately generalize the above edge prediction model to a wide range of problems, including weighted Vertex Cover, Set Cover, Maximum Independent Set, and MaxCut. 
Our prediction models are unified under a similar theme (the natural analogues of Definition \ref{def:prediction-model-vc} to other NP-hard problems): every edge (or a hyper-edge in the context of Set Cover) gives a single bit of information for each vertex that it is adjacent to. These bits are $\eps$-correlated with a true underlying assignment.  For brevity, we omit their definitions here and refer to~\cref{sec:preliminaries}.
\\

\noindent \textbf{Direction 2:} We demonstrate the power of our edge predictions by giving improved approximation bounds, which go beyond known approximation barriers, for many fundamental NP-hard graph optimization problems. Our results are summarized in Table \ref{table:summary}. 

For Vertex Cover and weighted Vertex Cover, we obtain an algorithm with approximation ratio $2 - f(\eps)$ for $f(\eps) = \Omega\left( \frac{\log \log 1/\eps}{\log 1/\eps} \right)$. Note that $f$ is a much faster growing function than any polynomial in $\eps$ in the sense that $f(\eps) = \Omega(\eps^C)$ for any constant $C$. For any constant $\eps$, this implies a strictly smaller constant than $2$. Such an algorithm in the standard setting without predictions, would imply that the Unique Games Conjecture is false~\cite{KhotR03}.

For Set Cover we obtain a constant approximation factor for any constant $\eps > 0$, in comparison to the classical setting where a $O(\log n)$ factor is tight (again assume $P \ne NP$) \citep{DinurS14}. 
We defer the discussion of Maximum Independent Set and MaxCut.

We highlight that in all of our main results, the notion of heavy vertices (those with large degree for an appropriately defined threshold) vs light vertices (those with degree smaller than the threshold) is a central theme. In fact, our methodology in improving approximation ratios for different NP-hard problems can appropriately be captured by the high-level algorithm design framework of~\cref{fig:framework}. In this framework, high degree vertices use the majority of their prediction bits $b_v(e)$ for each edge $e$ incident to $v$ of incident edges to decide if they should be included in the solution or not. 

Then, all high degree vertices as well as low-degree vertices having an edge to a high-degree vertex are removed from the graph, and we run an appropriate solver on the remaining low-degree graph. The idea is that on the one hand, for the high degree vertices, the majority vote is a very accurate prediction, and on the other hand, for graphs with bounded maximum degree, there often exist polynomial time algorithms with better approximation guarantees.

We emphasize that the framework is %
as a  guiding principle,
but the nuances of each problem must be separately dealt with individually. %
Nevertheless, we find it useful to highlight this connecting thread among our different algorithms, as it demonstrates the wide applicability of our prediction model.

\begin{figure}[htb!]
	\begin{framed}
		\noindent A Framework for Incorporating Edge-Predictions in Learning-Augmented Algorithms.
        
		\begin{flushleft}
			\noindent {\bf Input:} Graph $G = (V, E)$, edge predictions $(b_u(e),$ $b_v(e)) \in \{0,1\}^2$ for every edge $e = (u,v) \in E$. \\

            \vspace{1mm}
			\noindent {\bf Output:} A bit $b_v'$ for all $v \in V$ satisfying certain edge constraints.
			\begin{enumerate}[leftmargin=*]
				\item\label{ln:first} Set an appropriate vertex degree threshold $\Delta$.
				\item\label{ln:two} Loop over vertices $v$ satisfying $\deg(v) \ge \Delta$ and set $b_v' = \text{Majority}( \{b_v(e)\}_{e \in E, v \in e})$.
                \item \label{ln:three} Remove all high-degree vertices from $G$, all low degree vertices which share a constraint with any high-degree vertex, as well as all the constraints corresponding to their edges
                \item \label{ln:four} Run an appropriate solver on the remaining low-degree graph.
			\end{enumerate}
		\end{flushleft}
	\end{framed}
	\caption{A high-level description of the underlying structure common among our algorithms. }\label{fig:framework}
\end{figure}

Some of the non-trivial parts that~\cref{fig:framework} glosses over which must be handled on a problem-by-problem basis are:
\begin{itemize}[leftmargin=*]
    \item False-positive vertices: (high degree vertices $v$ in Line 2 on which the majority incorrectly sets $b_v' = 1$). These are problematic since there are potentially $\Omega(n)$ many false-positive vertices, which may artificially increase the size of the solution we output. For example in the Vertex Cover problem, if OPT is sublinear in $n$, we would not get any bounded competitive ratio, let alone a ratio that is smaller than $2$.
    \item  False-negative vertices (high degree vertices $v$ in Line 2 on which the majority incorrectly sets $b_v' = 0$). On the flip side, if we fail to identify a vertex that is actually in OPT, we may not satisfy all the edge constraints, even among only those that high-degree vertices participate in. 
    \item Boundary effects:  We must be careful when combining an assignment on high-degree vertices (obtained via voting in line $2$ of the framework in Figure \ref{fig:framework}) and an assignment for low-degree vertices (computed using a solver on the low-degree vertices in line 4 of Figure \ref{fig:framework}). These two solutions may conflict with each other. For example in the Independent Set problem, an edge may be adjacent to both low-degree and high-degree vertices, and both steps (lines 2 and 4) may select both endpoints which would violate the constraints of the problem. 
\end{itemize}
We refer to the individual problem sections for more details.

To complement our theoretical results, we also experimentally test our algorithm compared to baselines which only run a worst-case approximation algorithm or only follow the predictions for the problem of Maximum Independent Set.

\noindent \textbf{Direction 3:} Lastly for the third direction, we show separations between our edge predictions and previously considered vertex predictions for Vertex Cover~\citep{antoniadis2024approximation}, Maximum Independent Set~\citep{BravermanDSW24}, and MaxCut~\citep{cohen2024learning}.

We begin by discussing the Vertex Cover problem. Recent work by \citet*{antoniadis2024approximation} explores the learning-augmented setting of Vertex Cover with vertex predictions. Their algorithm achieves an approximation ratio of $1 + (\eta^+ + \eta^-)/\text{OPT}$, where $\eta^+$ and $\eta^-$ are the number of false positive and false negative vertices, respectively in a given predicted solution. They further prove that, under the Unique Games Conjecture and the condition $(\eta^+ + \eta^-)/\text{OPT} \leq 1$, this bound is tight. However, this result necessarily requires strong assumptions about the prediction oracle to go beyond the known approximation results without prediction. For instance, if we apply this result to the $\eps$-correlated vertex prediction model of~\cite{cohen2024learning}, where each vertex is predicted to belong to an optimal solution with probability $1/2 + \eps$ independently, the ratio $1 + (\eta^+ + \eta^-)/\text{OPT}$ can become arbitrarily larger than $2$, which is significantly larger than the classic factor of $2$-approximation. 

Thus, a meaningful instantiation of their theorem requires an implicit assumption that \emph{both} the number of false positive and false negative vertices are bounded. We remark that in our edge prediction model, we make no such assumptions about the number of false positive or negative vertices. 
Rather, we show that these terms can be related to the optimal vertex cover size, through a technical charging argument, detailed in Section~\ref{sec:vertex-cover}.

A qualitatively similar barrier for vertex predictions arises in the Maximum Independent Set (MIS) problem. \citet*{BravermanDSW24} similarly address MIS with the same vertex prediction model of~\cite{cohen2024learning}: for each vertex, the oracle predicts its membership in a fixed optimal MIS solution with probability $1/2 + \eps$.  
Their algorithm achieves an approximation ratio of $\Tilde{O}(\sqrt{\Delta} / \eps)$, where $\Delta$ is the maximum degree of the graph.  Note that in the worst case, this is an approximation ration of $\Omega(\sqrt{n})$ if there are high-degree nodes in the graph. In contrast, using edge predictions allow us to obtain a \emph{constant} factor approximation for any fixed $\eps$. 

Lastly, we discuss MaxCut, the original problem studied in \citep{cohen2024learning}. 
Here, \citet{cohen2024learning} obtain a $\alpha_{GW} + \tilde{\Omega}(\eps^4)$ approximation factor, whereas we obtain a larger advantage of $\alpha_{GW} + \tilde{\Omega}(\eps^2)$ over $\alpha_{GW}$, the Goemans-Williamson (since $\eps \in (0, 1), \eps^4 \ll \eps^2)$. 

In summary, the vertex prediction model and our edge prediction models are to a degree incomparable: we obtain information from every edge (i.e. we receive more bits as predictions than the vertex prediction model), but our approximation ratios are also stronger for MaxCut and MIS. For other problems such as Vertex Cover, we obtain a strictly smaller competitive ratio than $2$, which the aforementioned results cannot without assuming bounds on the false positive and negative ratio among the predictions.

\paragraph{Organization}
The paper is organized as follows.  
\cref{sec:preliminaries} provides formal definitions of the studied problems.  
\cref{sec:related work} reviews related work. 
\cref{sec:vertex-cover} presents our algorithm for the Vertex Cover problem.  \cref{sec:max-indep-set} introduces our algorithm for the Maximum Independent Set problem.  
\cref{sec:experiments} shows experimental results for our algorithm in the context of Maximum Independent Set.
Due to space constraints, our algorithms for the Weighted Vertex Cover, Set Cover, and Max Cut problems are included in \cref{sec:weighted-vertex-cover}, \cref{sec:set-cover}, and \cref{sec:maximum-cut}, respectively. Our empirical results are in Section \ref{sec:experiments}.

\section{Preliminaries}
\label{sec:preliminaries}
We recall the definitions of the class NP-Hard problems studied here. Let $G = (V, E)$ be an undirected graph with $n = \card{V}$ vertices and $m = \card{E}$ edges.  
\begin{itemize}[leftmargin = *]
    \item \textbf{Vertex Cover} Given an input graph $G$, the Vertex Cover (VC) problem seeks a minimum cardinality vertex subset such that every edge is incident to at least one vertex in the subset.  
    \item \textbf{Weighted Vertex Cover} The weighted Vertex Cover (WVC) problem is an extension of the VC problem, where each vertex $v \in V$ is associated with a weight $w(v) \in \R^+$.  
The WVC problem seeks a vertex subset with minimum total weight such that every edge is incident to at least one vertex in the subset.  
\item \textbf{Set Cover} Let $\cU$ be a finite set of elements, and let $S_1, \ldots, S_n \subseteq \cU$ be a collection of subsets.  
The Set Cover (SC) problem seeks a collection of the minimum number of subsets whose union is $\cU$.  
For reasons that will become clear when we clarify the prediction model for the SC problem, we view each subset $S_i$ as a ``vertex" and each element $u \in \cU$ as a hyperedge that spans the subsets containing $u$.  
Hence, without loss of generality, we assume $\cU = [m]$.  
\item \textbf{Maximum Independent Set} Given an input graph $G$, the Maximum Independent Set (MIS) problem seeks a maximum cardinality vertex subset such that no two vertices in the subset are adjacent.  
\item \textbf{MaxCut} Given an input graph $G$, a max cut is a partition $(S, T)$ of the vertices $V$ such that the number of edges between $S$ and $T$ is maximized. 
\end{itemize}

\section{Related Work}
\label{sec:related work}

\paragraph{(Weighted) Vertex Cover.}  
\citet*{khot2008vertex} proved that, assuming the Unique Games Conjecture and P$\neq$NP, the (Weighted) Vertex Cover problem cannot be approximated within a factor of $2 - \eps$ for any $\eps > 0$.  
\citet*{antoniadis2024approximation} studied approximation algorithms for the weighted version of this problem under a prediction oracle that provides a predicted vertex set $\hat{X}$ of the optimal solution $X^*$.  
This framework generalizes the vertex-based model, which predicts the membership of each vertex independently. Their algorithm achieves an approximation ratio of $1 + (\eta^+ + \eta^-) / \text{OPT}$, where $\eta^+ \doteq w(\hat{X} \setminus X^*)$ is the total weight of false positives, and $\eta^- \doteq w(X^* \setminus \hat{X})$ is the total weight of false negatives.  
They further show that, assuming the Unique Games Conjecture and $(\eta^+ + \eta^-) / \text{OPT} \leq 1$, this bound is tight.

\paragraph{Set Cover}

\citet*{DinurS14} proved that approximating the Set Cover problem within a factor of $(1 - \eps) \ln m$ is NP-hard for any $\eps > 0$.  
For instances where each set has size at most $\Delta$, there exist $(1 + \ln \Delta)$-approximation algorithms~\citep{Johnson74a,Lovasz75,Chvatal79}.  
On the other hand, \citet{Trevisan01} showed that, for such instances, the problem is hard to approximate within a factor of $\ln \Delta - O(\ln \ln \Delta)$ unless P$=$NP.

\paragraph{Maximum Independent Set (MIS).}
The MIS problem is NP-hard to approximate within a factor of $n^{1 - \delta}$ for any $\delta > 0$~\citep{Hastad96}.  
\citet{BravermanDSW24} attempt to bypass this barrier using the vertex prediction model.  
Their algorithm achieves an approximation ratio of $\Tilde{O}(\sqrt{\Delta} / \eps)$, where $\Delta$ is the maximum degree of the graph.

\paragraph{Max Cut.}  
For the classical \MaxCut problem, the celebrated work of \citet*{GoemansW95} achieved an approximation factor of $\alpha_{GW} \approx 0.878$, which is the best possible under the Unique Games Conjecture (UGC).  

Recent research has explored methods to surpass this computational barrier using machine learning oracles.  
\citet*{cohen2024learning} studied \MaxCut under a vertex-based prediction model, where the oracle predicts each vertex's label for the \MaxCut solution with probability $1/2 + \eps$.  
Their algorithm achieves an approximation ratio of $\alpha_{GW} + \Tilde{\Omega}(\eps^4)$.  
In contrast, our edge-based prediction model improves this approximation ratio to $\alpha_{GW} + \Tilde{\Omega}(\eps^2)$.  

\citet*{ghoshal2025constraint} also investigated the \MaxCut problem under the vertex-based prediction model.  
Their algorithms achieve an approximation ratio of $1 - O\left(\frac{1}{\eps \sqrt{2m / n}}\right)$ for unweighted graphs and an error bound of at most $\text{OPT} - \frac{1}{\eps} \sqrt{n \sum_{i, j} w_{i, j}^2}$ for the weighted case.  

\citet*{BampisEX24} demonstrated that the $\alpha_{GW}$ barrier can be surpassed for a restricted class of dense graphs under the vertex-based prediction model.  
Instead of querying the prediction oracle for every vertex, their algorithm queries only a sampled set $S$ of size $\Theta\left(\frac{\ln n}{\alpha^3 \delta^4}\right)$.  
For a graph with edge density $\delta = \frac{2 \card{E}}{n (n - 1)}$, their polynomial-time algorithm achieves an approximation factor of $1 - \alpha - 8 \frac{\textit{error}}{\delta \card{S}}$, where $\textit{error}$ denotes the number of mispredicted vertices in $S$.  
Although their approach requires only a sublinear number of predictions, their approach necessitates the error rate to be close to $0$ to surpass the $\alpha_{GW}$ barrier.
\paragraph{Predictions for Other NP-Hard Problems}
Lastly, we remark that \cite{ergunlearning} and \cite{gamlath2022approximate} have also studied learning-augmented algorithms for NP-hard clustering problems where noisy cluster labels are revealed.

\section{Vertex Cover}
\label{sec:vertex-cover}

In this section, we present Algorithm~\ref{alg:learned_vc} for VC under the prediction model of Definition~\ref{def:prediction-model-vc}, achieving an approximation ratio better than $2$.  

\begin{theorem}\label{thm:vc-main}
    Algorithm~\ref{alg:learned_vc} returns a vertex cover with an approximation ratio of $2-\Omega \left( \frac{\log \log 1/\eps }{\log 1/\eps} \right)$ in expectation.
\end{theorem}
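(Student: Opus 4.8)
The plan is to instantiate the general framework of Figure~\ref{fig:framework} with a carefully chosen degree threshold $\Delta$ and to combine a prediction-based phase for heavy vertices with a worst-case approximation algorithm for the residual bounded-degree graph. Concretely, I would set $\Delta = \Theta\bigl(\tfrac{\log(1/\eps)}{\eps^2}\bigr)$ so that, by a Chernoff bound, the majority vote $\text{Majority}(\{b_v(e)\}_{e \ni v})$ over the $\deg(v) \ge \Delta$ incident bits is correct with probability $1 - \text{poly}(\eps)$ for each heavy vertex. Let $\vertexHeavy$ be the heavy vertices, $\optVcHeavy = \optVC \cap \vertexHeavy$, and $\optVcLight = \optVC \cap \vertexLight$. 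The algorithm: (i) include in the output every heavy vertex whose majority is $1$; (ii) to repair false negatives and boundary edges, also include every light vertex adjacent to a heavy vertex that was \emph{not} selected (i.e., force-cover all edges incident to heavy vertices not already covered); (iii) delete all heavy vertices and all of their neighbors from $G$, leaving a graph $G'$ of maximum degree $< \Delta$; (iv) run the best known bounded-degree vertex cover approximation on $G'$, which gives ratio $2 - \Omega(1/\log \Delta)$ by the results of the degree-bounded VC literature, and add that cover to the output. Taking $\Delta$ as above, $1/\log\Delta = \Theta\bigl(\tfrac{1}{\log(1/\eps)}\bigr)$ up to a $\log\log$ factor, which is exactly the claimed $\Omega\bigl(\tfrac{\log\log(1/\eps)}{\log(1/\eps)}\bigr)$ savings.

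Correctness (feasibility) is the easy direction: any edge with both endpoints light and with neither endpoint adjacent to a heavy vertex survives into $G'$ and is covered by step (iv); any edge incident to a heavy vertex is covered either because that heavy vertex was selected in (i), or, if it was a false negative / its majority was $0$, because its light neighbors were force-included in (ii) and (for heavy--heavy edges) because at least one endpoint is covered unless \emph{both} heavy endpoints voted $0$ — so I need (ii) to also force-include, for each heavy--heavy edge with both majorities $0$, one of its endpoints; I would fold this into step (ii) by covering, deterministically, every edge both of whose heavy endpoints voted $0$. The crux of the whole proof is bounding the \emph{cost} of the prediction phase, i.e.\ $\E[|\text{output} \cap (\vertexHeavy \cup \text{forced light})|]$ against $|\optVC|$. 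This decomposes into three error sources: false positives among heavy vertices (heavy $v \notin \optVC$ with majority $1$), forced light neighbors of non-selected heavy vertices, and the deterministic cover of all-$0$ heavy--heavy edges. Each heavy vertex is misclassified with probability $p = \text{poly}(\eps)$ independently, so $\E[\#\text{false positives}] = p \cdot |\vertexHeavy \setminus \optVC|$; the challenge is that $|\vertexHeavy \setminus \optVC|$ can be $\Omega(n)$ while $|\optVC|$ is small, so a naive bound fails — this is exactly the false-positive obstacle flagged in the introduction.

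The main obstacle, then, is the \textbf{charging argument} that relates these error terms to $|\optVC|$. The key structural fact is that a heavy vertex $v \notin \optVC$ has \emph{all} of its $\ge \Delta$ neighbors in $\optVC$ (since every incident edge must be covered and $v$ does not cover it). I would use this to charge false positives to $\optVC$: the $\Omega(\Delta)$ $\optVC$-neighbors of each heavy false positive can absorb a $p$-fraction of charge, and since $p \Delta = \text{poly}(\eps)\cdot\tfrac{\log(1/\eps)}{\eps^2}$ can be made small by choosing the Chernoff constant appropriately, summing over false positives double-counts each $\optVC$ vertex only a bounded number of times in expectation; a careful accounting (e.g.\ via an edge-counting / LP-duality style argument, or by a direct degree argument on the bipartite graph between heavy-false-positives and their $\optVC$-neighbors) shows the expected number of false positives is $o(|\optVC|)$, in fact $\eps \cdot |\optVC|$ or smaller. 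The forced-light-neighbor term is charged similarly: a light vertex $u$ gets forced only if it has a heavy neighbor $v$ with majority $0$; if $v \in \optVC$ this is a false-negative event (probability $p$) and we charge $u$ to $v \in \optVC$ with $\le \Delta$ light vertices forced per such $v$, again contributing $p\Delta = \text{poly}(\eps)$ per $\optVC$-vertex in expectation; if $v \notin \optVC$ then $u \in \optVC$ (as $v$'s neighbor) and $u$ charges itself. The all-$0$ heavy--heavy edge term is the cleanest: such an edge has at least one endpoint in $\optVC$, and that endpoint had a false prediction unless... actually if $v\in \optVC$ and majority is $0$ that is a false negative, probability $p$, and each $\optVC$ vertex is an endpoint of at most $\deg(v)$ such edges but we only add one vertex per edge, so this is again charged to false-negative events at rate $\le p\cdot(\text{edges per }\optVC\text{ vertex})$, which I would instead bound by noting we add at most one \emph{new} vertex per all-$0$ edge and every all-$0$ edge has a false-negative $\optVC$ endpoint, giving expected extra cost $\le \sum_{v\in\optVC} p\cdot\deg(v)$ — this needs the extra observation that we can charge to the edge rather than iterating, or bound $\deg$ which we cannot, so more care (choosing to add the \emph{light} endpoint when possible, already handled, else a single canonical heavy endpoint) is needed to keep this $O(p)\cdot|\optVC|$ or to absorb it into step (iv)'s graph. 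Putting it together: $\E[\text{output}] \le (2 - \Omega(1/\log\Delta))|\optVcLight'| + (1 + O(\text{poly}(\eps)))|\optVcHeavy| + O(\text{poly}(\eps))|\optVC| \le (2 - \Omega(\tfrac{\log\log(1/\eps)}{\log(1/\eps)}))|\optVC|$, since the heavy part of $\optVC$ is paid at ratio essentially $1$ and the light part at the improved bounded-degree ratio, and all error terms are lower-order. The delicate points to get exactly right are: the precise value of $\Delta$ balancing the Chernoff failure probability against $1/\log\Delta$; ensuring the charging assigns bounded expected charge to each $\optVC$ vertex (this is where $p\Delta = \text{poly}(\eps)$ must be verified to be small, forcing $p \ll \eps^2/\log(1/\eps)$, hence a large Chernoff exponent, hence $\Delta$ large — all consistent); and handling edges internal to the forced/deleted set so the bounded-degree solver in step (iv) genuinely runs on a max-degree-$(<\Delta)$ graph and covers every surviving edge.
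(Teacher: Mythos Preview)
Your overall plan matches the paper's, but the false-positive charging argument has a genuine gap, and closing it requires an algorithmic ingredient you omitted. Consider $G = K_{\Delta,N}$ with $N$ arbitrarily large. The optimal cover $\optVC$ is the side of size $\Delta$; every vertex on the $N$-side is heavy (degree exactly $\Delta$) and is a false positive with probability $p \le \exp(-2\Delta\eps^2)$. Your step~(i) adds every heavy vertex with majority $1$, so in expectation you add $Np$ vertices from the $N$-side. However you distribute a false positive's unit of charge over its $\Delta$ neighbors in $\optVC$, the total remains $Np$, which is unbounded relative to $|\optVC| = \Delta$ as $N \to \infty$. Your assertion that ``each $\optVC$ vertex is double-counted only a bounded number of times in expectation'' is precisely what fails: each $\optVC$-vertex here has degree $N$ and is charged by all $N$ of its heavy non-$\optVC$ neighbors.

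The paper's Algorithm~\ref{alg:learned_vc} closes this with a \emph{skip rule} (line~7): a heavy $v$ with $m_v = 1$ is \emph{not} added to $S_0$ if all of its neighbors are also heavy with majority $1$. Under this rule a false positive $v$ is added only if either (a) $v$ has a light neighbor $u$, necessarily in $\optVC$, whose charge is then capped at $\deg(u)\cdot p < \Delta p$; or (b) some heavy neighbor $u \in N(v) \subseteq \optVC$ has $m_u = 0$, a false negative, so $v \in N(u)$ is already counted when line~12 adds $N(u)$. Case~(b) therefore folds entirely into the false-negative term $\sum_{u \in \optVcHeavy}\deg(u)\Pr[m_u = 0]$, and the crucial observation is that $\deg(u)\exp(-2\deg(u)\eps^2) \le \Delta\exp(-2\Delta\eps^2)$ for every $\deg(u) \ge \Delta$, since $x \mapsto xe^{-2x\eps^2}$ is decreasing on that range: $u$'s own misclassification probability absorbs $u$'s large degree. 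Your algorithm has no mechanism to invoke this, so the $K_{\Delta,N}$ instance breaks it. (Two smaller issues: your step~(iii) deletes light neighbors of heavy vertices, leaving some light--light edges covered by neither phase; and a heavy false-negative $v$ can have far more than $\Delta$ light neighbors, so the forced-light term likewise needs the per-vertex bound $\deg(v)e^{-2\deg(v)\eps^2}$ rather than a uniform ``$p\Delta$''. The paper avoids both by running the bounded-degree solver on all residual light--light edges and comparing to $\optVcLight \setminus S_0$.)
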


{\it Remark.}
For any $C > 0$, the bound $2 - \eps^C \in 2 - $ $\Omega \left( \frac{\log \log 1/\eps }{\log 1/\eps} \right)$ holds for sufficiently small $\eps$, implying our approximation factor surpasses $2 - \textup{poly}(\eps)$.  
Moreover, assuming the Unique Games Conjecture and P$\neq$NP, VC remains inapproximable within $2 - \eps$ for any $\eps > 0$~\citep{khot2008vertex}.

\begin{algorithm}[h]
    \caption{LearnedVC}
    \label{alg:learned_vc}
    \begin{algorithmic}[1]
            \State $S_0 \gets \varnothing$, $\Delta \gets 100\log(1/\eps)/\eps^2$ 
            \For{$v \in V$ with $\deg(v) \ge \Delta$}
                \State $m_v \gets \text{Majority}( \{b_v(e)\}_{e \in E, v \in e})$ 
                \Comment{$0$ if tie}
            \EndFor
            \For{$v$ with $\deg(v) \ge \Delta$}
                \If{$m_v = 1$ and all neighbors $u$ of $v$ satisfy $\deg(u) \ge \Delta$ and $m_u = 1$}
                    \State Skip $v$
                    \ElsIf{$m_v = 1$}
                    \State $S_0 \gets S_0 \cup \{v\}$
                    \ElsIf{$m_v = 0$}
                    \State $S_0 \gets S_0 \cup \, \{N(v)\}$ 
                    \Comment{$N(v)$ is the neighboring vertices of $v$}
                \EndIf
            \EndFor
            \State $S_1 \gets$ $2$-approximate VC for heavy-heavy and heavy-light edges not covered by $S_0$
            \State $S_2 \gets $a $\PAREN{2- 2\,  \frac{\log \log \Delta}{\log \Delta}}$-approximate VC for edges not covered by $S_0 \cup S_1$
            \State $S \gets S_0 \, \cup S_1 \cup S_2$ 
            \State \textbf{Return} $S$ \Comment{Our VC approximation}
    \end{algorithmic}
\end{algorithm}

\paragraph{Algorithm}
Given a threshold $\Delta \in \Theta (1 / \eps^2 \ln (1 / \eps))$, Algorithm~\ref{alg:learned_vc} classifies edges into three types:  
\begin{definition}
    For an edge $e = (u,v) \in E$ with $\deg (u) \leq \deg (v)$, we define:
    \[
        \begin{cases}
            \textup{heavy-heavy} \, & \text{if } \deg(u) \ge \Delta, \\
            \textup{heavy-light} \,  & \text{if } \deg(v) \ge \Delta \text{ and } \deg(u)< \Delta, \\
            \textup{light-light} \,  & \text{if } \deg(v) < \Delta.
        \end{cases} 
    \]
\end{definition}

\cref{alg:learned_vc} proceeds in three stages.  
In the first stage, it tries to cover all heavy-heavy and heavy-light edges using a vertex subset $S_0$ based on predicted information.  
Specifically, for each vertex $v$ with $\deg(v) \ge \Delta$, a predicted bit $m_v$ is computed.  
If $m_v = 1$, indicating $v$ is likely in the optimal solution ($\optVC$), then $v$ is added to $S_0$.  
Otherwise, all its neighbors $N(v)$ are added to $S_0$, since if $v \notin \optVC$, it must be that $N(v) \subseteq \optVC$.

An exception in stage one occurs when a vertex $v$ and all its neighbors $u \in N(v)$ have predicted bits $m_v = 1$ and $m_u = 1, \forall u \in N(v)$.  
Since $v \in \optVC$ and $N(v) \subseteq \optVC$ cannot hold simultaneously, \cref{alg:learned_vc} defers the decision on $v$.  
Instead, in stage two (Line 14), it applies a conventional $2$-approximation algorithm \citep{WilliamsonShmoys11} to identify a set $S_1$ such that $S_0 \cup S_1$ covers all heavy-heavy and heavy-light edges.  

After the first two stages, the remaining uncovered edges are light-light edges.  
\cref{alg:learned_vc} then applies a $\PAREN{2- 2\,  \frac{\log \log \Delta}{\log \Delta}}$-approximate VC algorithm \citep{Halperin02} to cover them.

\paragraph{Analysis}  
To prove \cref{thm:vc-main}, we establish the following lemmas, each bounding the size of `bad' events throughout the execution of Algorithm \ref{alg:learned_vc}.  

\begin{lemma}[False Vertices due to Prediction]
    \label{lemma:False-Positive-Prediction}
    \begin{equation*}
        \E \left[ \card{S_0 \setminus \optVC} \right] \le \eps^{10} \cdot \card{\optVC}.
    \end{equation*}
\end{lemma}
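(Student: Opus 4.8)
The plan is to bound $\E[\card{S_0 \setminus \optVC}]$ by splitting the contributions into two sources. A vertex $v$ lands in $S_0 \setminus \optVC$ either (i) because $v$ is heavy, $m_v = 1$, and $v \notin \optVC$ (a false-positive heavy vertex added in Line~9), or (ii) because $v \in N(w)$ for some heavy vertex $w$ with $m_w = 0$ (added in Line~11) but $v \notin \optVC$. Note case (ii) can only happen when $w \in \optVC$ is heavy and $m_w = 0$ is wrong, i.e. $w$ is a false-negative heavy vertex; indeed if $w \notin \optVC$ then all of $N(w) \subseteq \optVC$, so no element of $N(w)$ contributes. So it suffices to bound (a) the expected number of false-positive heavy vertices, and (b) the expected total size $\sum_{w} \deg(w)$ over false-negative heavy vertices $w \in \optVC$, since each such $w$ contributes at most $\deg(w)$ vertices to $S_0 \setminus \optVC$.

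For (a): fix a heavy vertex $v \notin \optVC$. Its bits $\{b_v(e)\}_{e \ni v}$ are i.i.d. Bernoulli$(1/2 - \eps)$, and $v$ contributes only if their majority is $1$, i.e. if at least $\deg(v)/2$ of at least $\Delta$ independent coins land heads when each has bias $1/2 - \eps$. A Chernoff/Hoeffding bound gives $\Pr[m_v = 1] \le \exp(-2\eps^2 \deg(v)) \le \exp(-2\eps^2 \Delta) = \exp(-200\log(1/\eps)) = \eps^{200}$. Summing over all heavy $v \notin \optVC$, of which there are at most $n \le $ (a polynomial factor times) $\card{\optVC}$... — this is exactly the subtlety. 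We cannot a priori bound $n$ by $\card{\optVC}$. So instead I charge false positives as follows: every heavy $v \notin \optVC$ has all its neighbors in $\optVC$, and $\deg(v) \ge \Delta$, so the heavy non-cover vertices each have $\ge \Delta$ neighbors inside $\optVC$; a double-counting / degree argument shows the number of heavy vertices outside $\optVC$ is at most $\sum_{u \in \optVC}\deg(u)/\Delta$. Combined with $\Pr[m_v=1]\le \eps^{200}$ and the fact that (after the earlier reductions) we may assume max degree is bounded — more precisely, using that each such vertex contributes in expectation at most $\eps^{200}$, the total is at most $\eps^{200} \cdot (\text{number of heavy non-cover vertices})$, which I then relate to $\card{\optVC}$ via the degree bound, losing at most a $\mathrm{poly}(1/\eps)$ factor that $\eps^{200}$ easily absorbs to leave $\eps^{10}\card{\optVC}$.

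For (b): a heavy vertex $w \in \optVC$ is a false negative ($m_w = 0$) with probability at most $1/2$ trivially, but we need the $\deg(w)$-weighted bound. Here the key point is $\Pr[m_w = 0] = \Pr[\text{majority of }\deg(w)\text{ Bernoulli}(1/2+\eps)\text{ bits is }0] \le \exp(-2\eps^2\deg(w))$. Then $\E\big[\sum_{w \text{ heavy}, w\in\optVC} \deg(w)\,\mathbf{1}[m_w=0]\big] \le \sum_{w \in \optVC} \deg(w)\exp(-2\eps^2\deg(w))$. Since $x \mapsto x e^{-2\eps^2 x}$ is decreasing for $x \ge 1/(2\eps^2)$ and $\Delta = 100\log(1/\eps)/\eps^2 \ge 1/(2\eps^2)$, each term is at most $\Delta e^{-2\eps^2\Delta} = 100\log(1/\eps)/\eps^2 \cdot \eps^{200} \le \eps^{10}$, so the sum is at most $\eps^{10}\card{\optVC}$. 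Adding (a) and (b) and adjusting constants in the exponent of $\Delta$ (the paper's $100$ is generous) yields $\E[\card{S_0\setminus\optVC}] \le \eps^{10}\card{\optVC}$.

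The main obstacle is case (a): handling the false-positive heavy vertices without an a priori bound on $n$. The resolution is the observation that a heavy vertex outside $\optVC$ forces all $\ge \Delta$ of its neighbors into $\optVC$, giving a clean degree-counting bound on how many such vertices exist relative to $\sum_{u\in\optVC}\deg(u)$; one then still needs to control $\sum_{u\in\optVC}\deg(u)$ versus $\card{\optVC}$, which is where the exponentially small $\eps^{200}$ from the Chernoff bound does the heavy lifting against the (at most) $\mathrm{poly}$-in-degree blow-up — but since degrees could in principle be as large as $n$, the cleanest route is to charge each false positive $v$ to a single arbitrarily chosen neighbor in $\optVC$ and argue each $u \in \optVC$ is charged in expectation $o(1)$ times, again via the $\eps^{200}$ probability bound and the fact that $u$ has at most $\deg(u)$ heavy non-cover neighbors each of degree $\ge \Delta$. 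I expect the careful bookkeeping of this charging to be the one genuinely non-routine step; everything else is a Chernoff bound plus monotonicity of $xe^{-cx}$.
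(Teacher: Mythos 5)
Your decomposition into (a) heavy false positives added directly (Line~9/10) and (b) vertices of $N(w)$ added because a heavy $w\in\optVC$ has $m_w=0$ (Line~11/12) matches the paper, and your treatment of (b) — bounding $\sum_{w\in\optVcHeavy}\deg(w)\exp(-2\eps^2\deg(w))\le\Delta e^{-2\eps^2\Delta}\card{\optVC}\le\eps^{10}\card{\optVC}$ via monotonicity of $xe^{-2\eps^2x}$ — is exactly the paper's Case~(1).

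But your handling of (a) has a genuine gap. You propose to charge each heavy false positive $v\notin\optVC$ to an arbitrarily chosen neighbor $u\in\optVC$ and argue that each $u$ receives expected charge $o(1)$ because $u$ has at most $\deg(u)$ such neighbors, each contributing with probability $\le\eps^{200}$. This gives an expected charge of $\deg(u)\cdot\eps^{200}$ per $u$, and since $\deg(u)$ can be as large as $n$ (there is no a priori bound on the degree of vertices in $\optVC$), this is not $o(1)$ and not $\le\eps^{10}$; summing over $u\in\optVC$ you would get $\eps^{200}\sum_{u\in\optVC}\deg(u)$, which cannot be related to $\card{\optVC}$ without further structure. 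You flag this as "the one genuinely non-routine step" but the route you sketch does not resolve it.

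The missing idea is the further case split the paper makes inside Case~(2). When $v\notin\optVC$ is heavy, $m_v=1$, and $v$ is actually added (i.e., not skipped at Line~7), then either (2.a) $v$ has a \emph{light} neighbor $u$ with $\deg(u)<\Delta$, or (2.b) all of $v$'s neighbors are heavy and at least one has $m_u=0$. In sub-case (2.a), $N(v)\subseteq\optVC$ forces $N(v)\cap\optVcLight\ne\varnothing$, so you can charge $v$ to a \emph{light} $u\in\optVcLight$; crucially such $u$ has degree $<\Delta$, so it can be charged at most $\Delta$ times, each with probability $\le\exp(-2\Delta\eps^2)$, giving expected charge $\le\Delta\exp(-2\Delta\eps^2)\le\eps^{10}$ per $u$ — the degree cap $\Delta$ is what makes the per-vertex bound work. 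In sub-case (2.b), the neighbor $u$ with $\deg(u)\ge\Delta$ and $m_u=0$ lies in $\optVcHeavy$, so $N(u)\ni v$ is already added in Line~11/12, and $v$'s contribution is subsumed by your case (b) (the paper's Case~(1)); no fresh charging is needed. Without this split, "charge to an arbitrary neighbor in $\optVC$" is not enough, because the neighbor you charge to might be a heavy vertex of unbounded degree.
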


\begin{lemma}[Fix for Predicted Set]
    \label{lemma:VC-Heavy-Fix}
    Let $\optVcHeavy \doteq \{ v \in  \optVC : \deg(v) \ge \Delta \}.$
    Then 
    \begin{equation*}
        \E[ \card{S_1} ] \le 2 \cdot \eps^{200} \cdot \card{\optVcHeavy}.
    \end{equation*}
\end{lemma}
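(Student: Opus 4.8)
The plan is to identify precisely which edges $S_1$ must cover, charge the cost of covering them to a subset of $\optVcHeavy$, and then show that each vertex of that subset is ``costly'' only with probability $\eps^{200}$.

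Call a heavy vertex $v$ \emph{skipped} if the first branch of the inner loop applies to it, i.e.\ $m_v=1$ and every neighbor $u$ of $v$ has $\deg(u)\ge\Delta$ and $m_u=1$. First I would prove: (a) every heavy--light edge is already covered by $S_0$, and (b) a heavy--heavy edge $(u,v)$ is \emph{not} covered by $S_0$ if and only if both $u$ and $v$ are skipped. For (a): if the heavy endpoint $v$ has $m_v=0$ then $N(v)\subseteq S_0$ covers the edge, and if $m_v=1$ then $v$ is not skipped (it has a light neighbor), so $v\in S_0$. For (b): if an endpoint $v$ has $m_v=0$ then $N(v)\subseteq S_0$; if $m_v=1$ and $v$ is not skipped then $v\in S_0$; and a skipped vertex never enters $S_0$ via another vertex's $N(\cdot)$ rule, since a skipped vertex has no neighbor with $m=0$. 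It follows that $S_1$ is a $2$-approximate vertex cover of the graph $H$ whose edges, all heavy--heavy, join pairs of skipped vertices, and whose vertex set $V(H)$ therefore consists of (heavy) skipped vertices.

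The crux is to bound the minimum vertex cover size of $H$ not by the trivial $\card{V(H)}$ but by $\card{\optVC\cap V(H)}$: since $\optVC$ covers every edge of $G$ and each endpoint of an $H$-edge lies in $V(H)$, the set $\optVC\cap V(H)$ is itself a vertex cover of $H$. Every vertex of $V(H)$ is heavy and skipped, so $\optVC\cap V(H)\subseteq\{w\in\optVcHeavy : w\text{ is skipped}\}$, whence pointwise $\card{S_1}\le 2\,\card{\optVC\cap V(H)}\le 2\,\card{\{w\in\optVcHeavy : w\text{ skipped}\}}$. It then remains to show $\E[\card{\{w\in\optVcHeavy : w\text{ skipped}\}}]\le\eps^{200}\card{\optVcHeavy}$.

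For a fixed $w\in\optVcHeavy$ I would bound $\Pr[w\text{ skipped}]\le\eps^{200}$. Since $\optVC$ is optimal and $w$ has positive degree, $w$ has a neighbor $u\notin\optVC$ (otherwise $\optVC\setminus\{w\}$ would still be a vertex cover, contradicting optimality). If $w$ is skipped then $u$ is heavy and $m_u=1$; but $u\notin\optVC$, so the $\deg(u)$ bits $b_u(e)$ are i.i.d.\ $\mathrm{Bernoulli}(1/2-\eps)$, and a Hoeffding bound gives $\Pr[m_u=1]\le\exp(-2\eps^2\deg(u))\le\exp(-2\eps^2\Delta)=\eps^{200}$ by the choice $\Delta=100\log(1/\eps)/\eps^2$. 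Summing over $w\in\optVcHeavy$ by linearity of expectation yields $\E[\card{S_1}]\le 2\eps^{200}\card{\optVcHeavy}$. The main obstacle is the charging step: recognizing that one must bound the optimum of $H$ by $\card{\optVC\cap V(H)}$ rather than $\card{V(H)}$ (which is uncontrolled, as there can be $\Omega(n)$ skipped vertices outside $\optVC$), and then using minimality of $\optVC$ so that a vertex of $\optVcHeavy$ can be skipped only when its unavoidable neighbor outside the cover incurs a false-positive majority, an event of probability at most $\eps^{200}$.
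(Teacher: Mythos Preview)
Your proof is correct and follows essentially the same approach as the paper: both arguments exhibit a subset of $\optVcHeavy$ that covers all heavy--heavy and heavy--light edges not already covered by $S_0$, and then bound the expected size of that subset via the same ``neighbor outside $\optVC$'' Hoeffding argument. The only minor difference is that you characterize the uncovered edges exactly (both endpoints skipped) and use the covering set $\{w\in\optVcHeavy: w\text{ skipped}\}$, whereas the paper uses the slightly larger set $\optVcHeavy\setminus S_0$ and handles the extra case $m_w=0$ separately; the core mechanism is identical.
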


\begin{lemma}[Cover for Light Edges]
    \label{lemma:VC-Light}
    Let $\optVcLight \doteq \optVC \setminus \optVcHeavy$. 
    Then
    \begin{equation*}
        \E[ \card{S_2} ] \le \E \left[ \card{ \optVcLight \setminus \paren{S_0 \cup S_1} } \cdot \PAREN{ 2 - 2 \, \frac{\log \log \Delta}{\log \Delta } } \right].
    \end{equation*}
\end{lemma}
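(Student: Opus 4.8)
The plan is to observe that $S_2$ is produced by running a $\bigl(2 - 2\frac{\log\log\Delta}{\log\Delta}\bigr)$-approximation algorithm (the Halperin bounded-degree VC algorithm) on the residual instance consisting of exactly the edges not covered by $S_0 \cup S_1$. Call this residual edge set $E'$ and let $G' = (V', E')$ be the induced subgraph on its endpoints. The first step is to identify what $E'$ looks like: by Lemmas~\ref{lemma:False-Positive-Prediction} and \ref{lemma:VC-Heavy-Fix}, all heavy-heavy and heavy-light edges are covered by $S_0 \cup S_1$ (this is exactly what the construction of $S_1$ in Line~14 guarantees deterministically), so $E'$ contains only light-light edges. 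Hence every vertex incident to $E'$ has degree $<\Delta$ in the original graph, so in particular $G'$ has maximum degree $<\Delta$, and the Halperin guarantee applies with parameter $\Delta$.

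The second step is the approximation comparison. Let $\optVC'$ denote an optimal vertex cover of $G'$. The Halperin algorithm returns $S_2$ with $\card{S_2} \le \bigl(2 - 2\frac{\log\log\Delta}{\log\Delta}\bigr)\card{\optVC'}$, pointwise over the randomness. So it suffices to show $\card{\optVC'} \le \card{\optVcLight \setminus (S_0 \cup S_1)}$. For this, I claim that $\optVcLight \setminus (S_0 \cup S_1)$ is itself a feasible vertex cover of $G'$: indeed, every edge $e = (u,v) \in E'$ is a light-light edge that is uncovered by $S_0 \cup S_1$; since $\optVC$ is a vertex cover of $G$, at least one endpoint, say $u$, lies in $\optVC$, and since $e$ is light-light, $u$ has degree $<\Delta$, so $u \in \optVcLight$; and since $e$ is uncovered by $S_0 \cup S_1$, neither endpoint is in $S_0 \cup S_1$, so $u \in \optVcLight \setminus (S_0 \cup S_1)$, covering $e$. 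This gives $\card{\optVC'} \le \card{\optVcLight \setminus (S_0 \cup S_1)}$ pointwise, and chaining the two inequalities and taking expectations yields the claim.

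I do not expect a genuine obstacle here; the lemma is essentially bookkeeping once the structure of the residual instance is pinned down. The one point requiring a little care is the deterministic guarantee that $S_0 \cup S_1$ covers \emph{all} heavy-heavy and heavy-light edges — this must be read off from the algorithm's definition of $S_1$ (Line~14 explicitly asks for a $2$-approximate cover of precisely the heavy-heavy and heavy-light edges left uncovered by $S_0$, so $S_0 \cup S_1$ covers them all with probability $1$), rather than from the probabilistic Lemmas~\ref{lemma:False-Positive-Prediction}--\ref{lemma:VC-Heavy-Fix}, which only bound sizes. A secondary subtlety is that the Halperin bound is stated for graphs of maximum degree $\Delta$; since our $G'$ has maximum degree strictly less than $\Delta$ and the bound $2 - 2\frac{\log\log\Delta}{\log\Delta}$ is monotone in the degree parameter for $\Delta$ large enough, using the parameter $\Delta$ is valid. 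Everything else is immediate, so the proof is short.
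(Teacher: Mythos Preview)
Your proposal is correct and follows essentially the same approach as the paper's own proof: both observe that $S_0\cup S_1$ deterministically covers all heavy-heavy and heavy-light edges (by the definition of $S_1$ on Line~14), so the residual instance consists only of light-light edges and hence has maximum degree below $\Delta$, and then compare the Halperin output $S_2$ against the feasible cover $\optVcLight\setminus(S_0\cup S_1)$. Your write-up is in fact a bit more careful than the paper's, since you explicitly justify that the covering vertex from $\optVcLight$ for a residual edge must lie outside $S_0\cup S_1$, and you correctly flag that the coverage guarantee is deterministic from the algorithm rather than a consequence of the probabilistic size lemmas.
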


The complete proofs of them are included in \cref{appendix:proofs-for-vertex-cover}.
Here, we provide brief, intuitive explanations.  

\textbf{Lemma~\ref{lemma:False-Positive-Prediction}} states that Algorithm~\ref{alg:learned_vc} does not add too many vertices outside $\optVC$ into $S_0$.  
This can occur in two cases:  

Case One: A vertex $v \in \optVcHeavy$ with $m_v = 0$ causes its neighbors $N(v)$, including vertices outside $\optVC$, to be added to $S_0$.  
To bound the expected number of such vertices, it suffices to show $ \deg(v) \cdot \P{ m_v = 0 } \in o(1)$.
Using standard concentration inequalities, we establish that $\P{ m_v = 0 } \in o(1 / \deg(v))$.  
Summing over all $v \in \optVcHeavy$ then bounds the number of added vertices by $o (\card{\optVcHeavy})$.  

Case Two: A vertex $v \notin \optVC$ with $\deg(v) \ge \Delta$ and $m_v = 1$ is directly added to $S_0$.  
To count such vertices, we use a ``charging argument": we charge the cost of adding $v$ to a neighboring vertex $u \notin S_0$.  
Since $v \notin \optVC$, it must be that $u \in \optVC$.  
Through careful analysis, we show that each $u \in \optVC$ is charged at most $\deg(u)$ times, with each charge occurring with probability $o(1 / \deg(u))$.  
Summing over all $u \in \optVC$ bounds the total "charging cost" by $o(\card{\optVC})$.

\vspace{1mm}
\textbf{Lemma~\ref{lemma:VC-Heavy-Fix}} bounds the size of $S_1$.  
First, we show that $\optVcHeavy \cup S_0$ covers all heavy-heavy and heavy-light edges.  
Thus, adding $\optVcHeavy \setminus S_0$ to $S_0$ ensures full coverage of these edges.  

Next, we prove that $\E { \card{\optVcHeavy \setminus S_0} } \le \eps^{200} \cdot \card{\optVcHeavy}$.  
This follows because if $v \in \optVcHeavy$ is not in $S_0$, then either $m_v = 0$, which happens with probability $\Pr[m_u = 0] = O(\eps^{200})$, or it has been skipped at step 7.  
Since $v$ has a neighbor $u \notin \optVC$ (otherwise, $v$ could be removed from $\optVC$), and skipping occurs only if $\deg(u) \ge \Delta$ and $m_u = 1$, the probability of skipping $v$ is bounded by $\Pr[m_u = 1] = O(\eps^{200})$.  

Finally, the size bound of $S_1$ in Lemma~\ref{lemma:VC-Heavy-Fix} follows from $S_1$ being a $2$-approximate solution.

\vspace{1mm}
\textbf{Lemma~\ref{lemma:VC-Light}} follows from:  
1) $\optVcLight$ covers all light-light edges, and  
2) the edges remaining after selecting $S_0 \cup S_1$ are exclusively light-light edges.  
Thus, $\optVcLight \setminus (S_0 \cup S_1)$ forms a valid cover for these edges.  
Consequently, the expected size of $S_2$ is at most $\PAREN{2 - 2 \, \frac{\log \log \Delta}{\log \Delta}}$ times the size of $\optVcLight \setminus (S_0 \cup S_1)$.

\vspace{2mm}
\begin{remark}
    We assume full independence in the prediction model of \cref{def:prediction-model-vc} to simplify the analysis.  
    In fact, 4-wise independence suffices to the lemmas. 
    See the discussion after the formal proof of \cref{lemma:False-Positive-Prediction} in \cref{appendix:proofs-for-vertex-cover} for example.
\end{remark}

\begin{proof}[Proof Sketch of Theorem \ref{thm:vc-main}]
    Since $S_2$ covers all edges not covered by $S_0 \cup S_1$, $S \doteq S_0 \cup S_1 \cup S_2$ is a VC. 
    The expected cost is bounded by 
    \begin{equation*}
        \E{\card{S}} 
            \le \E{ \card{S_0} } +  \E{ \card{S_1} } + \E{ \card{S_2} }.
    \end{equation*}
    $\E{ \card{S_0} } $ can be decomposed into $\E { \card{S_0 \cap \optVC} } + \E{ \card{S_0 \setminus \optVC} }.$ Bounding $\E{ \card{S_1} }$ by $O(\eps^{200} \cdot \card{\optVcHeavy})$ with \cref{lemma:VC-Heavy-Fix}, $\E{ \card{S_0 \setminus \optVC} }$ by $\eps^{10} \cdot \card{\optVC}$ with \cref{lemma:False-Positive-Prediction}, and 
    $
        \E{ \card{S_2} }
    $
    by
    $   
        \E{
            \card{\optVC \setminus S_0 } \cdot \PAREN{ 2 - 2 \, \frac{\log \log \Delta}{\log \Delta } }  
        }
    $
    with \cref{lemma:VC-Light},
    and summing up the upper bounds of $\E{ \card{S_1} }$, $\E{ \card{S_0 \setminus \optVC} }$ and $\E { \card{S_0 \cap \optVC} } + \E{ \card{S_2} }$, proving the theorem.
\end{proof}

\section{Maximum Independent Set}
\label{sec:max-indep-set}

\begin{algorithm}[h]
    \caption{Learned Maximum Independent Set}
    \label{alg:learned_mis}
    \begin{algorithmic}[1]
            \State $\Delta \gets 3\log(1/\eps)/\eps^2$
            \State $V_{\le \Delta} \gets \{v\in V|d_v \leq \Delta\}$
            \State $V_{> \Delta} \gets V \setminus V_{\le \Delta}$
            \State $\mathcal{C}_1\gets$ an $\Omega(\left(\frac{\log \Delta}{\Delta \log\log \Delta}\right)$-approximate maximum independent set of $G[V_{\le \Delta}]$
            \For {$v\in V_{> \Delta}$}
                \State $m_v \gets \text{Majority}( \set{ b_v(e) }_{e \in E, v \in e})$ 
                \Comment{$0$ if tie}
            \EndFor
            \State $S\gets \{v\in V_{> \Delta}\mid m_v=1\}$
            \WHILE{$G[S]$ has an edge $e=(u,v)$}
                \State $S\gets S\setminus \{u,v\}$
            \ENDWHILE
            \State $\mathcal{C}_2\gets S$
            \If{$|\mathcal{C}_2|\geq |\mathcal{C}_1|$}
                \State \textbf{return} $\mathcal{C}_2$
            \ENDIF
            \State \textbf{return} $\mathcal{C}_1$
    \end{algorithmic}
\end{algorithm}

We now present our prediction-based algorithm for approximating maximum independent set. For a graph $G=(V,E)$, we denote $\alpha(G)$ the size of a maximum independent set of $G$. Our goal is to find an independent set which is not much smaller than $\alpha(G)$. 
In general, $\alpha(G)$ cannot be approximated in polynomial time within multiplicative $n^{1-\gamma}$ for any fixed $\gamma>0$ unless $ZZP=NP$\footnote{$ZZP$ is the class of problems that can be solved in expected polynomial time by a Las Vegas algorithm. The conjecture $P\neq NP$ is almost as strong as $ZZP\neq NP$ }~\cite{Haastad99mis}. However, for bounded degree graphs, there are approximation algorithms where the approximation factor depends only on the maximum degree. 
We will use the following theorem~\cite{Halldorsson98mis} for handling such bounded degree graphs.

\begin{theorem}[\cite{Halldorsson98mis}]\label{thm:mis-bounded}
Maximum independent set can be approximated within $\Omega(\frac{\log \Delta}{\Delta \log \log \Delta})$ with high probability in polynomial time for graphs with maximum degree at most $\Delta$.
\end{theorem}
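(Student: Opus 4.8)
Since this is a classical statement (and the paper cites it as such), the plan is to recall the standard route rather than invent a new one. The baseline is the greedy \emph{minimum-degree} rule: repeatedly add to the solution a vertex of smallest current degree and delete it together with its neighborhood. On an $n$-vertex graph of maximum degree at most $\Delta$ this returns an independent set of size $\ge n/(\Delta+1)$, hence already an $\Omega(1/\Delta)$-approximation since $\alpha(G)\le n$. The entire content of the theorem is the extra $\frac{\log\Delta}{\log\log\Delta}$ factor, and the mechanism for it is \emph{subgraph removal}: detect and delete large cliques --- each of which an optimum meets in at most one vertex --- so that on the residual graph a locally-sparse (Tur\'an / Ajtai--Koml\'os--Szemer\'edi--type) independence bound beats $1/\Delta$ by exactly the desired factor.

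The one ingredient that forces the polynomial-time qualification is a \emph{Ramsey extraction} subroutine: given an $h$-vertex graph, produce in time $\mathrm{poly}(h)$ a clique and an independent set whose sizes multiply to $\Omega(\log h)$, so one of them has size $g(h):=\Theta(\log h/\log\log h)$. This is where the $\log\log\Delta$ loss enters: the natural recursion (pick a pivot; recurse into its neighborhood to grow the clique or into its non-neighborhood to grow the independent set) has running time governed by a Ramsey-number bound of the form $\binom{s+t}{s}$, and keeping this $\mathrm{poly}(h)$ caps $s+t$ at $\Theta(\log h/\log\log h)$ rather than the information-theoretic $\Theta(\log h)$. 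Running it on closed neighborhoods $N[v]$ (at most $\Delta+1$ vertices each) gives, through each $v$, either a clique of size $g:=g(\Delta)$ or an independent set of that size. The algorithm then peels such cliques into a bucket $\mathcal Q$ of pairwise-disjoint cliques while the maximum degree stays $\ge\sqrt\Delta$ (reverting to plain greedy once it drops below, where the ratio is only better), so that the residual graph $G'$ is essentially $K_g$-free of maximum degree $\le\Delta$; one has $|\mathrm{OPT}|\le|\mathcal Q|+\alpha(G')$ and $|V(G')|=n-\sum_{Q\in\mathcal Q}|Q|$, a locally-sparse bound gives (and a randomized greedy-type routine finds w.h.p.) an independent set in $G'$ beating $|V(G')|/(\Delta+1)$ by the needed factor, and returning the best set found together with a short case split on whether $|\mathcal Q|$ or $\alpha(G')$ carries $|\mathrm{OPT}|$ delivers the $\Omega\!\big(\tfrac{\log\Delta}{\Delta\log\log\Delta}\big)$ ratio.

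I expect the main obstacles to be: (i) pinning down the locally-sparse independence bound for $K_g$-free bounded-degree graphs with the exact dependence on $g$ and $\Delta$ that the target ratio needs, and a polynomial-time --- necessarily randomized, hence ``with high probability'' --- algorithm achieving it; and (ii) the accounting in the case where a neighborhood yields a large \emph{independent} set rather than a clique, since $\mathrm{OPT}$ may meet that neighborhood in far more than the $g$ vertices extracted, which is what dictates how the clique-peeling and residual phases must be interleaved and balanced so the two contributions combine to precisely $\frac{\log\Delta}{\Delta\log\log\Delta}$. The remaining pieces --- the $n/(\Delta+1)$ greedy bound, the Ramsey-number estimates, and the disjoint-clique charging of $\mathrm{OPT}$ --- are routine.
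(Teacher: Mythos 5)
The paper does not prove \cref{thm:mis-bounded}; it imports it as a black box, citing Halld\'orsson's bounded-degree independent-set work. So there is no in-paper argument to compare your sketch against --- what matters is whether your sketch is a faithful account of the cited route. Broadly it is: the subgraph-removal skeleton (Ramsey extraction on closed neighborhoods, peel disjoint cliques and charge $\mathrm{OPT}$ against them, then use a locally-sparse independence bound on the residual graph, falling back to Caro--Wei greedy once the degree drops) is the standard mechanism for these bounds.

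However, your explanation of where the $\log\log\Delta$ factor enters is wrong. The Boppana--Halld\'orsson Ramsey routine is not time-bottlenecked the way you describe: the recursion picks a pivot and recurses into \emph{both} $N(v)$ and $\overline{N}(v)$, which partition the remaining vertices, so the recursion tree has at most $h$ leaves and the whole thing runs in $\mathrm{poly}(h)$ time while returning $(C,I)$ with $\binom{|C|+|I|}{|C|}\ge h$, hence $\max(|C|,|I|)\ge\Omega(\log h)$ --- no $\log\log h$ loss and no Ramsey-number budget to pay. The $\log\log\Delta$ in the final ratio actually comes from the locally-sparse phase: the Shearer-type independence bound for $K_r$-free graphs of maximum degree $\Delta$ carries a $\log\log$ in its denominator (roughly $\alpha\gtrsim n\log\Delta/(\Delta\log\log\Delta)$ once $r$ is polylogarithmic in $\Delta$), and this is precisely the bottleneck the clique-peeling phase feeds into. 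So the piece you flag as obstacle~(i) --- pinning down the $K_g$-free locally-sparse bound --- is where the claimed ratio is actually decided, while the Ramsey extraction step is cleaner than your sketch suggests. With that correction, the rest of your outline (disjoint-clique charging, case split on whether $|\mathcal{Q}|$ or $\alpha(G')$ dominates) is the right accounting.
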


We assume access to edge predictions as before.
\begin{definition}[Prediction Model]
  Given an input unweighted and undirected graph $G$, we fix some maximum independent set $\mathcal{C}^*$ of $G$. 
  Every edge $e = (u,v)$ outputs two bits $(b_u(e), b_v(e))$ of predictions. 
  If $u \in \mathcal{C}^*$ then $b_u(e) = 1$ with probability $1/2 + \eps$ and $0$ otherwise. 
  $b_v(e)$ is similarly sampled and all predictions are i.i.d. across all edges and bits.  
\end{definition}

With access to $\eps$-accurate predictions, our algorithm essentially achieves approximation $\tilde \Omega(\eps^2)$ (note that without predictions, we cannot hope for any constant factor approximation). It is presented in~\cref{alg:learned_mis} and works as follows. We define $\Delta=3\log(1/\eps)/\eps^2$, and further, $V_{\le \Delta} =\{v\in V\mid d_v\leq \Delta\}$, $V_{> \Delta} =V\setminus V_{\le \Delta}$, $G_1=G[V_{\le \Delta}]$ and $G_2 = G[V_{> \Delta}]$. 
For $G_1$ we use the algorithms from~\cref{thm:mis-bounded} finding an approximate maximum independent set $\mathcal{C}_1$. 
On the other hand, for the graph $G_2$, for each vertex $v \in V_{> \Delta}$, we include it in a preliminary set $S$ using the majority vote of all the incident edges. Note that two adjacent notes in $G_2$ may both be included in $S$ during this process. To form an independent set we update $S$ via the following \emph{clean-up} process: As long as the induced graph $G[S]$ contains an edge, we pick any such edge and remove both endpoints from $S$. We denote the resulting set after this removal process by $\mathcal{C}_2$. The algorithm outputs the independent set $\mathcal{C}$ defined to be whichever of $\mathcal{C}_1$ or $\mathcal{C}_2$ that has the largest cardinality.

For the analysis of this algorithm, we require the classic Caro-Wei bound on $\alpha(G)$ to `certify' that the actual optimal solution is sufficiently large. This is crucial for our charging argument of Theorem \ref{thm:learned_mis}.

\begin{lemma}[Caro-Wei]\label{lemma:caro-wei}
Let $G=(V,E)$ be a graph and denote by $d_v$ the degree of $v\in V$. Then $\alpha(G)\geq \sum_{v\in V}\frac{1}{1+d_v}$.
\end{lemma}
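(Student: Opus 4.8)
The plan is to prove this via the probabilistic method using a uniformly random vertex ordering (the standard proof of the Caro--Wei bound). First I would sample a uniformly random permutation $\pi$ of $V$ and define the set
\[
    I \doteq \set{ v \in V : \pi(v) < \pi(u) \text{ for every } u \in N(v) },
\]
i.e.\ the vertices that precede all of their neighbors in the ordering. The first step is to check that $I$ is always an independent set: if $u,v \in I$ were adjacent, then $v \in N(u)$ forces $\pi(u) < \pi(v)$, while $u \in N(v)$ forces $\pi(v) < \pi(u)$, a contradiction. Hence $\card{I} \le \alpha(G)$ pointwise.

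The second step is to compute $\E{\card{I}}$ by linearity of expectation, $\E{\card{I}} = \sum_{v \in V} \P{}{v \in I}$. For a fixed vertex $v$, the event $v \in I$ depends only on the relative order that $\pi$ induces on the $1 + d_v$ vertices in $\set{v} \cup N(v)$, and it holds exactly when $v$ is the minimum among them. By symmetry of the uniform permutation, each of these $1+d_v$ vertices is equally likely to be the minimum of this block, so $\P{}{v \in I} = \frac{1}{1+d_v}$. Therefore $\E{\card{I}} = \sum_{v \in V} \frac{1}{1+d_v}$, and since a nonnegative random variable takes a value at least its expectation with positive probability, there is some realization of $\pi$ for which $\card{I} \ge \sum_{v\in V}\frac{1}{1+d_v}$, giving $\alpha(G) \ge \sum_{v\in V}\frac{1}{1+d_v}$.

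There is no serious obstacle here, as this is a classical fact; the only point needing a line of care is the justification that $\P{}{v\in I}$ depends only on the induced order on $\set{v}\cup N(v)$ — this follows by conditioning on the set of positions occupied by $\set{v}\cup N(v)$ and using exchangeability of $\pi$ restricted to those positions. As an alternative I could instead argue by the greedy deletion scheme (repeatedly select a vertex $v$ of minimum degree, add it to the solution, and delete $\set{v}\cup N(v)$), proving the bound by induction on $\card{V}$; the inductive step uses that the at most $1+d_v$ deleted vertices each contribute at most $\frac{1}{1+d_v}$ to the sum because $v$ has the minimum degree, together with the fact that deleting vertices only decreases the remaining degrees. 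I would present the permutation proof as the main argument since it is the shortest and is exactly the form used later in the charging argument for \cref{thm:learned_mis}.
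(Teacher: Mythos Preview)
Your proof is correct and is the standard probabilistic argument for the Caro--Wei bound. The paper does not prove \cref{lemma:caro-wei} at all---it is quoted as a classical fact and used as a black box in the charging argument for \cref{thm:learned_mis}---so there is nothing to compare against; your permutation proof (or the greedy-deletion alternative you sketch) is exactly what one would supply if a proof were required.
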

Our main theorem on~\cref{alg:learned_mis} is as follows. 
\begin{theorem}
    \label{thm:learned_mis}
    Suppose $\eps\leq 1/4$. The expected size of the maximum independent set output by~\cref{alg:learned_mis}, is $\Omega\left(\alpha(G)\cdot \frac{\eps^2}{\log \log 1/\eps}\right)$
\end{theorem}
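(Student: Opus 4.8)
The plan is to compare the two candidate solutions $\mathcal{C}_1$ and $\mathcal{C}_2$ against $\alpha(G)$ by splitting the analysis according to where the ``mass'' of $\alpha(G)$ lies. Using the Caro–Wei bound (\cref{lemma:caro-wei}), write $\alpha(G) \ge \sum_{v \in V_{\le \Delta}} \frac{1}{1+d_v} + \sum_{v \in V_{> \Delta}} \frac{1}{1+d_v}$. If at least half of this sum comes from $V_{\le \Delta}$, then $\alpha(G_1) \ge \frac12 \alpha(G) \cdot (\text{const})$ — more precisely $\alpha(G_1) \ge \sum_{v \in V_{\le \Delta}} \frac{1}{1+d_v}$ directly by Caro–Wei applied to $G_1$ — and \cref{thm:mis-bounded} gives $|\mathcal{C}_1| = \Omega\!\left(\frac{\log \Delta}{\Delta \log\log\Delta}\right)\alpha(G_1)$; substituting $\Delta = 3\log(1/\eps)/\eps^2$ makes $\frac{\log\Delta}{\Delta\log\log\Delta} = \Theta\!\left(\frac{\eps^2}{\log\log(1/\eps)}\right)$ (up to lower-order factors, since $\log \Delta = \Theta(\log(1/\eps))$), which is exactly the target bound. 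So the interesting case is when most of the Caro–Wei mass sits on the high-degree side, i.e. $\sum_{v \in V_{> \Delta}} \frac{1}{1+d_v} \ge \frac12 \alpha(G)$; here I must show $\mathbb{E}|\mathcal{C}_2|$ is large.

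For the high-degree case, fix the maximum independent set $\mathcal{C}^*$ used by the prediction oracle and let $\mathcal{C}^*_{> \Delta} = \mathcal{C}^* \cap V_{> \Delta}$. The first step is a concentration bound: for any $v$ with $d_v \ge \Delta = 3\log(1/\eps)/\eps^2$, the majority vote $m_v$ over $d_v$ i.i.d. bits each correct with probability $1/2 + \eps$ is correct except with probability $\exp(-\Omega(\eps^2 d_v)) \le \exp(-\Omega(\eps^2 \Delta)) = \eps^{\Omega(1)}$ (choosing the constant $3$ or larger so this is, say, $\le \eps$) by a Chernoff/Hoeffding bound. Hence every $v \in \mathcal{C}^*_{> \Delta}$ lands in $S$ except with small probability, so $\mathbb{E}|S \cap \mathcal{C}^*_{> \Delta}| \ge (1-\eps)|\mathcal{C}^*_{> \Delta}|$. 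The next step is to control the damage done by the clean-up process, which removes both endpoints of an edge in $G[S]$ repeatedly; each removed edge has at least one false-positive endpoint (a vertex $u \notin \mathcal{C}^*$ with $m_u = 1$), because $\mathcal{C}^*$ is independent so two true-positive vertices are never adjacent. Therefore the number of vertices removed from $S$ is at most $2 \cdot |S \setminus \mathcal{C}^*|$ — actually at most $2$ times the number of false positives, and more usefully, $|\mathcal{C}_2| \ge |S| - 2|S \setminus \mathcal{C}^*| \ge |S \cap \mathcal{C}^*| - 2|S \setminus \mathcal{C}^*|$. Wait — this is not quite strong enough on its own, so instead I bound $|\mathcal{C}_2| \ge |S \cap \mathcal{C}^*_{>\Delta}| - (\text{number of false positives in } S)$, using that each clean-up step that removes a true positive must also remove a false positive, so true positives are lost at most one-for-one with false positives.

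The crux, and the step I expect to be the main obstacle, is bounding the expected number of false positives $\mathbb{E}|S \setminus \mathcal{C}^*|$ against $\sum_{v \in V_{>\Delta}} \frac{1}{1+d_v}$ (equivalently against $\alpha(G)$ in this case) — the exact analogue of the ``Case Two'' charging argument sketched for Vertex Cover. A false positive is a vertex $v \notin \mathcal{C}^*$ with $d_v \ge \Delta$ and $m_v = 1$, which occurs with probability $\exp(-\Omega(\eps^2 d_v))$. The naive bound $\sum_{v \notin \mathcal{C}^*, d_v \ge \Delta} \exp(-\Omega(\eps^2 d_v))$ need not be small relative to $\alpha(G)$ if there are many such vertices, so I will charge each false positive $v$ to one of its neighbors in $\mathcal{C}^*$ (which exists since $\mathcal{C}^*$ is a maximal independent set, being maximum): each $u \in \mathcal{C}^*$ receives at most $d_u$ charges, and — this is where independence of the prediction bits across edges is used — I want to argue the contribution is like $\sum_{u \in \mathcal{C}^*} d_u \cdot \eps^{\Omega(1)} / d_u = \eps^{\Omega(1)} |\mathcal{C}^*|$, or more carefully bound $\mathbb{E}|S\setminus \mathcal{C}^*| = o\!\left(\sum_v \frac{1}{1+d_v}\right)$ so that it is dominated by $|\mathcal{C}^*_{>\Delta}|$'s contribution. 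Combining: in the high-degree case $\mathbb{E}|\mathcal{C}_2| \ge (1-\eps)|\mathcal{C}^*_{>\Delta}| - o(\alpha(G))$, and since $|\mathcal{C}^*_{>\Delta}| \ge \sum_{v \in \mathcal{C}^*_{>\Delta}}\frac{1}{1+d_v}$ is comparable to $\sum_{v \in V_{>\Delta}} \frac{1}{1+d_v} \ge \frac12\alpha(G)$ — here one needs that most of the Caro–Wei mass on $V_{>\Delta}$ actually comes from vertices in $\mathcal{C}^*$, which again follows by the maximal-independent-set charging since each $v \notin \mathcal{C}^*$ has a neighbor in $\mathcal{C}^*$ with comparable or higher degree — we get $\mathbb{E}|\mathcal{C}_2| = \Omega(\alpha(G))$, which is even stronger than needed. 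Taking the better of the two cases and noting the algorithm outputs the larger of $\mathcal{C}_1, \mathcal{C}_2$ yields $\mathbb{E}|\mathcal{C}| = \Omega\!\left(\alpha(G) \cdot \frac{\eps^2}{\log\log(1/\eps)}\right)$. The delicate points to get right are the precise form of the charging argument (ensuring no vertex of $\mathcal{C}^*$ is overcharged beyond $O(d_u)$ and that the per-charge probability really is $O(1/d_u)$ after choosing the constant in $\Delta$ large enough) and verifying the Caro–Wei mass transfers cleanly from $V_{>\Delta}$ to $\mathcal{C}^*_{>\Delta}$.
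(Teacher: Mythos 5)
Your case split is the wrong one, and this is the genuine gap. You partition the analysis according to where the Caro--Wei sum $\sum_v \tfrac{1}{1+d_v}$ concentrates, but Caro--Wei is only a \emph{lower} bound on $\alpha(G)$ and can be arbitrarily loose (e.g.\ in a $(\Delta+1)$-regular bipartite graph the sum is $\tfrac{n}{\Delta+2}$ while $\alpha(G)=n/2$). In your Case 1 you only get $\alpha(G_1)\ge\sum_{v\in V_{\le\Delta}}\tfrac{1}{1+d_v}\ge\tfrac12\sum_v\tfrac{1}{1+d_v}$, which need not be $\Omega(\alpha(G))$, so the bound you derive for $|\mathcal{C}_1|$ does not reach the target. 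Your stated Case 2 condition $\sum_{v\in V_{>\Delta}}\tfrac{1}{1+d_v}\ge\tfrac12\alpha(G)$ is also not the complement of Case 1, so the two cases do not even cover all graphs; and the patch you propose at the end---that $\sum_{v\in V_{>\Delta}}\tfrac{1}{1+d_v}$ is comparable to $|\mathcal{C}^*\cap V_{>\Delta}|$ because ``each $v\notin\mathcal{C}^*$ has a neighbor in $\mathcal{C}^*$ with comparable or higher degree''---is simply false (a star already refutes it). Concretely, take $k$ vertices of degree exactly $\Delta$ forming an independent set, plus two disjoint large cliques. For $2\le k\ll \Delta$, most of $\mathcal{C}^*$ is low-degree (so the right move is to use $\mathcal{C}_1$), yet most of the Caro--Wei mass sits on $V_{>\Delta}$, your analysis routes to $\mathcal{C}_2$, and $\E|\mathcal{C}_2|\approx 2\ll\alpha(G)\cdot\tfrac{\eps^2}{\log\log(1/\eps)}$.

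The paper's split is on $|\mathcal{C}^*\cap V_{\le\Delta}|$ versus $|\mathcal{C}^*\cap V_{>\Delta}|$, which partitions $\alpha(G)=|\mathcal{C}^*|$ exactly: if $|\mathcal{C}^*\cap V_{\le\Delta}|\ge\alpha(G)/2$ then that set is an independent set of $G_1$, so $\alpha(G_1)\ge\alpha(G)/2$ and $\mathcal{C}_1$ delivers; otherwise $|\mathcal{C}^*\cap V_{>\Delta}|\ge\alpha(G)/2$ and one argues about $\mathcal{C}_2$. Relatedly, you do not need the charging argument you anticipate as ``the main obstacle.'' Caro--Wei enters only to bound the expected number of false positives directly: for $v\in V_{>\Delta}\setminus\mathcal{C}^*$, Hoeffding gives $\Pr[m_v=1]\le \exp(-2\eps^2 d_v)$, and using $e^x\ge 2x$ together with $d_v>\Delta=3\log(1/\eps)/\eps^2$ converts this into $\Pr[m_v=1]\le \tfrac{\eps}{1+d_v}$; summing and invoking Caro--Wei gives $\E[\#\text{false positives}]\le\eps\alpha(G)$ with no charging to neighbors in $\mathcal{C}^*$ at all. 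Your instinct that Caro--Wei ``certifies $\alpha(G)$ is large'' is right, but it is used only for this false-positive bound, not to define the case split or to transfer mass between vertex classes.
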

\begin{proof}[Proof of \cref{thm:learned_mis}]
Define $T = V_{> \Delta} \setminus \mathcal{C}^*$ and call a vertex $v\in T$ \emph{bad} if $v$ voted to be in $S$. We claim that the expected number of bad vertices is at most $\eps\alpha(G)$. To see this, define $X_v=[v \text{ is voted into $S$}]$ for $v\in V_{> \Delta}$ and $X=\sum_{v\in T}X_v$. By Hoeffding's inequality over the voting, $\Pr[X_v=1]\leq \exp(-2\eps^2 d_v)$,
and using the fact that $e^x\geq 2x$ for all $x\in \mathbb{R}$, we obtain that 
\[
\Pr[X_v=1]\leq \frac{1}{ \exp(\eps^2 d_v)2\eps^2d_v}.
\]
Finally, using that $v\in V_{> \Delta}$ and thus has degree at least $\Delta$, it follows that $\exp(\eps^2 d_v)\geq \eps^{-3}$ and thus,
\[
\Pr[X_v=1]\leq \frac{\eps}{2d_v}\leq \frac{\eps}{1+d_v}.
\]
By~\cref{lemma:caro-wei} and linearity of expectation, it follows that the expected number of bad vertices is at most $\eps\cdot \alpha(G)$, as claimed. Additionally, note that for $v\in V_{> \Delta}\cap \mathcal{C}^*$, by Hoeffding's inequality $\Pr[X_v=0]\leq \exp(-2\eps^2 d_v)\leq \eps^{6}$. Now during the clean-up phase, whenever we remove two nodes from $S$, one of them must have been bad. Indeed, two such nodes were included by majority vote into $S$ but only one of them could have been in $\mathcal{C}^*$ since they are connected by an edge. Thus 
\begin{align}\label{eq:C1}
\E[|\mathcal{C}_2|]\geq (1-\eps^6)|\mathcal{C}^*\cap V_{> \Delta}|-2\eps \cdot \alpha(G).
\end{align}
Moreover, by~\cref{thm:mis-bounded}
\begin{align}
\E[|\mathcal{C}_1|]=&\Omega\left(\frac{\log \Delta}{\Delta \log \log \Delta}\alpha(G_1)\right) \nonumber \\
=&\Omega\left(\frac{\log \Delta}{\Delta \log \log \Delta}\cdot |\mathcal{C}^*\cap V_{\le \Delta}|\right).\label{eq:C2}
\end{align}
Now 
\[
\E[\mathcal{|C|}]=\E[\max(|\mathcal{C}_1|,|\mathcal{C}_2|)]\geq \max(\E|[\mathcal{C}_1|],\E[|\mathcal{C}_2|]).
\]
If $|\mathcal{C}^*\cap V_{\le \Delta}|\geq |\mathcal{C}^*|/2=\alpha(G)/2$, then~\eqref{eq:C2} gives that 
\[
\E[\mathcal{|C|}]= \Omega\left(\frac{\log \Delta}{\Delta \log \log \Delta}\cdot \alpha(G)\right).
\]
If not, then $|\mathcal{C}^*\cap V_{> \Delta}|\geq \alpha(G)/2$ and thus by~\eqref{eq:C1},
\[
\E[\mathcal{|C|}]\geq \alpha(G)\cdot \left(\frac{1}{2}-\eps-\frac{\eps^6}{2} \right)\geq \frac{\alpha(G)}{5}. 
\]
We finish by plugging in the value of $\Delta$ in the first bound.
\end{proof}

\section{Experiments}\label{sec:experiments}

\begin{figure}
    \centering
    \includegraphics[width=\linewidth]{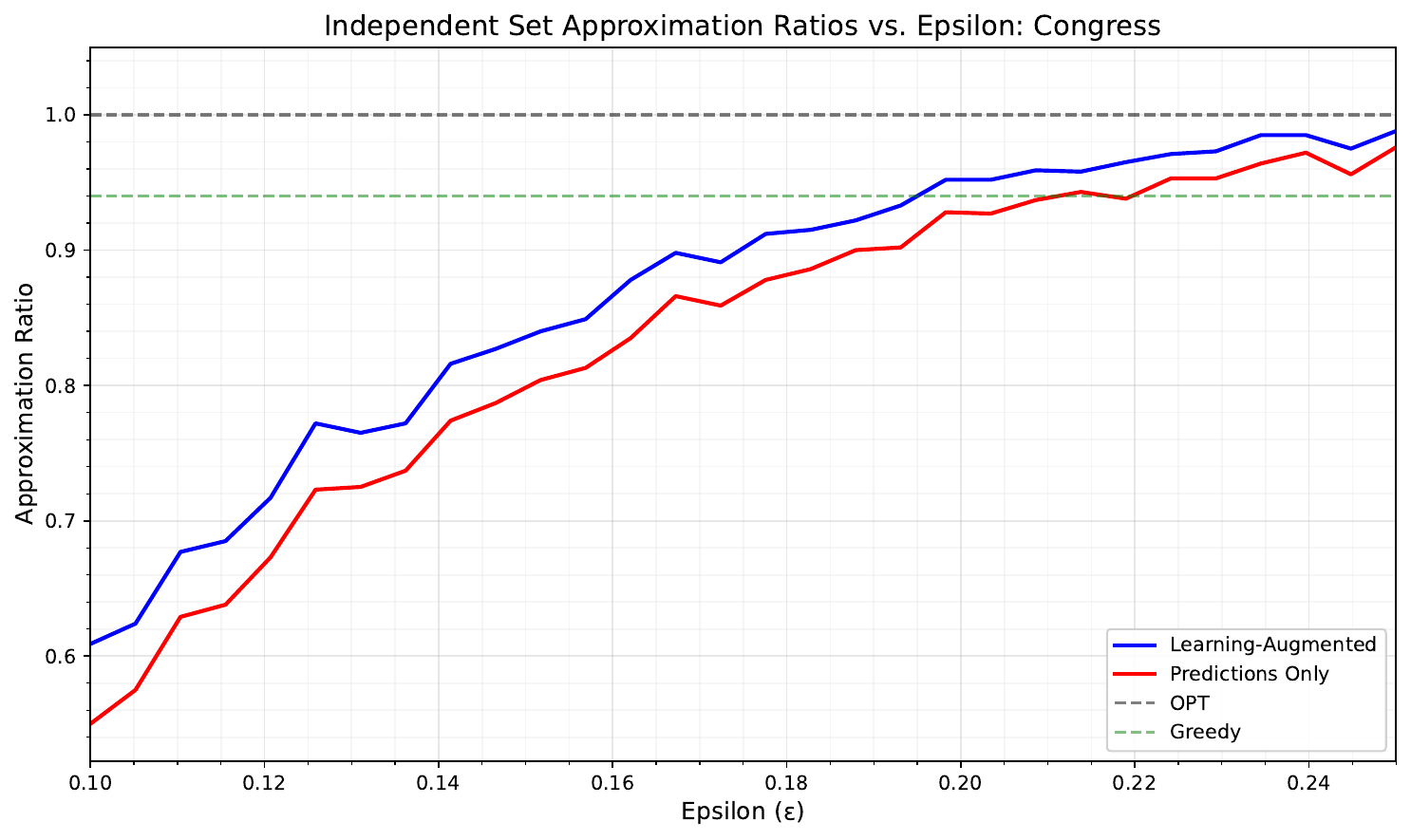}\\
    \includegraphics[width=\linewidth]{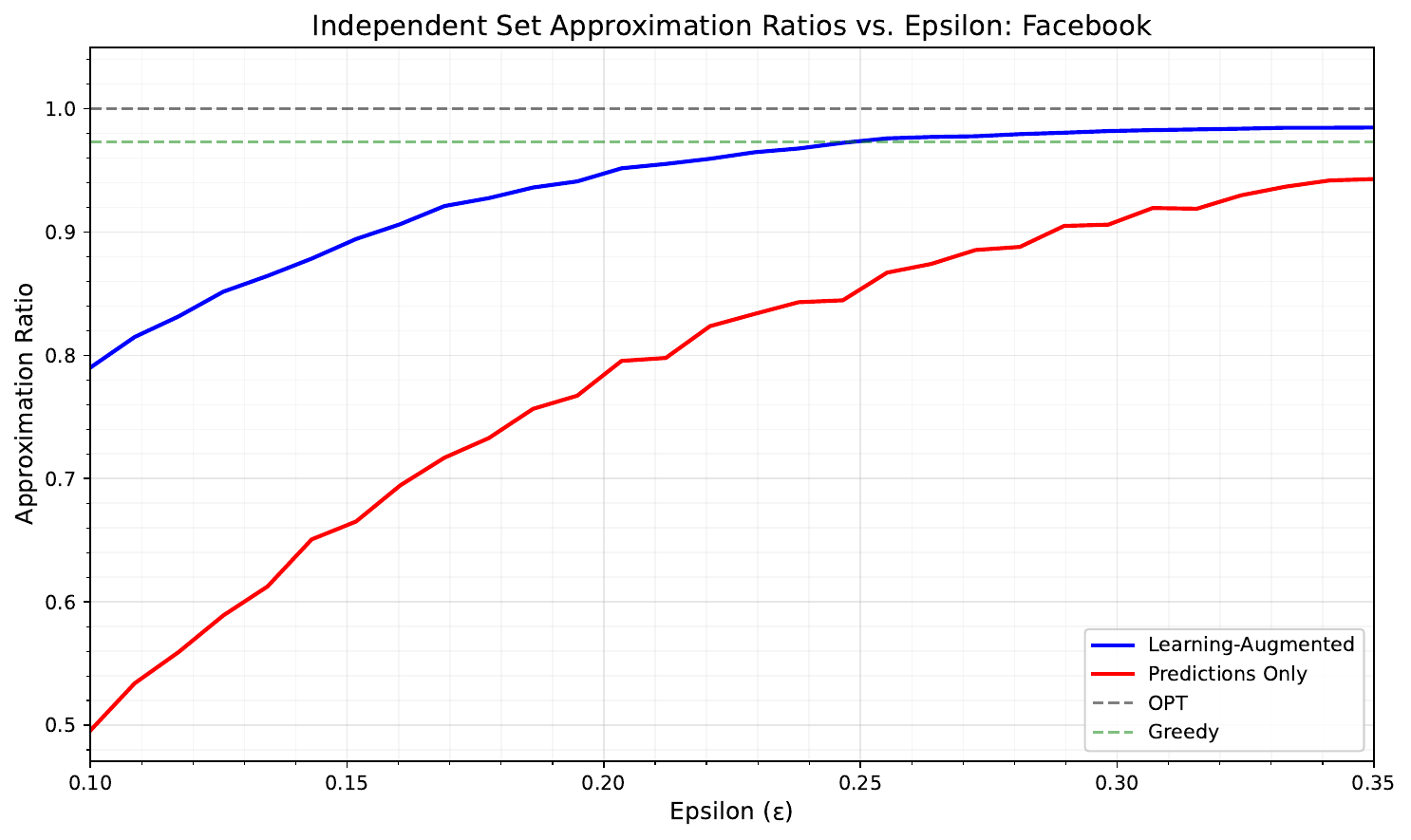}\\
    \includegraphics[width=\linewidth]{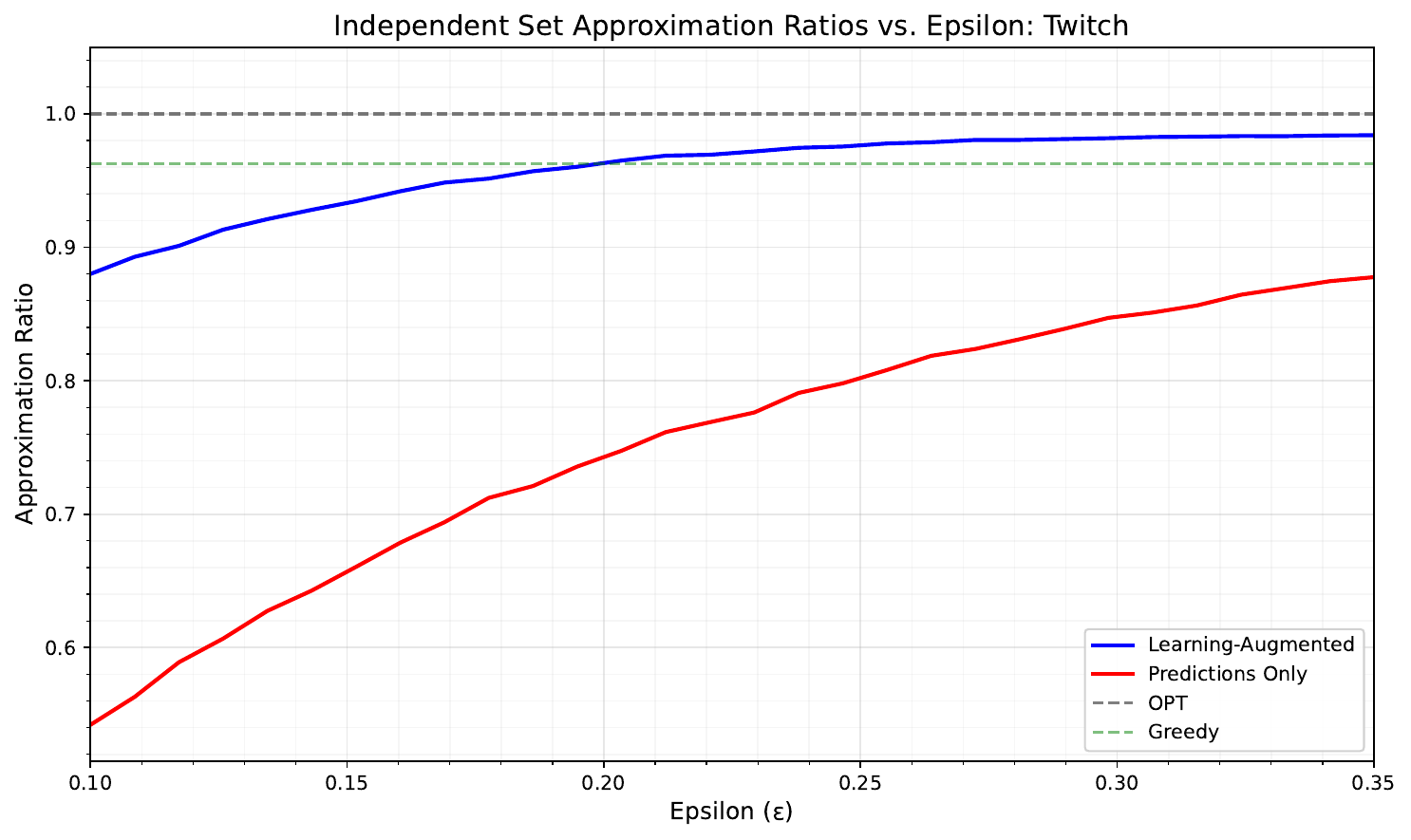}
    \caption{Comparison of learning-augmented frequency estimation algorithms. Top: congress, Middle: facebook, Bottom: twitch. The plots compare \cref{alg:learned_mis} with the optimal solution, the standard greedy approximation of MIS, and a degree-agnostic ``predictions-only'' heuristic. The algorithms are averaged across 10 trials.}
    \vspace{-1em}
    \label{fig:plots}
\end{figure}

We complement our theoretical results with an empirical evaluation for Maximum Independent Set (MIS, \cref{sec:max-indep-set}). The two core component to implement the algorithms in our learning-augmented setting are predictions and an approximation algorithm. In order to test our algorithms, we consider moderately-sized graphs on which we can solve MIS using a commercial integer program solver. We generate edge predictions according to the $\eps$-accurate model based on these ground truth labels for varying $\eps$. Then, we utilize our high/low degree MIS algorithm using the greedy MIS approximation algorithm which iteratively picks the lowest degree node which is still available.

\paragraph{Hardware} We perform all experiments on a physical workstation with an AMD Ryzen 5900X CPU and 32 GB of RAM. Deriving optimal solutions for a MIS instance requires solving an integer linear program, which is NP-hard. In practice, solvers exist that can exploit certain structures and leverage hardware to overcome the worst-case barrier for this problem. We use the CPLEX 22.1 solver~\cite{cplex2025v22} under an academic license.

\paragraph{Datasets} We use three graphs of varying sizes representing real social networks: \emph{facebook} \cite{leskovec2012learning}, \emph{congress} \cite{fink2023centrality}, and \emph{twitch} \cite{rozemberczki2021twitch}. The facebook graph has nodes representing real anonymized profiles from Facebook, with nodes representing connections between friends. It has 4039 nodes and 88234 edges; the graph has an average degree of 43.69, a median degree of 25, and a maximum degree of 1045. The congress graph represents the interactions between members of the 117th Congress on Twitter: nodes correspond to the accounts of specific members of the 117th Congress, and edges correspond to mentions of other members of Congress from a given account. This graph has 475 nodes and 13289 edges, with an maximum degree of 214, average degree of 43.04, and a median degree of 38. The twitch graph represents Twitch users and their shared subscriptions: nodes represent the users, and an edge between users represent whether or not the users follow a common streamer on the platform. The resulting graph is strongly connected and has 168,114 nodes and 6,797,557 edges. For the purposes of our experiment, we prune the original graph down to 50,000 nodes and around 1.1 million edges in order to make the integer linear program for finding the optimal solution feasible.

\paragraph{Algorithms and Baselines} Alongside the implementation of \cref{alg:learned_mis}, we also include 3 baselines: the optimal solution for MIS, given by an integer linear program, a ``prediction-only'' heuristic, which employs the use of the edge-prediction model on the entire graph agnostic to degree, and a standard greedy algorithm approximation courtesy of \citeauthor{Halldorsson98mis}.

\paragraph{Setup} For the facebook and twitch graphs, we fix the degree threshold for \cref{alg:learned_mis} to be 10, and vary $\eps$ from 0.10 to 0.35. For the congress graph, we fix the degree threshold, we fix the degree threshold to 15, and vary epsilon from 0.10 to 0.25. At the upper end of these ranges of $\eps$ both our algorithm and the predictions-only algorithm come close to approximation ratios of $1$. Due to the inherent randomness in making the edge predictions, we repeat each algorithm $k=10$ times for each $\eps$ and report the mean ratio.

\paragraph{Results} Across varying $\eps$, our algorithm outperforms the predictions-only baseline. This validates our theoretical findings where we leverage the fact that predictions are highly accurate for high degree vertices but very noisy for low degree vertices. Compared to the baseline approximation algorithm, as $\eps$ (which corresponds to prediction quality) increases, we see the learning-based algorithms eventually outperforming the baseline approximation algorithm. For both datasets, we show that for an appropriate $\eps$, our learning-augmented algorithm achieves the best performance with our algorithm on both graphs.

\clearpage
 \section*{Acknowledgements}
A.~Aamand was supported by the VILLUM Foundation grant 54451. 
J.~Chen was supported by an NSF Graduate Research Fellowship under Grant No. 17453. 
S.~Gollapudi is supported in part by the NSF (CSGrad4US award no. 2313998).

\section*{Impact Statement}
This paper presents work whose goal is to advance the field of 
Machine Learning. There are many potential societal consequences 
of our work, none which we feel must be specifically highlighted here. Our work is of theoretical nature.

\bibliography{main.bib}

\begin{thebibliography}{30}
\providecommand{\natexlab}[1]{#1}
\providecommand{\url}[1]{\texttt{#1}}
\expandafter\ifx\csname urlstyle\endcsname\relax
  \providecommand{\doi}[1]{doi: #1}\else
  \providecommand{\doi}{doi: \begingroup \urlstyle{rm}\Url}\fi

\bibitem[Antoniadis et~al.(2024)Antoniadis, Eli{\'a}{\v{s}}, Polak, and Venzin]{antoniadis2024approximation}
Antoniadis, A., Eli{\'a}{\v{s}}, M., Polak, A., and Venzin, M.
\newblock Approximation algorithms for combinatorial optimization with predictions.
\newblock \emph{arXiv preprint arXiv:2411.16600}, 2024.

\bibitem[Bampis et~al.(2024)Bampis, Escoffier, and Xefteris]{BampisEX24}
Bampis, E., Escoffier, B., and Xefteris, M.
\newblock Parsimonious learning-augmented approximations for dense instances of np-hard problems.
\newblock In \emph{Forty-first International Conference on Machine Learning, {ICML} 2024, Vienna, Austria, July 21-27, 2024}. OpenReview.net, 2024.
\newblock URL \url{https://openreview.net/forum?id=AD5QC1BTJL}.

\bibitem[Braverman et~al.(2024)Braverman, Dharangutte, Shah, and Wang]{BravermanDSW24}
Braverman, V., Dharangutte, P., Shah, V., and Wang, C.
\newblock Learning-augmented maximum independent set.
\newblock In Kumar, A. and Ron{-}Zewi, N. (eds.), \emph{Approximation, Randomization, and Combinatorial Optimization. Algorithms and Techniques, {APPROX/RANDOM} 2024, August 28-30, 2024, London School of Economics, London, {UK}}, volume 317 of \emph{LIPIcs}, pp.\  24:1--24:18. Schloss Dagstuhl - Leibniz-Zentrum f{\"{u}}r Informatik, 2024.
\newblock \doi{10.4230/LIPICS.APPROX/RANDOM.2024.24}.
\newblock URL \url{https://doi.org/10.4230/LIPIcs.APPROX/RANDOM.2024.24}.

\bibitem[Chen et~al.(2022{\natexlab{a}})Chen, Silwal, Vakilian, and Zhang]{chen2022faster}
Chen, J., Silwal, S., Vakilian, A., and Zhang, F.
\newblock Faster fundamental graph algorithms via learned predictions.
\newblock In \emph{International Conference on Machine Learning}, pp.\  3583--3602. PMLR, 2022{\natexlab{a}}.

\bibitem[Chen et~al.(2022{\natexlab{b}})Chen, Eden, Indyk, Lin, Narayanan, Rubinfeld, Silwal, Wagner, Woodruff, and Zhang]{chentriangle}
Chen, J.~Y., Eden, T., Indyk, P., Lin, H., Narayanan, S., Rubinfeld, R., Silwal, S., Wagner, T., Woodruff, D., and Zhang, M.
\newblock Triangle and four cycle counting with predictions in graph streams.
\newblock In \emph{International Conference on Learning Representations}, 2022{\natexlab{b}}.
\newblock URL \url{https://openreview.net/forum?id=8in_5gN9I0}.

\bibitem[Chv{\'{a}}tal(1979)]{Chvatal79}
Chv{\'{a}}tal, V.
\newblock A greedy heuristic for the set-covering problem.
\newblock \emph{Math. Oper. Res.}, 4\penalty0 (3):\penalty0 233--235, 1979.
\newblock \doi{10.1287/MOOR.4.3.233}.
\newblock URL \url{https://doi.org/10.1287/moor.4.3.233}.

\bibitem[Cohen-Addad et~al.(2024)Cohen-Addad, d'Orsi, Gupta, Lee, and Panigrahi]{cohen2024learning}
Cohen-Addad, V., d'Orsi, T., Gupta, A., Lee, E., and Panigrahi, D.
\newblock Learning-augmented approximation algorithms for maximum cut and related problems.
\newblock In \emph{The Thirty-eighth Annual Conference on Neural Information Processing Systems}, 2024.

\bibitem[CPLEX(2025)]{cplex2025v22}
CPLEX, I.~I.
\newblock V22.1: User’s manual for cplex.
\newblock \emph{International Business Machines Corporation}, 46\penalty0 (53):\penalty0 157, 2025.

\bibitem[Dinur \& Steurer(2014)Dinur and Steurer]{DinurS14}
Dinur, I. and Steurer, D.
\newblock Analytical approach to parallel repetition.
\newblock In Shmoys, D.~B. (ed.), \emph{Symposium on Theory of Computing, {STOC} 2014, New York, NY, USA, May 31 - June 03, 2014}, pp.\  624--633. {ACM}, 2014.
\newblock \doi{10.1145/2591796.2591884}.
\newblock URL \url{https://doi.org/10.1145/2591796.2591884}.

\bibitem[Ergun et~al.()Ergun, Feng, Silwal, Woodruff, and Zhou]{ergunlearning}
Ergun, J.~C., Feng, Z., Silwal, S., Woodruff, D., and Zhou, S.
\newblock Learning-augmented $ k $-means clustering.
\newblock In \emph{International Conference on Learning Representations}.

\bibitem[Fink et~al.(2023)Fink, Fullin, Gutierrez, Omodt, Zinnecker, Sprint, and McCulloch]{fink2023centrality}
Fink, C.~G., Fullin, K., Gutierrez, G., Omodt, N., Zinnecker, S., Sprint, G., and McCulloch, S.
\newblock A centrality measure for quantifying spread on weighted, directed networks.
\newblock \emph{Physica A}, 2023.

\bibitem[Gamlath et~al.(2022)Gamlath, Lattanzi, Norouzi-Fard, and Svensson]{gamlath2022approximate}
Gamlath, B., Lattanzi, S., Norouzi-Fard, A., and Svensson, O.
\newblock Approximate cluster recovery from noisy labels.
\newblock In \emph{Conference on Learning Theory}, pp.\  1463--1509. PMLR, 2022.

\bibitem[Ghoshal et~al.(2025)Ghoshal, Markarychev, and Markarychev]{ghoshal2025constraint}
Ghoshal, S., Markarychev, K., and Markarychev, Y.
\newblock Constraint satisfaction problems with advice.
\newblock In \emph{Proceedings of the 2025 Annual ACM-SIAM Symposium on Discrete Algorithms (SODA)}, pp.\  1202--1221. SIAM, 2025.

\bibitem[Goemans \& Williamson(1995)Goemans and Williamson]{GoemansW95}
Goemans, M.~X. and Williamson, D.~P.
\newblock Improved approximation algorithms for maximum cut and satisfiability problems using semidefinite programming.
\newblock \emph{J. {ACM}}, 42\penalty0 (6):\penalty0 1115--1145, 1995.
\newblock \doi{10.1145/227683.227684}.
\newblock URL \url{https://doi.org/10.1145/227683.227684}.

\bibitem[Halld{\'{o}}rsson(1998)]{Halldorsson98mis}
Halld{\'{o}}rsson, M.~M.
\newblock Approximations of independent sets in graphs.
\newblock In Jansen, K. and Hochbaum, D.~S. (eds.), \emph{Approximation Algorithms for Combinatorial Optimization, International Workshop APPROX'98, Aalborg, Denmark, July 18-19, 1998, Proceedings}, volume 1444 of \emph{Lecture Notes in Computer Science}, pp.\  1--13. Springer, 1998.
\newblock \doi{10.1007/BFB0053959}.
\newblock URL \url{https://doi.org/10.1007/BFb0053959}.

\bibitem[Halperin(2002)]{Halperin02}
Halperin, E.
\newblock Improved approximation algorithms for the vertex cover problem in graphs and hypergraphs.
\newblock \emph{{SIAM} J. Comput.}, 31\penalty0 (5):\penalty0 1608--1623, 2002.
\newblock \doi{10.1137/S0097539700381097}.
\newblock URL \url{https://doi.org/10.1137/S0097539700381097}.

\bibitem[H{\aa}stad(1996{\natexlab{a}})]{Haastad99mis}
H{\aa}stad, J.
\newblock Clique is hard to approximate within n\({}^{\mbox{1-epsilon}}\).
\newblock In \emph{37th Annual Symposium on Foundations of Computer Science, {FOCS} '96, Burlington, Vermont, USA, 14-16 October, 1996}, pp.\  627--636. {IEEE} Computer Society, 1996{\natexlab{a}}.
\newblock \doi{10.1109/SFCS.1996.548522}.
\newblock URL \url{https://doi.org/10.1109/SFCS.1996.548522}.

\bibitem[H{\aa}stad(1996{\natexlab{b}})]{Hastad96}
H{\aa}stad, J.
\newblock Clique is hard to approximate within n\({}^{\mbox{1-epsilon}}\).
\newblock In \emph{37th Annual Symposium on Foundations of Computer Science, {FOCS} '96, Burlington, Vermont, USA, 14-16 October, 1996}, pp.\  627--636. {IEEE} Computer Society, 1996{\natexlab{b}}.
\newblock \doi{10.1109/SFCS.1996.548522}.
\newblock URL \url{https://doi.org/10.1109/SFCS.1996.548522}.

\bibitem[Hsu et~al.(2019)Hsu, Indyk, Katabi, and Vakilian]{hsu2019learning}
Hsu, C.-Y., Indyk, P., Katabi, D., and Vakilian, A.
\newblock Learning-based frequency estimation algorithms.
\newblock In \emph{International Conference on Learning Representations}, 2019.

\bibitem[Jiang et~al.(2020)Jiang, Li, Lin, Ruan, and Woodruff]{jiang2020learning}
Jiang, T., Li, Y., Lin, H., Ruan, Y., and Woodruff, D.~P.
\newblock Learning-augmented data stream algorithms.
\newblock \emph{ICLR}, 2020.

\bibitem[Johnson(1974)]{Johnson74a}
Johnson, D.~S.
\newblock Approximation algorithms for combinatorial problems.
\newblock \emph{J. Comput. Syst. Sci.}, 9\penalty0 (3):\penalty0 256--278, 1974.
\newblock \doi{10.1016/S0022-0000(74)80044-9}.
\newblock URL \url{https://doi.org/10.1016/S0022-0000(74)80044-9}.

\bibitem[Khot \& Regev(2003)Khot and Regev]{KhotR03}
Khot, S. and Regev, O.
\newblock Vertex cover might be hard to approximate to within 2-{\textbackslash}varepsilon.
\newblock In \emph{18th Annual {IEEE} Conference on Computational Complexity (Complexity 2003), 7-10 July 2003, Aarhus, Denmark}, pp.\  379. {IEEE} Computer Society, 2003.
\newblock \doi{10.1109/CCC.2003.1214437}.
\newblock URL \url{https://doi.org/10.1109/CCC.2003.1214437}.

\bibitem[Khot \& Regev(2008)Khot and Regev]{khot2008vertex}
Khot, S. and Regev, O.
\newblock Vertex cover might be hard to approximate to within 2- $\varepsilon$.
\newblock \emph{Journal of Computer and System Sciences}, 74\penalty0 (3):\penalty0 335--349, 2008.

\bibitem[Leskovec \& Mcauley(2012)Leskovec and Mcauley]{leskovec2012learning}
Leskovec, J. and Mcauley, J.
\newblock Learning to discover social circles in ego networks.
\newblock \emph{Advances in neural information processing systems}, 25, 2012.

\bibitem[Lov{\'{a}}sz(1975)]{Lovasz75}
Lov{\'{a}}sz, L.
\newblock On the ratio of optimal integral and fractional covers.
\newblock \emph{Discret. Math.}, 13\penalty0 (4):\penalty0 383--390, 1975.
\newblock \doi{10.1016/0012-365X(75)90058-8}.
\newblock URL \url{https://doi.org/10.1016/0012-365X(75)90058-8}.

\bibitem[Lykouris \& Vassilvitskii(2021)Lykouris and Vassilvitskii]{lykouris2021competitive}
Lykouris, T. and Vassilvitskii, S.
\newblock Competitive caching with machine learned advice.
\newblock \emph{Journal of the ACM (JACM)}, 68\penalty0 (4):\penalty0 1--25, 2021.

\bibitem[Mitzenmacher \& Vassilvitskii(2022)Mitzenmacher and Vassilvitskii]{mitzenmacher2022algorithms}
Mitzenmacher, M. and Vassilvitskii, S.
\newblock Algorithms with predictions.
\newblock \emph{Communications of the ACM}, 65\penalty0 (7):\penalty0 33--35, 2022.

\bibitem[Rozemberczki \& Sarkar(2021)Rozemberczki and Sarkar]{rozemberczki2021twitch}
Rozemberczki, B. and Sarkar, R.
\newblock Twitch gamers: a dataset for evaluating proximity preserving and structural role-based node embeddings, 2021.

\bibitem[Trevisan(2001)]{Trevisan01}
Trevisan, L.
\newblock Non-approximability results for optimization problems on bounded degree instances.
\newblock In Vitter, J.~S., Spirakis, P.~G., and Yannakakis, M. (eds.), \emph{Proceedings on 33rd Annual {ACM} Symposium on Theory of Computing, July 6-8, 2001, Heraklion, Crete, Greece}, pp.\  453--461. {ACM}, 2001.
\newblock \doi{10.1145/380752.380839}.
\newblock URL \url{https://doi.org/10.1145/380752.380839}.

\bibitem[Williamson \& Shmoys(2011)Williamson and Shmoys]{WilliamsonShmoys11}
Williamson, D.~P. and Shmoys, D.~B.
\newblock \emph{The Design of Approximation Algorithms}.
\newblock Cambridge University Press, 2011.
\newblock ISBN 978-0-521-19527-0.
\newblock URL \url{http://www.cambridge.org/de/knowledge/isbn/item5759340/?site\_locale=de\_DE}.

\end{thebibliography}
\bibliographystyle{icml2025}

\newpage
\appendix
\onecolumn

\section{Weighted Vertex Cover}
\label{sec:weighted-vertex-cover}

In this section, we present an algorithm for the weighted vertex cover problem, based on predictions.
The main result is as follows. 

\begin{theorem}\label{thm:weighted_vc}
    The set $S$ outputted by Algorithm \ref{alg:learned_vc_weighted} is a valid vertex cover for the input graph $G$. Furthermore, 
    \[\E[w(S)] \le \left(2- \Omega\left(\frac{\log \log(1/\eps) }{\log(1/\eps)}\right) \right)  \cdot \textup{OPT}.\]
\end{theorem}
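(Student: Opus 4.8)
The plan is to transfer the proof of \cref{thm:vc-main} to the weighted setting essentially line for line, so I expect Algorithm~\ref{alg:learned_vc_weighted} to keep the same degree threshold $\Delta=\Theta(\eps^{-2}\log(1/\eps))$ and the same classification of edges into heavy-heavy, heavy-light, and light-light; to form a prediction-based set $S_0$ from the heavy vertices using the majority bits $m_v$ and the same consistency/skip rule; to let $S_1$ be a weighted $2$-approximate vertex cover (e.g.\ by LP rounding, \citep{WilliamsonShmoys11}) of the heavy-incident edges not yet covered by $S_0$; and to let $S_2$ be a $\PAREN{2-2\log\log\Delta/\log\Delta}$-approximate weighted vertex cover of the remaining light-light graph via the bounded-degree algorithm of \citet{Halperin02}. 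Validity is then immediate, exactly as for Algorithm~\ref{alg:learned_vc}, since $S_2$ covers every edge not covered by $S_0\cup S_1$. Writing $\optVC$ for a fixed minimum-weight cover (so $\textup{OPT}=w(\optVC)$) and $\delta:=2\log\log\Delta/\log\Delta$, I would establish the weighted analogues of \cref{lemma:False-Positive-Prediction,lemma:VC-Heavy-Fix,lemma:VC-Light}: (i)~$\E[w(S_0\setminus\optVC)]\le\eps^{10}\,\textup{OPT}$; (ii)~$\E[w(S_1)]\le\eps^{10}\,\textup{OPT}$; and (iii)~$\E[w(S_2)]\le(2-\delta)\,\E[w(\optVcLight\setminus(S_0\cup S_1))]$. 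Combining them as in the sketch of \cref{thm:vc-main}: $S_0\cap\optVC$ and $\optVcLight\setminus(S_0\cup S_1)$ are disjoint subsets of $\optVC$, so $\E[w(S_0\cap\optVC)]+\E[w(S_2)]\le\E[w(S_0\cap\optVC)]+(2-\delta)\E[w(\optVcLight\setminus(S_0\cup S_1))]\le(2-\delta)\,\textup{OPT}$, and adding (i) and (ii) gives $\E[w(S)]\le(2-\delta+O(\eps^{10}))\,\textup{OPT}$. Since $\Delta=\Theta(\eps^{-2}\log(1/\eps))$ forces $\log\Delta=\Theta(\log(1/\eps))$ and $\log\log\Delta=\Theta(\log\log(1/\eps))$, we have $\delta=\Theta(\log\log(1/\eps)/\log(1/\eps))$, which dominates the $O(\eps^{10})$ slack and yields the claimed ratio.

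Parts (ii) and (iii) should transfer with only cosmetic changes. For (iii), after removing $S_0\cup S_1$ only light-light edges remain, $\optVcLight\setminus(S_0\cup S_1)$ is a valid cover of them, and $S_2$ approximates their optimal cover within $2-\delta$ by \citet{Halperin02}. For (ii), if heavy vertices are handled exactly as in Algorithm~\ref{alg:learned_vc}, then a heavy $v\in\optVC$ escapes $S_0$ only if $m_v=0$ or $v$ is skipped; minimality of $\optVC$ forces $v$ to have a neighbor $u\notin\optVC$, so a skip also needs $\deg(u)\ge\Delta$ and $m_u=1$; a Hoeffding bound over the (at worst $4$-wise independent) votes gives $\Pr[m_v=0],\Pr[m_u=1]\le\exp(-2\eps^2\Delta)$, which the constant in $\Delta$ makes $\le\tfrac12\eps^{10}$, so $\E[w(\optVcHeavy\setminus S_0)]=\sum_{v\in\optVcHeavy}w(v)\Pr[v\notin S_0]\le\eps^{10}\,\textup{OPT}$ and $\E[w(S_1)]\le 2\eps^{10}\,\textup{OPT}$ since $\optVcHeavy\cup S_0$ covers every heavy-incident edge; under the weight-aware modification discussed next, (ii) instead follows by exhibiting a cover of weight $\eps^{O(1)}\,\textup{OPT}$ of the heavy-incident edges left uncovered by $S_0$.

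The crux is part (i), the weighted false-positive bound, because the unweighted charging argument does not survive the introduction of weights. There, the cost of a false-positive heavy vertex $v\notin\optVC$ (one with $m_v=1$) is charged to a neighbor $u\in\optVC$---legitimate since $N(v)\subseteq\optVC$---and each $u$ absorbs at most $\deg(u)$ unit charges, each of probability $o(1/\deg(u))$. With weights this fails: a heavy non-OPT vertex may be arbitrarily heavier than every one of its OPT neighbors, so a charge of $w(v)$ has no comparably-sized $w(u)$ to land on (and, symmetrically, mistakenly adding $N(v)$ when $m_v=0$ for a heavy $v\in\optVC$ could inject weight $w(N(v))\gg w(v)$). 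I expect the resolution to make the heavy phase \emph{weight-aware}: commit $v$ to $S_0$ when $m_v=1$ only when $w(v)$ does not exceed $\eps^{-O(1)}\cdot w(N(v))$ (and, when $m_v=0$, include the cheaper of $v$ or $N(v)$), so that whatever spurious weight is charged on account of $v$ is at most $\min(w(v),w(N(v)))$ up to the $\eps^{-O(1)}$ factor; the residual weight can then be distributed over $N(v)\subseteq\optVC$ in proportion to the neighbors' weights, so each $u$'s share is a small fraction of $w(N(v))$, while OPT vertices of very large degree must be treated separately since they are themselves ``heavy'' and get resolved directly by the algorithm (and for the remaining, light, OPT vertices $u$ one has $\deg(u)<\Delta$, which tames the $\deg(u)\exp(-2\eps^2\Delta)$ factor that the charging produces). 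Getting this bookkeeping to close with only an $\eps^{O(1)}\,\textup{OPT}$ error, while simultaneously keeping $S_0\cup S_1$ a genuine cover of all heavy-incident edges and keeping $\E[w(S_1)]$ at $\eps^{O(1)}\,\textup{OPT}$, is where essentially all the work lies; the rest is a faithful transcription of the unweighted argument.
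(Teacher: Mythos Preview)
Your plan and diagnosis are on target: the paper's proof is organized exactly around the weighted analogues (i)--(iii) you describe, combined as you sketch, and the weight-aware heavy phase is the whole point of Algorithm~\ref{alg:learned_vc_weighted}. But there is one concrete mechanism you have not identified, and without it the charging in~(i) does not close. The comparison is not against $N(v)$ but against
\[
N^{-}(v):=\{u\in N(v):\deg(u)<\Delta\text{ or }m_u=0\}:
\]
when $m_v=1$ the algorithm adds $v$ if $w(v)<w(N^{-}(v))/\eps^{10}$ and otherwise adds $N^{-}(v)$ (with the analogous $\eps^{10}$-threshold when $m_v=0$). A false positive $v\notin\optVC$ added via the first branch is then charged across $N^{-}(v)\subseteq\optVC$, giving each $u\in N^{-}(v)$ at most $w(u)/\eps^{10}$. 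For a light $u\in\optVC$ this arrives at most $\deg(u)<\Delta$ times with probability $\le\exp(-2\eps^2\Delta)$. For a \emph{heavy} $u\in\optVC$, membership in $N^{-}(v)$ forces $m_u=0$, so the charge arrives with probability $\le\exp(-2\eps^2\deg(u))$, and $\deg(u)\exp(-2\eps^2\deg(u))$ stays bounded. Distributing over all of $N(v)$ instead, as you propose, fails here: a heavy OPT neighbour $u$ with $\deg(u)\gg\Delta$ could absorb $\Theta(\deg(u))$ charges each with probability $\exp(-2\eps^2\Delta)$, and $\deg(u)\exp(-2\eps^2\Delta)$ is unbounded. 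Your remark that such vertices ``get resolved directly'' is gesturing at the right fact (they have $m_u=1$ w.h.p.), but the device that makes this usable in the charging is precisely restricting attention to $N^{-}(v)$.

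This choice also introduces an extra case in~(ii) that your sketch does not cover: a heavy $v\in\optVC$ with $m_v=1$ may fail to enter $S_0$ because $w(v)\ge w(N^{-}(v))/\eps^{10}$ (the algorithm adds $N^{-}(v)$ instead). Minimality of $\optVC$ gives $w(v)\le w(N(v)\setminus\optVC)$, and each $u\in N(v)\setminus\optVC$ lies in $N^{-}(v)$ with probability $\ge 1-\eps^{200}$; a Chebyshev bound then shows $\Pr[w(N^{-}(v))<\eps^{10}w(v)]\le\eps^{100}$. That second-moment step is how the paper completes~(ii), and it is the one piece of ``bookkeeping'' beyond the unweighted argument that you would still need to supply.
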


\begin{algorithm}[h]
    \caption{Learned-WeightedVC}
    \label{alg:learned_vc_weighted}
\begin{algorithmic}[1]
    \State $S_0 \gets \varnothing$, $\Delta \gets 100\log(1/\eps)/\eps^2$ 
    \For{$v \in V$ with $\deg(v) \ge \Delta$}
        \State $m_v \gets \text{Majority}( \{b_v(e)\}_{e \in E, v \in e})$ 
        \Comment{$0$ if tie}
    \EndFor
    \For{$v$ with $\deg(v) \ge \Delta$}
        \State $N^{-}(v) \gets \set{ u \in N(v) : m_u = 0 \text{ or } \deg(u) < \Delta}$
        \IF{$m_v = 1$}
            \IF{$\forall u \in N(v)$: $\deg(u) \ge \Delta$ and $m_u = 1$}
                \STATE Skip $v$
            \ELSIF{$w(v) < w(N^{-}(v))/\eps^{10}$}
                \State $S_0 \gets S_0 \cup \set{ v }$
                \label{line:learned_vc_weighted:Yes-and-add-vertex}
            \ELSIF{$w(v) \ge w(N^{-}(v))/\eps^{10}$}
                \State  $S_0 \gets S_0 \cup N^{-}(v)$
                \label{line:learned_vc_weighted:Yes-and-add-neighbors}
            \ENDIF
        \ELSIF{$m_v = 0$}
            \If{$w(v) < \eps^{10} \cdot w(N^{-}(v))$}
                \State $S_0 \gets S_0 \cup \{v\}$
            \ElsIf{$w(v) \ge \eps^{10} \cdot w(N^{-}(v))$}
                \State $S_0 \gets S_0 \cup N^{-}(v)$
            \EndIf
        \ENDIF
    \EndFor
    \State $S_1 \gets$ $2$-approximate VC for heavy-heavy and heavy-light edges not covered by $S_0$
    \State $S_2 \gets $a $\PAREN{2- 2\,  \frac{\log \log \Delta}{\log \Delta}}$-approximate VC for edges not covered by $S_0 \cup S_1$
    \State $S \gets S_0 \, \cup S_1 \cup S_2$ 
    \State \textbf{Return} $S$ \Comment{Our VC approximation}
\end{algorithmic}
\end{algorithm}

Before we dive into the analysis, we briefly discuss the ways in which Algorithm \ref{alg:learned_vc_weighted} differs from Algorithm \ref{alg:learned_vc}. The main difference is that for unweighted vertex cover, it is fine if some vertices which are not in the optimal cover are voted (by our majority vote over bits) to be in the cover we output, as long as their total \emph{cardinality} is small. However for weighted vertex cover, this is no longer sufficient, because even a single mistake can harm us, since we may include vertices of extremely high weight. Thus, we need to perform appropriate `sanity checks' throughout our execution and check if we can ever make a `swap' (where we swap a vertex in our current cover with all of its uncovered neighbors). This ensures that we never add a vertex that has unreasonably high weight (formalized in the charging lemma below). One complication is that this may prevent us from adding vertices which are actually in the true optimal solution to our cover, if its (uncovered so far in the execution) neighborhood has smaller weight. However, we can also deal with this event by recognizing that this must mean one of its neighbors is misclassified, and use this event (which we show has appropriately low probability) in our charging scheme.

Recall that $\optVC$ is an optimal cover, and $\bar{\optVC} \doteq V \setminus \optVC$.
Further, define $\vertexHeavy \doteq \set{ v \in V : \deg(v) \ge \Delta}$, and $\vertexLight \doteq V \setminus \vertexHeavy$.
To prove \cref{thm:weighted_vc}, we establish the following lemmas.  

\begin{lemma}[False-Positive-Weight]
    \label{lemma:False-Positive-Weight}
    \begin{equation*}
        \E{ w\paren{S_0 \setminus \optVC} } \le \eps^{8} \cdot w \paren{\optVC}.
    \end{equation*}
\end{lemma}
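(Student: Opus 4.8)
The plan is to bound $\E{ w\paren{S_0 \setminus \optVC} }$ by a charging argument: attribute the weight of every false-positive vertex (one in $S_0$ but not in $\optVC$) to a nearby vertex of $\optVC$, via a case analysis on which line of the main loop of \cref{alg:learned_vc_weighted} added it. If $x \notin \optVC$ ends up in $S_0$, then either (i) $x$ is heavy and was added directly in its own iteration (so either $m_x = 1$ with $w(x) < w(N^-(x))/\eps^{10}$, or $m_x = 0$ with $w(x) < \eps^{10} w(N^-(x))$), or (ii) $x \in N^-(v)$ for some heavy vertex $v$ whose iteration added $N^-(v)$. Two structural facts drive everything: in case (i), since $x \notin \optVC$ is heavy, every edge at $x$ is covered by its other endpoint, so $N(x) \subseteq \optVC$ and hence $N^-(x) \subseteq \optVC$; and in case (ii), the edge $(x,v)$ forces $v \in \optVC$.

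For case (ii), charge the false-positive weight $w(N^-(v) \setminus \optVC) \le w(N^-(v))$ of $v$'s iteration to $v$ itself; since $v$ is processed exactly once, each vertex of $\optVC$ is charged at most once this way. The branch conditions give $w(N^-(v)) \le \eps^{10} w(v)$ in the ``$m_v = 1$, add-neighbors'' branch and $w(N^-(v)) \le w(v)/\eps^{10}$ in the ``$m_v = 0$, add-neighbors'' branch. The first, summed over $v \in \optVC$, is $O(\eps^{10})\, w(\optVC)$. For the second, a heavy $v \in \optVC$ with $m_v = 0$ is a misprediction, which by Hoeffding's inequality and the choice $\Delta = 100 \log(1/\eps)/\eps^2$ has probability at most $\exp(-2\eps^2 \Delta) = \eps^{200}$; so this term contributes at most $\eps^{200}\cdot w(v)/\eps^{10}$ per $v$ in expectation, i.e. $O(\eps^{190})\, w(\optVC)$ overall.

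For case (i), distribute $w(x)$ over $N^-(x) \subseteq \optVC$ proportionally to weights, so that $u \in N^-(x)$ receives $w(x)\, w(u)/w(N^-(x))$. The branch condition bounds this share by $\eps^{10} w(u)$ when $m_x = 0$, and by $w(u)/\eps^{10}$ when $m_x = 1$ --- but in the latter case $x \notin \optVC$ heavy with $m_x = 1$ is again a misprediction of probability $\le \eps^{200}$, which dominates the $\eps^{-10}$ blow-up. A vertex $u \in \optVC$ receives a share from $x$ only if $u \in N^-(x)$. If $u$ is light, the number of such neighbors $x$ is at most $\deg(u) < \Delta$, so its total received charge is at most $O(\eps^{10}\Delta)\, w(u) = O(\eps^8 \log(1/\eps))\, w(u)$; summing over light $u \in \optVC$ gives the dominant term, and $\Delta$ is calibrated so that $\eps^{10}\Delta = \tilde O(\eps^8)$, matching the lemma.

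I expect the delicate point to be case (i) with the share going to a \emph{heavy} $u \in \optVC$, where no degree bound on $u$ is available and one cannot afford $\deg(u)$ shares of size $\eps^{10} w(u)$. The natural fix exploits that $u \in N^-(x)$ forces $m_u = 0$ (as $u$ is heavy), and that the offending neighbor $x$ --- being heavy with $m_x = 0$ --- lies in $N^-(u)$; hence whenever $u$'s own iteration takes the add-neighbors branch, $x$ has already been accounted for under case (ii) and must not be re-charged. The only residual scenario is $u$ heavy, $m_u = 0$, with $u$'s iteration adding $u$ itself because $w(u) < \eps^{10} w(N^-(u))$, and there one combines the $\eps^{200}$-rarity of $m_u = 0$ with the $\eps^{10}$ threshold on $u$ to make the contribution negligible. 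Adding the bounds from all cases yields $\E{ w\paren{S_0 \setminus \optVC} } \le \eps^8 \cdot w\paren{\optVC}$. The argument parallels the unweighted \cref{lemma:False-Positive-Prediction}, but in the weighted setting the extra $\eps^{10}$ ``sanity-check'' thresholds --- built into \cref{alg:learned_vc_weighted} precisely for this purpose --- are what make the charging go through.
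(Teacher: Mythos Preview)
Your plan is essentially the paper's: the same case split (your (i) and (ii) each subsume two of the paper's four sub-lemmas), the same structural observations ($N^-(x)\subseteq\optVC$ when $x\notin\optVC$; the edge $(x,v)$ forces $v\in\optVC$ in case (ii)), the same proportional charging to $N^-(x)$, and the same dominant contribution $\eps^{10}\Delta=\tilde O(\eps^8)$ from light $u\in\optVC$ in case (i) with $m_x=0$.

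The one place you go astray is the heavy-$u$ sub-case of case (i). Your proposed detour --- declaring that when $u$'s iteration takes the add-neighbors branch the offending $x$ is ``already accounted for under case (ii)'', and then in the residual scenario combining ``the $\eps^{200}$-rarity of $m_u=0$ with the $\eps^{10}$ threshold on $u$'' --- does not close. The condition $w(u)<\eps^{10}w(N^-(u))$ bounds neither the number of neighbors $x$ that can charge $u$ (that number is $\deg(u)$, not $\Delta$) nor the individual share sizes; with only the crude bound $\Pr[m_u=0]\le\eps^{200}$ you obtain expected charge $\le\eps^{210}w(u)\deg(u)$, which is unbounded in $\deg(u)$. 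The paper's fix is simpler and needs no detour at all: retain the \emph{degree-dependent} bound $\Pr[m_u=0]\le\exp\bigl(-2\deg(u)\eps^2\bigr)$, so that the expected case-(i) charge to a heavy $u$ is at most $\deg(u)\exp\bigl(-2\deg(u)\eps^2\bigr)$ times the per-share cap. Since $d\mapsto d\exp(-2d\eps^2)$ is decreasing for $d\ge 1/(2\eps^2)$ and $\deg(u)\ge\Delta$, this product is at most $\Delta\,\eps^{200}$, which comfortably absorbs both the $m_x=0$ shares (each $\le\eps^{10}w(u)$) and the $m_x=1$ shares (each $\le w(u)/\eps^{10}$). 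With this correction your argument matches the paper's line for line.
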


\begin{lemma}[VC-Heavy-Fix-Weight]
    \label{lemma:VC-Heavy-Fix-Weight}
    Then 
    \begin{equation*}
        \E{ w \paren{S_1} } \le \eps^{200} \cdot w \paren{\optVC}.
    \end{equation*}
\end{lemma}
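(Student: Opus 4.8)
The plan follows the blueprint of the unweighted \cref{lemma:VC-Heavy-Fix}, but the weighted ``swap'' tests inside \cref{alg:learned_vc_weighted} force a more careful structural analysis of which edges $S_1$ must actually cover. First I would establish two structural facts about $S_0$. Note that a heavy vertex $v$ with $m_v=1$ never belongs to any set $N^-(\cdot)$ (membership in $N^-(w)$ requires $m_v=0$ or $\deg(v)<\Delta$), so such a $v$ enters $S_0$ only if it is added during its own iteration of the main loop. A short case check over the branches of \cref{alg:learned_vc_weighted} then shows: (i) \emph{every heavy-light edge is already covered by $S_0$} --- if $e=(u,v)$ with $v$ heavy and $u$ light, then $v$ is never skipped, and whichever of $\{v\}$ or $N^-(v)$ is added covers $e$ since $u\in N^-(v)$; and (ii) \emph{a heavy-heavy edge $e=(u,v)$ can remain uncovered only if $m_u=m_v=1$}, and then only if each endpoint is either skipped or fails its swap test (for $v$, failing means $w(v)\ge w(N^-(v))/\eps^{10}$). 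So the residual edges that $S_1$ covers are heavy-heavy edges with both endpoints voting $1$.

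Next I would exhibit a cheap fix set. Let $F\doteq\{v\in\optVcHeavy: m_v=1 \text{ and } v \text{ is an endpoint of some heavy-heavy edge left uncovered by } S_0\}$, where $\optVcHeavy$ denotes the heavy vertices of $\optVC$. Every vertex cover contains an endpoint of each such uncovered edge, and by (ii) that endpoint is heavy with $m=1$, hence lies in $F$; thus $F$ is a valid vertex cover of all heavy-heavy and heavy-light edges not covered by $S_0$. Since $S_1$ is a $2$-approximate vertex cover of exactly those edges, $w(S_1)\le 2\,w(F)$, and it remains to bound $\E{w(F)}=\sum_{v\in\optVcHeavy}w(v)\,\Pr[v\in F]$.

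The crux is to bound $\Pr[v\in F]$ by $O(\eps^{200})$ for each fixed $v\in\optVcHeavy$. Membership in $F$ forces $v\notin S_0$ while $m_v=1$, i.e.\ either $v$ is skipped or $w(N^-(v))\le \eps^{10}w(v)$, and I bound each. By optimality (minimality) of $\optVC$, $v$ has a neighbor $u_0\notin\optVC$; if $u_0$ is light then $v$ is never skipped, while if $u_0$ is heavy then ``$v$ skipped'' implies $m_{u_0}=1$, which by Hoeffding's inequality (the $\deg(u_0)\ge\Delta$ bits defining $m_{u_0}$ are i.i.d.\ with mean $1/2-\eps$) has probability at most $\exp(-2\eps^2\Delta)=\eps^{200}$. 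For the swap event, optimality gives the key weight inequality $w(v)\le w\paren{N(v)\setminus\optVC}$, because $(\optVC\setminus\{v\})\cup(N(v)\setminus\optVC)$ is itself a valid vertex cover and $\optVC$ has minimum weight. Now every light vertex of $N(v)\setminus\optVC$ lies in $N^-(v)$ deterministically, and a heavy $u'\in N(v)\setminus\optVC$ fails to lie in $N^-(v)$ only when $m_{u'}=1$, an event of probability at most $\eps^{200}$; hence $\E{w\paren{(N(v)\setminus\optVC)\setminus N^-(v)}}\le \eps^{200}\,w\paren{N(v)\setminus\optVC}$, and Markov's inequality gives $\Pr[w(N^-(v))\le\eps^{10}w(v)]\le \Pr\big[w(N^-(v)\cap(N(v)\setminus\optVC))\le\eps^{10}w(N(v)\setminus\optVC)\big]=O(\eps^{200})$. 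Summing over $v\in\optVcHeavy$ yields $\E{w(F)}=O(\eps^{200})\,w(\optVcHeavy)\le O(\eps^{200})\,w(\optVC)$, hence $\E{w(S_1)}\le 2\,\E{w(F)}=O(\eps^{200})\,w(\optVC)$; the constant factor here is immaterial and can be absorbed into the stated polynomial bound (e.g.\ by enlarging the constant in $\Delta$). I expect the main obstacle to be precisely this swap-event bound: it is the genuinely new ingredient relative to the unweighted case, and it hinges on correctly combining the optimality inequality $w(v)\le w(N(v)\setminus\optVC)$ with the majority-vote concentration, on top of first nailing the structural claim that the only residual edges are heavy-heavy edges with both endpoints voting $1$.
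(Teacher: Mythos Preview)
Your proposal is correct and follows essentially the same approach as the paper: exhibit a small fix set inside $\optVcHeavy\setminus S_0$ that covers all residual heavy edges, invoke the $2$-approximation guarantee for $S_1$, and bound the probability that a given $v\in\optVcHeavy$ lands in the fix set using (a) the neighbor $u_0\notin\optVC$ for the ``skip'' branch and (b) the optimality inequality $w(v)\le w(N(v)\setminus\optVC)$ together with majority-vote concentration for the ``swap'' branch. The only notable differences are cosmetic: the paper takes the larger fix set $U_5=\optVcHeavy\setminus S_0$ (so it must also dispatch the $m_v=0$ case, trivially via $\Pr[m_v=0]\le\eps^{200}$), whereas your structural claims (i)--(ii) let you restrict to $F\subseteq U_5$ with $m_v=1$; and for the swap event the paper uses a Chebyshev/variance bound while your first-moment Markov argument is a bit cleaner and gives the same $O(\eps^{200})$.
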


\begin{lemma}[VC-Light-Weight]
    \label{lemma:VC-Light-Weight}
    Let $\optVcLight \doteq \optVC \setminus \optVcHeavy$. 
    Then
    \begin{equation*}
        \E{ w \paren{S_2} } 
            \le \E{
                w \PAREN{\optVcLight \setminus \paren{S_0 \cup S_1} } \cdot \PAREN{ 2 - 2 \, \frac{\log \log \Delta}{\log \Delta } }  
            }. 
    \end{equation*}
\end{lemma}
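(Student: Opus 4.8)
The plan is to mirror the argument behind the unweighted Lemma~\ref{lemma:VC-Light}, replacing cardinalities by weights throughout; the weight-dependent ``swap'' rules used to build $S_0$ play no role here. First I would establish two purely structural facts. (i) Every edge left uncovered by $S_0 \cup S_1$ is a light-light edge: this is immediate from the construction of $S_1$, which by definition is chosen so that $S_0 \cup S_1$ covers every heavy-heavy and heavy-light edge. (ii) $\optVcLight$ is a valid vertex cover of the set of light-light edges: if $e=(u,v)$ is light-light then $\deg(u),\deg(v)<\Delta$, and since $\optVC$ covers $e$ at least one endpoint lies in $\optVC$; that endpoint has degree $<\Delta$, hence lies in $\vertexLight$ and therefore in $\optVcLight = \optVC \cap \vertexLight$. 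Along the way I would also record that, because every remaining edge is light-light, the residual graph on which stage three operates has maximum degree $<\Delta$.

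Combining (i) and (ii), I would argue that $\optVcLight \setminus (S_0 \cup S_1)$ is a feasible vertex cover for the residual graph: any residual edge is light-light by (i), some endpoint of it lies in $\optVcLight$ by (ii), and if that endpoint already belongs to $S_0\cup S_1$ then the edge is not residual — so some endpoint lies in $\optVcLight \setminus (S_0\cup S_1)$. Hence the optimum of the minimum-weight vertex cover instance that stage three solves is at most $w\PAREN{\optVcLight \setminus (S_0\cup S_1)}$.

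Finally I would invoke the guarantee of the Halperin algorithm~\citep{Halperin02}: on a graph of maximum degree at most $\Delta$ it returns a cover of weight at most $\PAREN{2 - 2\frac{\log\log\Delta}{\log\Delta}}$ times optimal. Applying it to the residual graph and using the feasibility bound from the previous step gives, for every realization of the prediction (and any internal algorithmic) randomness,
\[
w(S_2) \le \PAREN{2 - 2\,\frac{\log\log\Delta}{\log\Delta}}\cdot w\PAREN{\optVcLight \setminus (S_0\cup S_1)}.
\]
Taking expectations over the randomness that determines $S_0,S_1,S_2$ then yields the stated inequality. I expect no real obstacle: the only point requiring care is the bookkeeping in (i)/(ii), namely confirming that the residual graph is genuinely a subgraph induced on light vertices, so that both the $\optVcLight$-feasibility and the maximum-degree hypothesis of Halperin's theorem hold simultaneously — but this is identical to the unweighted case and does not interact with the weighted stage-one analysis.
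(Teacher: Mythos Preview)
Your proposal is correct and follows essentially the same approach as the paper's proof, which is a two-sentence argument noting that $\optVcLight \setminus (S_0 \cup S_1)$ covers all residual edges and that the residual graph has degree bounded by $\Delta$ so Halperin's approximation applies. Your write-up is simply a more careful unpacking of these two points.
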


Before we prove these lemmas, we apply them to finish the proof of \cref{thm:weighted_vc}.

\begin{proof}[Proof of Theorem \ref{thm:weighted_vc}]
    Since $S_2$ covers all edges not covered by $S_0 \cup S_1$, $S \doteq S_0 \cup S_1 \cup S_2$ is a VC. 
    The expected cost is bounded by 
    \begin{align*}
        \E{w(S)} 
            \le \E{ w \paren{S_0} } +  \E{ w \paren{S_1} } + \E{ w \paren{S_2} }.
    \end{align*}
    $\E{ w \paren{S_1} }$ can be bounded by $\eps^{200} \cdot w \paren{\optVcHeavy}$ according to \cref{lemma:VC-Heavy-Fix-Weight}.
    The sum $\E{ w \paren{S_0} } + \E{ w \paren{S_2} }$ can be decomposed into
    \begin{align*}
         \E { w \paren{S_0 \cap \optVC} } + \E{ w \paren{S_0 \setminus \optVC} } + \E{ w \paren{S_2} },
    \end{align*}
    where $\E{ w \paren{S_0 \setminus \optVC} }$ can be bounded by $\eps^{8} \cdot w \paren{\optVC}$ according to \cref{lemma:False-Positive-Weight}.
    Finally, based on \cref{lemma:VC-Light-Weight},
    \begin{align*}
        \E{ w \paren{S_2} }
            \le \E{
                w \PAREN{\optVC \setminus S_0 } \cdot \PAREN{ 2 - 2 \, \frac{\log \log \Delta}{\log \Delta } }  
            }, 
    \end{align*}
    hence 
    $$
        \E { w \paren{S_0 \cap \optVC} } + \E{ w \paren{S_2} } \le 
                w \PAREN{\optVC} \cdot \PAREN{ 2 - 2 \, \frac{\log \log \Delta}{\log \Delta } }.   
    $$
    Summing up the upper bounds of $\E{ w \paren{S_1} }$, $\E{ w \paren{S_0 \setminus \optVC} }$ and $\E { w \paren{S_0 \cap \optVC} } + \E{ w \paren{S_2} }$ proves the lemma.

\end{proof}

\begin{proof}[Proof of \cref{lemma:False-Positive-Weight}]
    To prove~\cref{lemma:False-Positive-Weight}, we need to analyze four cases where vertices not in $\optVC$ are added to $S_0$, each addressed by the following four lemmas (\cref{lem:weighted4,lem:weighted5,lem:weighted3,lem:weighted1}).  
    The size bound of \cref{lemma:False-Positive-Weight} follows directly from the sum of size bounds of the following four lemmas.

    \begin{lemma}\label{lem:weighted4}
        Let $U_1$ be the set of vertices from $\vertexHeavy \setminus \optVC$ that get added in Line $11$ of Algorithm \ref{alg:learned_vc_weighted}. We have
        \[
            \E{ w(U_1) } 
                \in O \PAREN{ 
                    \eps^{100} \cdot \textup{OPT} 
                }.  
        \]
    \end{lemma}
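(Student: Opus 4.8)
The approach is a fractional charging argument that bills the weight of $U_1$ to $\optVC$. First I would pin down exactly which vertices lie in $U_1$: a vertex $v$ is added in Line~11 of \cref{alg:learned_vc_weighted} precisely when $\deg(v)\ge\Delta$, $m_v=1$, $v$ is not skipped in Line~9, and $w(v)<w(N^{-}(v))/\eps^{10}$; we restrict further to $v\notin\optVC$. The structural fact to extract is that, since $v\notin\optVC$, every neighbour of $v$ must lie in $\optVC$ (the edge $(u,v)$ must be covered and $v\notin\optVC$), so $N^{-}(v)\subseteq N(v)\subseteq\optVC$; moreover $N^{-}(v)\neq\varnothing$, since otherwise every neighbour $u$ of $v$ would satisfy $\deg(u)\ge\Delta$ and $m_u=1$, and $v$ would have been skipped. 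Hence $w(N^{-}(v))>0$, and I can spread the weight of each $v\in U_1$ over $N^{-}(v)$: vertex $u\in N^{-}(v)$ is charged $w(v)\cdot w(u)/w(N^{-}(v))$, which is strictly less than $w(u)/\eps^{10}$ because $v\in U_1$ entails $w(v)<w(N^{-}(v))/\eps^{10}$. These charges sum exactly to $w(U_1)$, and every vertex that ever receives a charge lies in $\optVC$; so it suffices to bound, for each $u\in\optVC$, the expected total charge it receives, and then sum over $u$.

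Next I would record the probabilistic ingredients. By Hoeffding over the independent prediction bits, exactly as in the proof of \cref{lemma:False-Positive-Prediction}: if $v\notin\optVC$ then $\P{m_v=1}\le\exp(-2\eps^2\deg(v))$, and if $u\in\optVC$ then $\P{m_u=0}\le\exp(-2\eps^2\deg(u))$. Crucially, $m_v$ depends only on the bits $\{b_v(e)\}_{e\ni v}$ and $m_u$ only on $\{b_u(e)\}_{e\ni u}$; for the shared edge $(u,v)$ these are the two distinct bits $b_v((u,v))$ and $b_u((u,v))$, which are independent, so $m_u$ and $m_v$ are independent. Finally, the choice $\Delta=100\log(1/\eps)/\eps^2$ makes $\exp(-2\eps^2\Delta)=\eps^{200}$, and $x\mapsto x\,e^{-2\eps^2 x}$ is decreasing on $[\Delta,\infty)$ (since $\Delta\ge 1/(2\eps^2)$ for small $\eps$).

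Now fix $u\in\optVC$. The charge it receives from a neighbour $v\in\vertexHeavy\setminus\optVC$ is nonzero only when $v\in U_1$ (hence $m_v=1$) and $u\in N^{-}(v)$, and it is then at most $w(u)/\eps^{10}$. If $\deg(u)<\Delta$, then $u\in N^{-}(v)$ holds automatically, so the expected charge from each such $v$ is at most $(w(u)/\eps^{10})\cdot\eps^{200}$; summing over the at most $\deg(u)<\Delta$ neighbours gives expected total charge $O(\log(1/\eps)\,\eps^{188})\cdot w(u)$. If $\deg(u)\ge\Delta$, then $u\in N^{-}(v)$ forces $m_u=0$, so by independence the expected charge from $v$ is at most $(w(u)/\eps^{10})\exp(-2\eps^2\deg(u))\exp(-2\eps^2\deg(v))\le (w(u)/\eps^{10})\exp(-2\eps^2\deg(u))\,\eps^{200}$; summing over the at most $\deg(u)$ neighbours and using monotonicity of $x\,e^{-2\eps^2 x}$ to bound $\deg(u)\exp(-2\eps^2\deg(u))\le\Delta\,\eps^{200}$ yields expected total charge $O(\log(1/\eps)\,\eps^{388})\cdot w(u)$. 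In either case the expected charge on $u$ is $O(\log(1/\eps)\,\eps^{188})\cdot w(u)$, so summing over $u\in\optVC$ and using $w(\optVC)=\textup{OPT}$ and $\log(1/\eps)\,\eps^{88}=o(1)$ gives $\E{w(U_1)}=O(\eps^{100}\cdot\textup{OPT})$, as claimed.

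The step I expect to be the main obstacle is the heavy-$u$ case above: a priori $u$ can have $\Omega(n)$ heavy neighbours, so bounding the charge by a plain union bound over them would lose a factor of $n$. The fix is to ``spend'' part of the concentration bound on $m_u=0$ — the factor $\exp(-2\eps^2\deg(u))$ — to absorb the $\deg(u)$ neighbours, which works precisely because $x\,e^{-2\eps^2 x}$ is minuscule and decreasing once $x\ge\Delta$. Everything else is routine: the structural claim about $N^{-}(v)$, the two Hoeffding tail bounds, and the bookkeeping in the sum over $u\in\optVC$.
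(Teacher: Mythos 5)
Your proof is correct and follows essentially the same charging argument as the paper: bill $w(v)$ for each $v\in U_1$ to its neighbors in $N^{-}(v)\subseteq\optVC$, bound each neighbor's expected charge using Hoeffding-style concentration on $m_v$ (and, when $u$ is heavy, on $m_u$), then sum over $\optVC$. Your fractional splitting of the charge and the extra observation that $m_u$ and $m_v$ are independent are minor refinements—valid, but not needed—since the paper simply caps the per-neighbor charge at $w(u)/\eps^{10}$ and uses whichever single tail bound ($\P{m_v=1}$ for light $u$, $\P{m_u=0}$ for heavy $u$) already suffices.
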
    

    \begin{lemma}\label{lem:weighted5}
        Let $U_2$ be the set of vertices that get added in Line $13$ of Algorithm \ref{alg:learned_vc_weighted} as a result of a vertex $v \in \optVC$. We have
        \[ 
            \E{ w(U_2) } \in O(\eps^{10} \cdot \textup{OPT}).
        \]
    \end{lemma}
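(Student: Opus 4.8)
The plan is to prove this bound \emph{deterministically}; unlike~\cref{lem:weighted4}, it needs no concentration argument about the majority votes. The key point is what the swap test in Lines~12--13 accomplishes: for a heavy vertex $v$ with $m_v = 1$ that is not skipped, Line~13 executes exactly when the condition $w(v) \ge w(N^{-}(v))/\eps^{10}$ on Line~12 holds, i.e.\ $w(N^{-}(v)) \le \eps^{10}\, w(v)$. In that event the only vertices added to $S_0$ ``as a result of $v$'' are those in $N^{-}(v)$, whose total weight is therefore at most $\eps^{10}\, w(v)$. I would also stress that $N^{-}(v)$, and hence $w(N^{-}(v))$, is determined purely by the majority bits $\{m_u\}_u$ computed in the first loop (Lines~2--4), so it is fixed before the loop of Lines~5--25 runs and the comparison on Line~12 is well-defined independently of the evolving state of $S_0$.

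Next I would sum over the relevant vertices. Only heavy vertices are processed in that loop, so every $v$ triggering Line~13 is heavy; restricting to $v \in \optVC$, we have $U_2 \subseteq \bigcup_v N^{-}(v)$ where the union is over exactly these $v$, and hence
\[
  w(U_2) \;\le\; \sum_{\substack{v \in \optVC \\ \text{Line 13 fires for }v}} w(N^{-}(v)) \;\le\; \sum_{v \in \optVC} \eps^{10}\, w(v) \;=\; \eps^{10}\cdot\textup{OPT}.
\]
The first inequality merely replaces a sum over $U_2$ by a (possibly overcounting) sum over $v$, which only weakens it, so no explicit injection from $U_2$ into $\optVC$ is needed. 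Because this bound holds pointwise over the randomness in the predictions, taking expectations gives $\E{w(U_2)} \le \eps^{10}\cdot\textup{OPT} \in O(\eps^{10}\cdot\textup{OPT})$.

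I do not expect any real obstacle: the whole content sits in the design of the swap test. In the formal write-up the only things to watch are that the lemma bounds the \emph{full} weight $w(U_2)$ rather than $w(U_2\setminus\optVC)$ (the display already covers this, since $U_2 \subseteq \bigcup_v N^{-}(v)$), and that a vertex of $N^{-}(v)$ that is already in $S_0$ from an earlier iteration contributes nothing new to $U_2$, which again only helps. Summarizing the idea: a heavy vertex's $\eps^{10}$-light neighborhood is added only when the vertex's own weight dominates it, so when that vertex genuinely lies in $\optVC$ its neighborhood's cost is charged to $\textup{OPT}$ at an $\eps^{10}$ discount, and summing over $\optVC$ completes the proof.
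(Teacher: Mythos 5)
Your proof is correct and follows the same approach as the paper's: the condition on Line~12 forces $w(N^{-}(v)) \le \eps^{10}\, w(v)$ whenever Line~13 fires, each $v \in \optVC$ is processed once, and summing gives a deterministic bound of $\eps^{10}\cdot\textup{OPT}$ on $w(U_2)$. The paper's version is just a two-sentence compression of exactly this argument.
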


    \begin{lemma}\label{lem:weighted3}
        Let $U_3$ be the set of vertices from $\bar{\optVC}$ that we add to $S_0$ in Line $17$ of Algorithm \ref{alg:learned_vc_weighted}. We have
        \[
            \E[w(U_3)] \in O(\eps^8 \cdot \textup{OPT}). 
        \]
    \end{lemma}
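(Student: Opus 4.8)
The plan is to run a weighted version of the ``charging argument'' used for Case Two in the proof of \cref{lemma:False-Positive-Prediction}. First I would pin down exactly when a vertex enters $U_3$: inspecting \cref{alg:learned_vc_weighted}, we have $v \in U_3$ iff $v \notin \optVC$, $\deg(v) \ge \Delta$, $m_v = 0$, and $w(v) < \eps^{10}\, w(N^-(v))$ (the last condition being precisely what makes Line~17 fire rather than Line~19). Two structural facts then come for free: since $\optVC$ is a cover and $v \notin \optVC$, every neighbour of $v$ lies in $\optVC$, so $N^-(v) \subseteq N(v) \subseteq \optVC$; and the strict inequality defining membership forces $w(N^-(v)) > 0$, i.e.\ $N^-(v) \ne \varnothing$, so proportional charging is well defined.

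Next I would define the charge. For each $v \in U_3$, spread its cost $w(v)$ across $N^-(v)$ proportionally to weight, so that $u \in N^-(v)$ absorbs $w(v)\,w(u)/w(N^-(v))$, which by the defining inequality is at most $\eps^{10}\,w(u)$. Since the charges assigned on behalf of a fixed $v$ sum to exactly $w(v)$, summing over all pairs $(v,u)$ with $v \in U_3$ and $u \in N^-(v)$ recovers $w(U_3)$, giving $w(U_3) \le \eps^{10}\sum_{u \in \optVC} w(u)\cdot\card{\set{v \in U_3 : u \in N^-(v)}}$. Taking expectations, the lemma reduces to showing $\eps^{10}\sum_{u\in\optVC} w(u)\, M_u = O(\eps^8\cdot\textup{OPT})$, where $M_u \doteq \E{\card{\set{v \in U_3 : u \in N^-(v)}}}$.

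To bound $M_u$ I would split on $\deg(u)$. If $\deg(u) < \Delta$, then $u \in N^-(v)$ automatically for every neighbour $v$ of $u$, so $M_u \le \deg(u) < \Delta = 100\log(1/\eps)/\eps^2$, contributing at most $\eps^{10}\Delta\cdot w(u) = O(\eps^8\log(1/\eps))\,w(u)$. If $\deg(u) \ge \Delta$, then $u \in N^-(v)$ forces $m_u = 0$, a rare event for $u \in \optVC$: Hoeffding over the $\deg(u)$ votes of $u$ gives $\Pr[m_u = 0] \le \exp(-2\eps^2\deg(u))$, and as at most $\deg(u)$ neighbours $v$ could ever be charged to $u$, $M_u \le \deg(u)\exp(-2\eps^2\deg(u))$. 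Using $e^x \ge x$ together with $\deg(u) \ge \Delta$ yields $\deg(u)\exp(-2\eps^2\deg(u)) \le \eps^{-2}\exp(-\eps^2\deg(u)) \le \eps^{-2}\exp(-\eps^2\Delta) = \eps^{98}$, so the contribution is at most $\eps^{108}w(u)$. Summing both cases over $u \in \optVC$ gives $\E{w(U_3)} \le O(\eps^8\log(1/\eps))\cdot w(\optVC)$; absorbing the $\log(1/\eps)$ factor exactly as in the other error lemmas (equivalently, using a slightly larger exponent in $\Delta$ and in the Line~17 test) yields the stated $O(\eps^8\cdot\textup{OPT})$.

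The step I expect to be the main obstacle is arguing that the charging is balanced, i.e.\ that no vertex of $\optVC$ is overcharged; this is what dictates the asymmetric treatment of the two sub-cases. A light $u$ may be adjacent to many charged $v$'s, but only $O(\Delta)$ of them (bounded by its own degree), keeping its total charge down to $O(\eps^8\log(1/\eps))\,w(u)$; a heavy $u$ may be adjacent to arbitrarily many charged $v$'s, but every such adjacency already witnesses the low-probability misprediction $m_u = 0$, which is what makes $M_u$ tiny despite the large degree. A minor point to verify along the way is measurability: $m_v$ depends only on the bits $\set{b_v(e)}$, and the heavy-$u$ estimate uses nothing beyond the marginal probability of $m_u = 0$, so only the edge-independence already built into the prediction model is invoked.
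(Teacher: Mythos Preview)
Your proposal is correct and follows essentially the same charging argument as the paper: charge each $v\in U_3$ to the vertices of $N^-(v)\subseteq\optVC$ at a rate of at most $\eps^{10}w(u)$ each, then split the receiving vertices $u$ by degree (light $u$ has at most $\Delta$ neighbours; heavy $u$ can only appear in $N^-(v)$ when $m_u=0$, an $\exp(-2\eps^2\deg(u))$ event). The only cosmetic difference is that you are more explicit than the paper about the residual $\log(1/\eps)$ factor coming from $\eps^{10}\Delta$; the paper silently writes this as $100\eps^8$, whereas you note it and absorb it at the end.
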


    \begin{lemma}\label{lem:weighted1}
        Let $U_4$ be the set of vertices from $\bar{\optVC}$ get added in Line $19$ of Algorithm \ref{alg:learned_vc_weighted}. We have 
        \[
            \E[w(U_4)] \in O(\eps^{100} \cdot \textup{OPT}).
        \]
    \end{lemma}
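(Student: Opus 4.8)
The goal is to bound the expected weight of $U_4$, the set of vertices $v \in \bar{\optVC}$ with $\deg(v) \ge \Delta$ that are added to $S_0$ in Line~19, i.e.\ those heavy vertices on which the majority vote gives $m_v = 0$ (correctly, since $v \notin \optVC$) but for which the weight test $w(v) \ge \eps^{10} \cdot w(N^-(v))$ triggers, causing us to add $v$ itself rather than $N^-(v)$. The plan is to observe that the event $v \in U_4$ forces $w(v) \ge \eps^{10} w(N^-(v))$, but this by itself does not make $w(v)$ small; instead we must exploit that $v \notin \optVC$, which means \emph{every} neighbor of $v$ lies in $\optVC$. So every $u \in N(v)$ is in $\optVC$, and we want to charge $w(v)$ against $\sum_{u \in N(v)} w(u)$.

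\textbf{Step 1 (the deterministic inequality).} For a heavy vertex $v \notin \optVC$, since $N(v) \subseteq \optVC$, split $N(v) = N^-(v) \cup N^+(v)$ where $N^+(v) = \{u \in N(v): \deg(u) \ge \Delta \text{ and } m_u = 1\}$. On the event $v \in U_4$ we added $v$ to $S_0$, but the charging target should be $w(N(v))$. The issue is that $w(N^-(v))$ could be tiny (so the weight test triggers) while $w(N^+(v))$ — the neighbors who \emph{also} voted themselves in — could be large. But those $u \in N^+(v)$ are heavy vertices with $m_u = 1$, i.e.\ vertices with an \emph{incorrect} majority vote (since $u \in \optVC$... wait — $u \in \optVC$ and $m_u = 1$ is the \emph{correct} vote). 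So $N^+(v)$ consists of heavy neighbors that correctly voted themselves in; these are not "bad" and $w(N^+(v))$ need not be small. The right move: charge $w(v)$ only when $w(N^-(v))$ is a constant fraction (in $\eps$) of $w(v)$, which fails here. So instead we charge to $N^+(v)$: each $u \in N^+(v)$ is a heavy vertex, and the key probabilistic fact is that $u$'s presence in $N^+(v)$ together with the fact that $v$ was a low-weight-neighbor of $u$ is rare, or alternatively we charge $w(v)$ against the event structure directly.

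\textbf{Step 2 (probabilistic bound).} The cleanest route: condition on the graph and weights, fix $v \in \vertexHeavy \setminus \optVC$. The event $\{m_v = 0\}$ has probability $1 - o(1)$ and is not the restrictive event; the restrictive structure is that $v$ being added in Line~19 contributes $w(v)$, and we want to spread this over $N(v) \subseteq \optVC$. Write $w(v) = \sum_{u \in N(v)} w(v)/\deg(v) \cdot (\deg(v)/\deg(v))$ — too crude. The real mechanism mirroring Lemma~\ref{lem:weighted4}/\ref{lem:weighted1} being $O(\eps^{100})$: a vertex lands in $U_4$ only via its own correct vote $m_v = 0$, so there is no mispredicted bit on $v$ itself driving this — which means the $\eps^{100}$ saving must come from a \emph{neighbor}. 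Indeed for $v \in U_4$ we need $w(v) \ge \eps^{10} w(N^-(v))$; combine with the sibling lemma's swap logic: whenever we \emph{do not} perform the swap to $N^-(v)$, we can show (by the same charging argument used for $S_1$ and for Lemma~\ref{lemma:VC-Heavy-Fix-Weight}) that some heavy neighbor $u$ of $v$ must have $m_u = 1$ while contributing, and summing $\P{m_u=1} = O(\eps^{200})$ over the at most $\deg(u)$ such charges per $u \in \optVC$, times the weight ratio bound $w(v) \le w(u)/\eps^{10}$ (from $v \notin N^-(\cdot)$ being "too light relative to" the target), yields $\E[w(U_4)] \le \eps^{10} \cdot \eps^{200} \cdot \text{poly} \cdot w(\optVC) = O(\eps^{100} \cdot \text{OPT})$.

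\textbf{Main obstacle.} The delicate part is setting up the charging so that each $u \in \optVC$ is charged $O(\deg(u))$ times, each charge carrying weight $O(w(u))$ and probability $O(\eps^{200})$ (or similar), while correctly handling the asymmetry that $v$'s own vote is \emph{correct} here, so all the $\eps$-power savings must be extracted from neighbors' votes and from the weight-ratio tests — exactly the "one complication" flagged in the prose before the lemmas. I would model this after the charging argument sketched for Lemma~\ref{lemma:False-Positive-Prediction} (Case Two) and the skip-analysis in Lemma~\ref{lemma:VC-Heavy-Fix}, adapting it to track weights rather than cardinalities and to use the $\eps^{10}$ slack from the weight test as an extra multiplicative factor.
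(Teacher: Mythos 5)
You have misread what Line 19 of Algorithm~\ref{alg:learned_vc_weighted} does, and this derails the whole plan. Line 19 is the branch reached when $m_v = 0$ and $w(v) \ge \eps^{10}\,w(N^-(v))$, and in that branch the algorithm adds $N^-(v)$ to $S_0$ --- not $v$ itself. (Adding $v$ itself under $m_v = 0$ happens at Line 17 and is the subject of Lemma~\ref{lem:weighted3}.) Your phrasing ``causing us to add $v$ itself rather than $N^-(v)$'' and ``whenever we do not perform the swap to $N^-(v)$'' both assume the opposite of what the code does, so Steps 1 and 2 are analyzing the wrong event. You also observe that since $v \notin \optVC$ the vote $m_v = 0$ is correct and offers no $\eps$-savings; this is a tip-off that your framing of $U_4$ is wrong, because the paper's bound is precisely an $\eps^{190}$-type bound driven by a rare misclassification on $v$.

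Once the line is read correctly the argument is short and needs none of your charging machinery. If Line 19 fires while processing a heavy vertex $v \notin \optVC$, then every neighbor of $v$ must lie in $\optVC$ (otherwise an edge at $v$ would be uncovered), so $N^-(v) \subseteq \optVC$ and Line 19 contributes nothing to $U_4$. Hence $U_4$ can only be populated while processing some $v \in \optVcHeavy$, which forces $m_v = 0$, i.e.\ a misclassification. By Lemma~\ref{lem:misclassify_prob} and $\deg(v) \ge \Delta = 100\log(1/\eps)/\eps^2$ this has probability at most $\eps^{200}$, and by the Line-18 test the weight added is $w(N^-(v)) \le w(v)/\eps^{10}$. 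Summing over $v \in \optVcHeavy$ gives $\E[w(U_4)] \le \eps^{190}\, w(\optVcHeavy) \le O(\eps^{100} \cdot \textup{OPT})$. The neighbor-charging arguments you reach for are the right tool for $U_1$ and $U_3$ (Lines 11 and 17, where the vertex $v \in \bar{\optVC}$ itself is added and must be charged to $N(v) \subseteq \optVC$); for $U_4$ they are not needed and do not fit.
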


    The proofs of \cref{lem:weighted4,lem:weighted5,lem:weighted3,lem:weighted1} are presented after those of \cref{lemma:VC-Heavy-Fix-Weight,lemma:VC-Light-Weight}.  

\end{proof}

\begin{proof}[Proof of \cref{lemma:VC-Heavy-Fix-Weight}]
    To prove~\cref{lemma:VC-Heavy-Fix-Weight}, we need one additional lemma.
    \begin{lemma}\label{lem:weighted2}
        Let $U_5 \doteq \optVcHeavy \setminus S_0$.
        We have 
        \begin{equation}
            \E{w(U_5)} \in O \PAREN{ 
                \eps^{100} \cdot \textup{OPT} 
            }. 
        \end{equation}
    \end{lemma}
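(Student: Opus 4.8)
The plan is to bound $\E{w(U_5)}$ where $U_5 = \optVcHeavy \setminus S_0$ consists of heavy vertices that genuinely belong to $\optVC$ but were not placed into $S_0$. First I would enumerate the ways a vertex $v \in \optVcHeavy$ can fail to enter $S_0$ during its iteration of the main loop (Lines 5--22). Since $v \in \optVC$, there must exist a neighbor $u \in N(v) \cap \bar{\optVC}$ (otherwise $\optVC \setminus \{v\}$ would still be a cover, contradicting optimality). There are three relevant cases: (a) $v$ is \emph{skipped} in Line~9, which requires every neighbor $u$ of $v$ to satisfy $\deg(u) \ge \Delta$ and $m_u = 1$ --- in particular the bad neighbor $u \in \bar{\optVC}$ must have $m_u = 1$; (b) $m_v = 1$ but the algorithm takes the branch in Line~13, adding $N^-(v)$ instead of $v$ itself, so $v \notin S_0$ from its own iteration (however $v$ may still be covered later; what matters is that $v$ might end up in $U_5$); (c) $m_v = 0$ and the branch in Line~17 is taken, adding $v$ to $S_0$, which does \emph{not} contribute to $U_5$; or $m_v = 0$ and Line~19 is taken, adding only $N^-(v)$, in which case $v \notin S_0$.

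Next I would control each case by the probability of a majority misprediction on a heavy vertex. For any vertex $x$ with $\deg(x) \ge \Delta = 100\log(1/\eps)/\eps^2$, a Hoeffding/Chernoff bound over the $\deg(x)$ i.i.d. bits $b_x(e)$ gives $\Pr[m_x \text{ incorrect}] \le \exp(-2\eps^2 \deg(x)) \le \exp(-2\eps^2\Delta) = \eps^{200}$. So in case~(a), $v$ is skipped only if the bad neighbor $u$ (which is heavy, since skipping demands $\deg(u)\ge\Delta$) has $m_u = 1$ despite $u \notin \optVC$, an event of probability $\le \eps^{200}$. In cases (b) and the second sub-case of (c), I claim the relevant failure is again driven by a misprediction, but here I must be more careful because in case~(b) we have $m_v = 1$ correctly. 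The point is that in case~(b), $v$ was not added because $w(v) \ge w(N^-(v))/\eps^{10}$; but then the algorithm added $N^-(v) \subseteq S_0$ in its place. Crucially, $v$'s iteration is not the only chance for $v$ to enter $S_0$ --- however, no \emph{other} iteration adds $v$ unless a heavy neighbor $v'$ of $v$ with $m_{v'}=0$ executes Line~19 (adding $N^-(v') \ni v$) — which is fine, $v$ might get covered. So $v \in U_5$ in case~(b) requires that \emph{no} such rescue occurred; but I don't need that. Instead, the cleaner route: I charge $w(v)$ in case (b) to $w(N^-(v))$, which was added to $S_0$, using $w(N^-(v)) \le \eps^{10} w(v)$; but this goes the wrong way. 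So in case (b) I should instead observe that since $v \in \optVC$ and $v \notin S_0$, and the heavy-heavy/heavy-light edges incident to $v$ still need covering, these will be picked up by $S_1$; so actually case (b) contributes to $w(S_1)$, not to the final cover cost directly, and I can absorb it there --- but the lemma asks only for $\E{w(U_5)}$, so I instead bound: in case (b), $v$ has a bad neighbor $u$, and $w(v) \le w(N^-(v)) \cdot \eps^{-10}$; if $u$ is light or $m_u=0$ then $u \in N^-(v) \subseteq S_0$ and $u \notin \optVC$, so $w(v)$ is charged to $w(u) \le w(N^-(v))$, controlled by Lemma~\ref{lem:weighted3}/\ref{lem:weighted1}; if $u$ is heavy with $m_u = 1$ wrongly, probability $\le \eps^{200}$.

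Assembling these, I would write
\[
\E{w(U_5)} \le \sum_{v \in \optVcHeavy} w(v)\cdot \Pr[v \in U_5] \le \sum_{v \in \optVcHeavy} w(v)\cdot O(\eps^{200}) + (\text{charged terms}) \in O(\eps^{100}\cdot \textup{OPT}),
\]
where the charged terms are absorbed into the $O(\eps^8 \cdot \textup{OPT})$ bounds of the neighboring-false-positive lemmas and $w(\optVcHeavy) \le \textup{OPT}$. The main obstacle I anticipate is case~(b): making rigorous that a heavy $v \in \optVC$ which declines to add itself (because its uncovered neighborhood is light) is nonetheless accounted for --- this requires either a charging argument tying $w(v)$ to a bad neighbor's weight (using the $w(v) \ge w(N^-(v))/\eps^{10}$ inequality in the \emph{favorable} direction, i.e.\ on the event that $N^-(v)$ contains a $\bar\optVC$ vertex of comparable weight) or deferring the cost of covering $v$'s incident heavy edges to $S_1$ and invoking Lemma~\ref{lemma:VC-Heavy-Fix-Weight}'s own accounting. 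I would resolve this by splitting $U_5$ according to whether $v$ was skipped (Line~9, probability $\le \eps^{200}$ via the bad heavy neighbor) or not, and in the not-skipped sub-case noting $v$'s incident heavy-heavy/heavy-light edges are covered by $S_1$, so $w(v)$ need not be separately paid — but since the lemma literally bounds $w(U_5)$, I'll instead charge $w(v) \le w(N^-(v)) \cdot \eps^{-10} \le \eps^{-10} w(S_0 \cap \bar\optVC \cap N^-(v))$ plus a $\Pr \le \eps^{200}$ term for a misclassified heavy neighbor, and verify the $\eps^{-10}$ loss against the $\eps^{200}$ from Lemmas~\ref{lem:weighted3},~\ref{lem:weighted1} still yields $O(\eps^{100}\cdot\textup{OPT})$.
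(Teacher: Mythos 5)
Your case decomposition is right, and cases (a) (skip at Line~9, bounded by $\Pr[m_u=1]\le\eps^{200}$ for the bad heavy neighbor $u\notin\optVC$) and the $m_v=0$ case (Line~19, bounded by $\Pr[m_v=0]\le\eps^{200}$) match the paper. The gap is in case~(b) (Line~13: $m_v=1$ but $w(v)\ge w(N^-(v))/\eps^{10}$), which you flag as the obstacle but do not resolve. Your final charging attempt starts from the inequality ``$w(v)\le w(N^-(v))\cdot\eps^{-10}$,'' but in case~(b) the inequality goes the \emph{other} way --- $w(v)\ge w(N^-(v))/\eps^{10}$, i.e.\ $w(N^-(v))\le\eps^{10}w(v)$ --- so charging $w(v)$ to anything contained in $N^-(v)$ (or to the false-positive mass controlled by Lemmas~\ref{lem:weighted3}/\ref{lem:weighted1}) loses a factor of $\eps^{-10}$ and cannot produce an $O(\eps^{100}\cdot\mathrm{OPT})$ bound; even with the $O(\eps^8\cdot\mathrm{OPT})$ bound from those lemmas you would end up at $O(\eps^{-2}\cdot\mathrm{OPT})$, which is useless.

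The ingredient you are missing is the optimality inequality: since $v\in\optVC$ and $\optVC$ is an optimal cover, one must have $w(v)\le w(N(v)\setminus\optVC)$ (otherwise $\bigl(\optVC\cup(N(v)\setminus\optVC)\bigr)\setminus\{v\}$ would be a cheaper cover). This gives you a \emph{lower} bound on the total out-of-$\optVC$ neighborhood weight, not a reason to worry about a single bad neighbor. The paper then observes that $N^-(v)$ contains every light $u\in N(v)\setminus\optVC$ deterministically, and each heavy $u\in N(v)\setminus\optVC$ with probability $\Pr[m_u=0]\ge 1-\eps^{200}$, so $\E\bigl[w\bigl(N^-(v)\cap(N(v)\setminus\optVC)\bigr)\bigr]\ge(1-\eps^{200})w(v)$. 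The event that case~(b) triggers, $w(N^-(v))\le\eps^{10}w(v)$, is therefore a large deviation of this random quantity below its mean, and a Chebyshev bound (using $\Var[Y_u]\le\eps^{200}$ for each heavy out-of-$\optVC$ indicator $Y_u$) shows its probability is $O(\eps^{100})$. Your proposal never invokes the optimality inequality or any concentration on the random set $N^-(v)$, and without that step there is no way to conclude that case~(b) is a low-probability event; it really can happen with constant probability unless one argues that $N(v)\setminus\optVC$ is heavy in aggregate and mostly lands in $N^-(v)$.
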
    
    The proof of \cref{lem:weighted2} is deferred those of \cref{lem:weighted4,lem:weighted5,lem:weighted3,lem:weighted1}.

    Now, to prove \cref{lemma:VC-Heavy-Fix-Weight}, 
    we will show that augmenting $S_0$ with an additional set $U_5$, ensures that all edges incident to at least one vertex in $\vertexHeavy$ are covered.  
    It follows that  
    $$
        \E{ w(S_1) } \le \E{ 2 \cdot w(U_5) } \in O(\eps^{200} \cdot \textup{OPT}).
    $$  
    
    Clearly, $S_0 \cup U_5$ covers any edge incident to any vertex in $\optVcHeavy$.  
    
    It remains to consider an edge $(u, v)$ such that $v \in \vertexHeavy \setminus \optVC$ and $u \in \vertexLight$.  
    If $v$ is added to $S_0$, then $(u, v)$ is covered and we are done.  
    Otherwise, since $u \in \vertexLight$, $v$ cannot be skipped at step 9.  
    If $v$ is not added to $S_0$, the algorithm must have added $N^-(v)$ to $S_0$.  
    Noting that $u \in N^-(v)$ completes the proof.

\end{proof}

\paragraph{Proof of \cref{lem:weighted4,lem:weighted5,lem:weighted3,lem:weighted1,lem:weighted2}}

\begin{proof}[Proof of Lemma \ref{lemma:VC-Light-Weight}]
    This simply follows from the fact that $\optVcLight \setminus (S_0 \cup S_1)$, which is a subset of the optimal cover, already covers all edges not covered by $S_0 \cup S_1$. Furthermore, after removing the edges covered by $S_0 \cup S_1$, the graph has degree bounded by $\Delta$, so we can run the $\PAREN{2- 2\,  \frac{\log \log \Delta}{\log \Delta}}$ approximation algorithm on this graph from \citep{Halperin02}.
\end{proof}

\begin{proof}[Proof of \cref{lem:weighted4}]

    First, since $v \in \bar{\optVC}$, all its neighbors must belong to $\optVC$.

    We charge the cost $w(v)$ of selecting $v \in \vertexHeavy \setminus \optVC$ in Line 11 of Algorithm~\ref{alg:learned_vc_weighted} to its neighbors in $N^-(v)$.

    For each $u \in N^-(v)$, we charge at most $w(u)/\eps^{10}$, ensuring that the total charge is at most $w(N^-(v))/\eps^{10}$, which covers $w(v)$.
    Define an indicator $X_{v \rightarrow u}$ for this event.  
    If $u \in \optVcLight$, Lemma~\ref{lem:misclassify_prob} gives:
    \begin{align*}
        \P{X_{v \rightarrow u} = 1} 
            &\leq \P{m_v = 1} \\
            &= \exp \left( - 2 \cdot \deg(v) \cdot \eps^2 \right) 
            \leq \exp \left( - 2 \cdot \Delta \cdot \eps^2 \right).
    \end{align*}
    If $u \in \optVcHeavy$, then $u \in N^-(v)$ implies $m_u = 0$, yielding:
    \begin{equation*}
        \P{X_{v \rightarrow u} = 1} 
        \leq \P{m_u = 0} 
        = \exp \left( - 2 \cdot \deg(u) \cdot \eps^2 \right).
    \end{equation*}
    Set $X_{v \rightarrow u} \doteq 0$ in all other cases.
    
    Now, for each $u \in \optVC$, define $X_u \doteq \sum_{v \in N(u)} X_{v \rightarrow u}$.
    Thus, we bound $w(U_1)$ by:
    \[
        \sum_{u \in \optVcLight} \frac{w(u)}{\eps^{10}} \cdot X_u 
        + \sum_{u \in \optVcHeavy} \frac{w(u)}{\eps^{10}} \cdot X_u.
    \]

    For each $u \in \optVcLight$, since $\Delta = 100\log(1/\eps)/\eps^2$,
    \begin{align*}
        \E{ \frac{w(u)}{\eps^{10}} \cdot X_u }
            &\le \frac{w(u)}{\eps^{10}} \cdot \Delta \cdot \exp \PAREN{
                - 2 \cdot 
                    \Delta \cdot \eps^2
            } \le w(u) \cdot \eps^{100}. 
    \end{align*}
    For each $u \in \optVcHeavy$, 
    \begin{align*}
        \E{ \frac{w(u)}{\eps^{10}} \cdot X_u }
            &\le \frac{w(u)}{\eps^{10}} \cdot \deg(u) \cdot \exp \PAREN{
                - 2 \cdot 
                    \deg(u) \cdot \eps^2
            } 
            \le w(u) \cdot \eps^{100}. 
    \end{align*}
    Therefore, 
    \begin{align*}
        \E{ w(U_1) } 
            &\le \E{
                \sum_{u \in \optVcLight} \frac{w(u)}{\eps^{10}} \cdot X_u + \sum_{u \in \optVcHeavy} \frac{w(u)}{\eps^{10}} \cdot X_u
            } 
            \in  O(\eps^{100} \cdot \text{OPT}).
    \end{align*}

\end{proof}

\begin{proof}[Proof of \cref{lem:weighted5}]
    Every time we add $N^{-}(v)$ in Line $13$, the total weight of this set is at most $\eps^{10} w(v)$. We can do this at most once per vertex $v \in \optVC$, so the lemma follows.
\end{proof}

\begin{proof}[Proof of \cref{lem:weighted3}]
    First, since $v \in \bar{\optVC}$, all its neighbors must belong to $\optVC$.

    We charge the cost $w(v)$ of selecting $v \in \vertexHeavy \setminus \optVC$ in Line 17 of Algorithm~\ref{alg:learned_vc_weighted} to its neighbors in $N^-(v)$.

    For each $u \in N^-(v)$, we charge at most $\eps^{10} \cdot w(u)$, ensuring that the total charge is at most $\eps^{10} \cdot w(N^-(v))$, which covers $w(v)$.
    Define an indicator $X_{v \rightarrow u}$ for this event.  
    If $u \in \optVcLight$, 
    \begin{align*}
        \P{X_{v \rightarrow u} = 1} 
            &\leq 1.
    \end{align*}
    If $u \in \optVcHeavy$, then $u \in N^-(v)$ implies $m_u = 0$, yielding:
    \begin{equation*}
        \P{X_{v \rightarrow u} = 1} 
        \leq \P{m_u = 0} 
        = \exp \left( - 2 \cdot \deg(u) \cdot \eps^2 \right).
    \end{equation*}
    Set $X_{v \rightarrow u} \doteq 0$ in all other cases.
    
    Now, for each $u \in \optVC$, define $X_u \doteq \sum_{v \in N(u)} X_{v \rightarrow u}$.
    Thus, we bound $w(U_1)$ by:
    \[
        \sum_{u \in \optVcLight} \eps^{10} \cdot w(u) \cdot X_u 
        + \sum_{u \in \optVcHeavy} \eps^{10} \cdot w(u) \cdot X_u.
    \]

    For each $u \in \optVcLight$, since $\Delta = 100\log(1/\eps)/\eps^2$,
    \begin{align*}
        \E{ \eps^{10} \cdot w(u) \cdot X_u }
            &\le \eps^{10} \cdot w(u) \cdot \Delta
            \le 100 \cdot \eps^{8} \cdot w(u). 
    \end{align*}
    For each $u \in \optVcHeavy$, 
    \begin{align*}
        \E{ \eps^{10} \cdot w(u) \cdot X_u  }
            &\le \eps^{10} \cdot w(u) \cdot \deg(u) \cdot \exp \PAREN{
                - 2 \cdot 
                    \deg(u) \cdot \eps^2
            } 
            \le w(u) \cdot \eps^{110}. 
    \end{align*}
    Therefore, 
    \begin{align*}
        \E{ w(U_1) } 
            &\le \E{
                \sum_{u \in \optVcLight} \eps^{10} \cdot w(u) \cdot X_u + \sum_{u \in \optVcHeavy} \eps^{10} \cdot w(u) \cdot X_u
            } 
            \in  O(\eps^{8} \cdot \text{OPT}).
    \end{align*}

\end{proof}

\begin{proof}[Proof of \cref{lem:weighted1}]
    Let $v \in \optVcHeavy$. 
    Since $\deg(v) \ge \Delta = 100\log(1/\eps)/\eps^2$,
    The probability of it being misclassified is at most $\eps^{200}$ from Lemma \ref{lem:misclassify_prob}. Furthermore, we only add $N^{-}(v)$ in Line 19 if $w(v) \ge \eps^{10} \cdot N^{-}(v)$. Thus, if we add $N^{-}(v)$, then we add weight at most $w(v)/\eps^{10}$. Altogether, this means that 
    \begin{align*}
        \E[w(U_4)] \le& \sum_{v \in \optVcHeavy} \Pr(v \text{ is misclassified}) \cdot \frac{w(v)}{\eps^{10}} 
        \le O(\eps^{100} \cdot \textup{OPT}).
    \end{align*}
\end{proof}

\begin{proof}[Proof of \cref{lem:weighted2}]

    A $v \in \optVcHeavy$ can be skipped at step 9, step 13 or step 19.
    
    First, consider a vertex $v \in \optVcHeavy$ is skipped at step 9.
    For each vertex $v \in \optVC$, it has at least one neighbor in $\bar{\optVC}$.  
    Denote this neighbor as $u$.  
    Then we know $\deg(u) \ge \Delta$ and $m_u = 1$.  
    Therefore, based on \cref{lem:misclassify_prob}, and given that $\deg(u) \ge \Delta = 100 / \eps^2 \cdot \log(1 / \eps)$, we obtain  
    \begin{equation}
        \P{ v \text{ is skipped at step 9 } }
            \le \P{ m_u = 1 } 
            \le \exp \left( 
                - 2 \cdot 
                    \deg(u) \cdot \eps^2
            \right)
            \le \eps^{200}.
    \end{equation}

    Next, consider $v \in \optVcHeavy$ is skipped at step 12. 
    Since $v \in \optVC$, it holds that 
    $$
        w(v) \le \sum_{u \in N(v) \setminus \optVC} w(u),
    $$
    as otherwise $\PAREN{ \optVC \cup (N(v) \setminus \optVC) } \setminus \set{v}$ is a better vertex cover than $\optVC$, a contradiction.
    For each $u \in N(v) \setminus \optVC$, define the indicator $Y_u$ for $u$ being in $N^-(v)$.
    Note that if $\deg(u) < \Delta$, then we must have $Y_u = 1$ and $\Var{Y_u} = 0$.
    If $\deg(u) \ge \Delta$, then based on \cref{lem:misclassify_prob},
    \begin{equation}
        \P{Y_u = 1} = \P{m_u = 0} 
        \ge 1 - \exp ( - 2 \eps^2 \deg(u) ) \ge 1 - \eps^{200}, 
        \quad 
        \Var{Y_u} \le \eps^{200}.
    \end{equation}
    Further, 
    \begin{align*}
        \E{ \sum_{u \in N(v) \setminus \optVC} w(u) \cdot Y_u }
            &\ge (1 - \eps^{200}) \cdot \sum_{u \in N(v) \setminus \optVC} w(u) 
            \ge (1 - \eps^{200}) \cdot w(v).
    \end{align*}
    Now, $v$ is skipped at step 12, which implies that 
    \begin{equation}
        w(v) 
            \ge \frac{w(N^-(v))}{\eps^{10}} 
            \ge \frac{1}{\eps^{10}} \sum_{u \in N(v) \setminus \optVC} w(u) \cdot Y_u.
    \end{equation}
    Therefore, 
    \begin{align}
        &\P{ 
            \sum_{u \in N(v) \setminus \optVC} w(u) \cdot Y_u
            \le \eps^{10} \cdot w(v)
        } \\
        &\le \P{ 
            \card{ 
                \sum_{u \in N(v) \setminus \optVC} w(u) \cdot Y_u
                - \E{ 
                    \sum_{u \in N(v) \setminus \optVC} w(u) \cdot Y_u
                }
            }
            \ge (1 - \eps^{200} - \eps^{10}) \cdot w(v)
        } \\
        &\le \frac{
            \sum_{u \in N(v) \setminus \optVC} w(u)^2 \cdot \Var{Y_u}
        }{
            \PAREN{ (1 - \eps^{200} - \eps^{10}) \cdot w(v) }^2
        } \\
        &\le \frac{
            \eps^{200} \PAREN{\sum_{u \in N(v) \setminus \optVC} w(u)}^2
        }{
            \PAREN{ (1 - \eps^{200} - \eps^{10}) \cdot w(v) }^2
        } \\
        &\le \eps^{100}.
    \end{align}

    Finally, consider a vertex $v \in \optVcHeavy$ is skipped at step 19.
    This can be bounded by the probability $\P{ m_v = 0 } \le \eps^{200}$.
    
    Thus, 
    \begin{align}
        \E{w(U_5)}
            &\le \eps^{100} \sum_{v \in \optVcHeavy} w(v) 
            \le  \eps^{100} \cdot \textup{OPT}.
    \end{align}
\end{proof}

\newpage
\section{Set Cover}
\label{sec:set-cover}

In this section, we present Algorithm~\ref{alg:learned_sc} for SC, achieving an approximation ratio better than $\ln m$.  
We first extend the edge-based prediction model from \cref{def:prediction-model-vc} to SC.  
Recall from \cref{sec:preliminaries} that in the SC problem, we treat subsets as ``vertices'' and elements as ``edges''.  

\begin{definition}[Prediction Model]
    \label{def:prediction-model-sc}
    Given a family of subsets $S_1, \ldots, S_n \subseteq [m]$, let $\optSC \subseteq [n]$ be a fixed optimal set cover.  
    Each element $i \in [m]$ outputs a bit $b_j(i)$ for every subset $S_j$ containing $i$.  
    If $j \in \optSC$, then $b_j(i) = 1$ with probability $1/2 + \eps$ and $0$ otherwise.  
    All bits are independent across predictions.  
\end{definition}

The main result is as follows.
Recall that approximating the Set Cover problem (without prediction) within a factor of $(1 - \delta) \ln m$ is NP-hard for any $\delta > 0$~\citep{DinurS14}.

\begin{theorem}[Learned Set Cover]
    \label{thm:Set-Cover-Main}
    Let $\Delta \doteq 100 / \eps^2$ $\ln (1 / \eps)$, and $\optSC$ be an optimal solution.
    Algorithm~\ref{alg:learned_sc} returns a solution $J$ satisfying 
    \begin{equation}
        \E{ \card{J} } \le \PAREN{1 + \eps^{10} + \ln \Delta} \cdot \card{\optSC}.
    \end{equation}
\end{theorem}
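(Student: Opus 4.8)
The plan is to run the framework of \cref{fig:framework} with the set‑size threshold $\Delta = 100\ln(1/\eps)/\eps^2$: call a set \emph{heavy} if $|S_j|\ge \Delta$ and \emph{light} otherwise. I expect Algorithm~\ref{alg:learned_sc} to (i) compute $m_j=\text{Majority}(\{b_j(i)\}_{i\in S_j})$ for each heavy $S_j$ and put $J_0=\{j:|S_j|\ge\Delta,\ m_j=1\}$ into the solution, (ii) delete every element already covered by $\bigcup_{j\in J_0}S_j$, leaving a residual universe $R$, and (iii) cover $R$ with the classical $(1+\ln\Delta)$‑approximate greedy algorithm of \citep{Johnson74a,Lovasz75,Chvatal79}, run on the residual instance whose sets all have size at most $\Delta$ (after, if necessary, setting aside the few heavy sets that were mis‑voted, which I deal with below). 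The returned $J$ is the union of these pieces; feasibility is immediate, so the whole content is the size bound, which I prove by bounding $\E[|J_0|]$ and the expected size of the greedy phase separately and summing.

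The first ingredient is a concentration estimate on the votes: for a fixed heavy $S_j$, Hoeffding's inequality over the i.i.d.\ bits $b_j(i)$ gives $\Pr[m_j\neq \mathds{1}[j\in\optSC]]\le \exp(-2\eps^2|S_j|)\le\exp(-2\eps^2\Delta)=\eps^{200}$. Hence the ``false negative'' set $\optScHeavy\setminus J_0$ has expected size at most $\eps^{200}|\optScHeavy|$, and the ``false positive'' set $J_0\setminus\optSC$ is either empty (if, as the model in \cref{def:prediction-model-sc} reads, bits for $j\notin\optSC$ carry no mass on $1$) or else is charged to $\optSC$ exactly as in the proof of \cref{lemma:False-Positive-Prediction} for Vertex Cover, contributing at most (roughly) $\eps^{10}|\optSC|$ in expectation. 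Either way $\E[|J_0|]\le |\optScHeavy| + \eps^{10}|\optSC|$, and every element of $R$ is covered by some set of $\optSC\setminus J_0$, so $\optSC\setminus J_0=\optScLight\cup(\optScHeavy\setminus J_0)$ is a feasible cover of $R$ of expected size at most $|\optScLight|+\eps^{200}|\optScHeavy|$.

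For the greedy phase I use the standard amortized analysis: greedy's output on $R$ is at most $\sum_{S\in\optSC\setminus J_0}H_{|S\cap R|}$, where the light sets contribute at most $|\optScLight|\cdot H_{\Delta-1}\le(1+\ln\Delta)|\optScLight|$ and each of the few mis‑voted heavy OPT sets contributes at most $H_m\le 1+\ln m$; taking expectations, the greedy phase has expected size at most $(1+\ln\Delta)|\optScLight| + \eps^{200}|\optScHeavy|(1+\ln m)$, which is at most $(1+\ln\Delta)|\optScLight| + \eps^{10}|\optSC|$ since $\eps^{200}(1+\ln m)$ is far below $\eps^{10}$ in any relevant regime. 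Combining, $\E[|J|]\le |\optScHeavy| + (1+\ln\Delta)|\optScLight| + O(\eps^{10})|\optSC|\le (1+\eps^{10}+\ln\Delta)|\optSC|$, using $|\optScHeavy|+|\optScLight|=|\optSC|$ and $1\le 1+\ln\Delta$ to merge the first two terms. The step I expect to be the main obstacle is precisely this last bookkeeping for the heavy OPT sets that are mis‑voted: such a set re‑enters the residual instance with size possibly as large as $m$, so one must either argue that their expected count $\eps^{200}|\optScHeavy|$ is small enough to absorb the $\ln m$ loss into the $\eps^{10}$ slack, or—cleaner—have the algorithm pay for them directly rather than through the greedy ratio; and, if the model is read as two‑sided, making the false‑positive count fit the $\eps^{10}$ budget requires redoing the Vertex‑Cover charging argument for the set system.
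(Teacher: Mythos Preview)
Your proposal has two gaps, and the first is fatal for the algorithm as you have described it.

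\textbf{False positives.} You set $J_0=\{j:|S_j|\ge\Delta,\ m_j=1\}$ with no further filter and then appeal to the Vertex-Cover charging argument to get $\E[|J_0\setminus\optSC|]\le\eps^{10}|\optSC|$. This cannot work. Take an instance with $\optSC=\{1\}$, $S_1=[m]$, and $n-1$ further heavy sets $S_2,\dots,S_n$, each equal to $[m]$. Each $j\ge 2$ enters $J_0$ independently with probability about $\eps^{200}$, so $\E[|J_0\setminus\optSC|]\approx (n-1)\eps^{200}$, which is unbounded in terms of $|\optSC|=1$. The VC charging worked only because a false-positive vertex $v\notin\optVC$ has all neighbours in $\optVC$, so $v$ is charged to one of at most $\deg(v)$ vertices of $\optVC$; no analogous bound exists for sets that share an element with some $S_{j'}$, $j'\in\optSC$. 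The paper's algorithm adds a crucial line you omitted: sets are processed in decreasing order of size and $j$ is added to $\learnedSolnSc$ only if $S_j\nsubseteq\bigcup_{j'\in\learnedSolnSc}S_{j'}$. This guarantees every accepted false positive owns a witness element $i$ not yet covered; charging $j$ to the $j'\in\optSC$ that covers $i$ then gives at most $|S_{j'}|$ charges against each $j'$, each occurring with probability at most $\exp(-2\eps^2\max\{|S_{j}|,|S_{j'}|,\Delta\})$, and the sum is $\le\eps^{10}|\optSC|$. Without the redundancy check the charging collapses.

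\textbf{The $\ln m$ leak.} Your assertion that $\eps^{200}(1+\ln m)\le\eps^{10}$ ``in any relevant regime'' is false: $\eps$ is a fixed constant while $m$ is unbounded. You correctly flag this as the main obstacle, but the remedy is not to absorb $\ln m$ into the slack. The paper instead bounds the total \emph{number of elements} leaked by false negatives: since $x\mapsto x\exp(-2\eps^2 x)$ is decreasing for $x\ge\Delta$, one has $\E\bigl[\sum_{j\in\optScHeavy\setminus\learnedSolnSc}|S_j|\bigr]\le|\optScHeavy|\cdot\Delta\exp(-2\eps^2\Delta)\le\eps^{10}|\optScHeavy|$. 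The residual universe is then split into the part coverable by light sets (on which $(1+\ln\Delta)$-greedy is run against an optimum of size at most $|\optScLight|+\sum_{j}|S_j|$) and a remainder of at most $\sum_{j}|S_j|$ elements (each coverable by a single set). Both pieces are controlled by this element count, with no $\ln m$ factor anywhere.
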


Observe that, for a given $\eps$, $(1 + \eps^{10} + \ln \Delta)$ is a constant which does not depend on $m$.

\paragraph{Algorithm}

\cref{alg:learned_sc} follows a similar philosophy to \cref{alg:learned_vc}.  
It first determines whether to include subsets of size at least $\Delta$ in the solution based on predicted information.  
However, unlike \cref{alg:learned_vc}, it simply ignores large subsets when predictions suggest they are not part of the optimal solution.  

After selecting large subsets, \cref{alg:learned_sc} applies the $(1 + \ln \Delta)$-approximation algorithm~\citep{Lovasz75} to small subsets, aiming to cover as many uncovered elements as possible. 
Finally, it covers any remaining elements using an $\ln m$-approximation algorithm \citep{Lovasz75} with all available subsets.  

\begin{algorithm}[h]
    \caption{Learned Set Cover}
    \label{alg:learned_sc}
    \begin{algorithmic}[1]
            \State $\learnedSolnSc \gets \varnothing$, $\Delta \gets 100\log(1/\eps)/\eps^2$ 
            \State Sort the $S_j$'s in decreasing order according to $\card{S_j}$
            \For{$j \in [n]$ with $\card{S_j} \ge \Delta$}
                \State $b_j \gets \text{Majority}( \{b_j(i)\}_{i \in S_j})$
                \Comment{$0$ if tie}
                \If{$b_j = 1$ and $S_j \nsubseteq \cup_{j' \in \learnedSolnSc} S_{j'}$}
                        \State $\learnedSolnSc \gets \learnedSolnSc \cup \set{ j }$
                \EndIf
            \EndFor
            \State $\elemInlearnedSc \gets \cup_{j \in \learnedSolnSc} S_j$ 
            \State $\scLight \gets \set{j \in [n] : \card{S_j} < \Delta }$
            \State $\elemInApproxSc \gets \PAREN{ \cup_{j \in \scLight} S_j } \setminus \elemInlearnedSc$
            \State $\approxSolnSc \gets$ a $\PAREN{1 + \log \Delta }$-approximate Set Cover on $\PAREN{\elemInApproxSc, \scLight}$
            \State $\fixSolnSc \gets$ a $\PAREN{ \ln m }$-approximate Set Cover on $\PAREN{[m] \setminus \PAREN{ \elemInlearnedSc \cup \elemInApproxSc}, [n]}$
            \State \textbf{return} $\learnedSolnSc \cup \approxSolnSc \cup \fixSolnSc$ 
    \end{algorithmic}
\end{algorithm}

\paragraph{Analysis}
The proof of \cref{thm:Set-Cover-Main} relies on the following lemmas.
At a high level, the proofs of the lemmas involve bounding the expected number of optimal subsets not selected by \cref{alg:learned_sc} due to prediction errors (False Negatives), and applying a ``charging argument" to bound the expected number of non-optimal subsets incorrectly selected (False Positives).

\begin{lemma}[False Positive]
    \label{lemma:expected-number-of-false-positive-heavy-sets}
    Define $\optScHeavy \doteq \{ j \in \optSC : \card{S_j} \ge \Delta \}$, 
    and $J_{FP} \doteq \learnedSolnSc \setminus \optScHeavy$.
    \begin{equation}
       \E{ \card{J_{FP}} } \le \eps^{10} \cdot \card{\optSC}. 
    \end{equation}
\end{lemma}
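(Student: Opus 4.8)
The plan is to bound the expected number of "false positive" heavy sets — sets $S_j$ with $\card{S_j} \ge \Delta$ that get added to $\learnedSolnSc$ but are not in $\optScHeavy$ — via a charging argument analogous to Case Two of the sketch for \cref{lemma:False-Positive-Prediction}. First I would observe that a heavy set $S_j \notin \optSC$ is added to $\learnedSolnSc$ only if $b_j = 1$, i.e.\ the majority vote over $\{b_j(i)\}_{i \in S_j}$ is incorrectly $1$. Since $j \notin \optSC$, each bit $b_j(i)$ equals $1$ independently with probability $1/2 - \eps$, so by Hoeffding's inequality the majority of $\card{S_j} \ge \Delta$ such bits is $1$ with probability at most $\exp(-2\eps^2 \card{S_j}) \le \exp(-2\eps^2 \Delta) = \eps^{200}$, using $\Delta = 100\log(1/\eps)/\eps^2$. (I would cite the misclassification-probability bound, e.g.\ \cref{lem:misclassify_prob}, for the clean statement $\P{b_j = 1} \le \exp(-2\eps^2\card{S_j})$ when $j\notin\optSC$.)

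Next, a naive union bound over all heavy $j \notin \optSC$ fails because there may be arbitrarily many such sets relative to $\card{\optSC}$. So I would charge each false-positive set $S_j$ to a witness element. The key structural point: when $S_j$ is added in the loop, the algorithm has checked $S_j \not\subseteq \cup_{j'\in\learnedSolnSc}S_{j'}$, so there exists an element $i^*(j) \in S_j$ not yet covered. Since $\optSC$ is a cover, $i^*(j) \in S_{k}$ for some $k = k(j) \in \optSC$. I charge $S_j$ to this pair $(i^*(j), k(j))$, and I would argue that the charge $j \mapsto (i^*(j), k(j))$ is injective in a controlled way: because sets are processed in decreasing size order and $i^*(j)$ was uncovered at the time $S_j$ was processed, once $S_j$ is added the element $i^*(j)$ becomes covered, so $i^*(j)$ can only serve as the witness for at most one added set. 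Hence the number of false-positive heavy sets is at most the number of elements $i$ that (a) belong to some $S_k$ with $k \in \optSC$, and (b) also belong to some heavy $S_j \notin \optSC$ whose majority bit misfired to $1$. Summing: $\E[\card{J_{FP}}] \le \sum_{k \in \optSC}\sum_{i \in S_k}\P{\exists \text{ heavy }S_j \ni i, j\notin\optSC, b_j=1}$. For each fixed $i$, either $S_k$ itself is small ($\card{S_k} < \Delta$), in which case the inner sum over $i \in S_k$ has at most $\Delta$ terms each of probability $\le \eps^{200}$, contributing $\le \Delta\eps^{200} \le \eps^{100}$ per optimal set; or $S_k$ is heavy, and I bound the sum over $i \in S_k$ of the relevant probabilities by noting each such event forces some heavy non-optimal $S_j$ to misfire, again with probability $\le \eps^{200}$, and there are at most $n$ such sets but — more carefully — I should re-charge to avoid an $n$ factor.

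The main obstacle I anticipate is exactly this last re-charging: making the charging argument genuinely injective (or $O(1)$-to-one) so that the total expected cost comes out as $\eps^{10}\card{\optSC}$ rather than picking up a factor of $m$ or $n$. The clean way is probably to charge $S_j$ directly to the optimal set $k(j) \in \optSC$ containing its witness element $i^*(j)$, and then bound, for each fixed $k \in \optSC$, the expected number of false-positive heavy sets charged to $k$. Those are heavy sets $S_j \notin \optSC$ intersecting $S_k$ in a currently-uncovered element; each is distinct and has $b_j = 1$ with probability $\le \eps^{200}$, and — here is the crux — the number of heavy sets intersecting $S_k$ that can be charged to $k$ is at most $\card{S_k}$ if $k$ is light (bounded by $\Delta$) and needs a separate argument (or a refined witness choice, e.g.\ always picking the witness element to be the first uncovered one in a fixed order, so each element is a witness at most once overall) if $k$ is heavy. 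With the "each element witnesses at most one added set" observation, the count charged to $k$ is at most $\card{S_k \cap (\text{uncovered-at-time elements})}$, and taking expectations with the $\eps^{200}$ misfire bound and $\card{S_k}\le$ (total universe size, or using that heavy optimal sets also have their misfire probability $\le\eps^{200}$ which saves an extra factor) yields the $\eps^{10}\card{\optSC}$ bound after absorbing polynomial-in-$\Delta$ factors into the large exponent of $\eps$. I would present it in the form: $\E[\card{J_{FP}}] \le \sum_{k\in\optSC} \eps^{100}\cdot(1+o(1)) \le \eps^{10}\card{\optSC}$, with the per-$k$ bound handled by the two cases (light $k$: at most $\Delta$ witness elements; heavy $k$: the extra $\eps^{200}$ savings on $S_k$'s own misfire absorbs the size).
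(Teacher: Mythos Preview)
Your charging framework matches the paper's: charge each false-positive $j$ via an uncovered witness element to some $k\in\optSC$, note each element witnesses at most one added set, so at most $|S_k|$ sets are charged to any fixed $k$. The light-$k$ case is fine. The gap is the heavy-$k$ case. Your only per-charge probability bound is $\P{b_j=1}\le\eps^{200}$ (from $|S_j|\ge\Delta$), giving $|S_k|\cdot\eps^{200}$ per heavy $k$, which is unbounded in $|S_k|$. Your suggested fix, ``the extra $\eps^{200}$ savings on $S_k$'s own misfire absorbs the size,'' does not work as stated: first, ``$j$ charges $k$'' does \emph{not} force $k\notin\learnedSolnSc$ when $|S_k|\le|S_j|$ (then $k$ is processed after $j$ in the decreasing-size loop); second, even when it does, a flat extra $\eps^{200}$ still leaves $|S_k|\cdot\eps^{400}$, which is again unbounded in $|S_k|$.

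The paper's resolution is a case split on $|S_k|$ versus $|S_j|$, obtaining in \emph{both} cases the bound $\exp\bigl(-2\max(|S_k|,\Delta)\,\eps^2\bigr)$ on the charge probability. If $|S_k|\le|S_j|$, it keeps the finer bound $\P{b_j=1}\le\exp(-2|S_j|\eps^2)\le\exp(-2|S_k|\eps^2)$ (you discarded this by immediately coarsening to $\eps^{200}$). If $|S_k|>|S_j|$, then since sets are processed in decreasing size order, $k$ was processed before $j$; the witness element in $S_k$ being still uncovered forces $k\notin\learnedSolnSc$, a false-negative event for a heavy optimal set with probability $\le 2\exp(-2|S_k|\eps^2)$ by \cref{lemma:incorrect-prob-set-cover-prediction}. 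Now the per-$k$ expected count is at most $|S_k|\cdot\exp\bigl(-2\max(|S_k|,\Delta)\,\eps^2\bigr)\le\Delta\exp(-2\Delta\eps^2)\le\eps^{10}$, using that $x\mapsto x\exp(-2x\eps^2)$ is decreasing for $x\ge1/(2\eps^2)$. The idea you were missing is that the probability bound must scale as $\exp(-2|S_k|\eps^2)$, not merely $\eps^{200}$, so that it cancels the $|S_k|$ factor.
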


\begin{lemma}[False Negative Neighbors]
    \label{lemma:expected-error-of-false-nagative-heavy-sets}
    Define $\optScHeavy \doteq \set{ j \in \optSC : \card{S_j} \ge \Delta}$, and $J_{FN} \doteq \optScHeavy \setminus \learnedSolnSc$.
    Then 
    \[ 
        \E\left[ \sum_{j \in J_{FN}} \card{S_j} \right] \le \eps^{10} \cdot \card{\optScHeavy}.
    \]
\end{lemma}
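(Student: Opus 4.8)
The plan is to charge each heavy optimal set $j \in \optScHeavy$ that the algorithm fails to pick to a source of error whose total size is small. A set $j \in \optScHeavy$ is left out of $\learnedSolnSc$ for exactly one of two reasons: (i) its majority vote evaluates to $b_j = 0$; or (ii) $b_j = 1$ but, by the time $j$ is processed (sets are handled in decreasing order of size), we already have $S_j \subseteq \bigcup_{j' \in \learnedSolnSc} S_{j'}$. Accordingly I would split $\sum_{j \in J_{FN}} \card{S_j} = A + B$, where $A$ collects the type-(i) sets and $B$ the type-(ii) sets, and bound $\E[A]$ and $\E[B]$ separately.

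For $\E[A]$: since $j \in \optSC$, the bits $\{b_j(i)\}_{i \in S_j}$ are i.i.d.\ $\mathrm{Bernoulli}(1/2 + \eps)$, so by Hoeffding's inequality $\Pr[b_j = 0] = \Pr\!\left[\sum_{i \in S_j} b_j(i) \le \card{S_j}/2\right] \le \exp(-2\eps^2 \card{S_j})$. Since $\card{S_j} \ge \Delta = 100\ln(1/\eps)/\eps^2 \ge 1/(2\eps^2)$ and $t \mapsto t\exp(-2\eps^2 t)$ is non-increasing on $[1/(2\eps^2),\infty)$, we get $\card{S_j}\cdot\Pr[b_j = 0] \le \Delta\exp(-2\eps^2\Delta) = 100\ln(1/\eps)\,\eps^{198}$, which is far below $\tfrac12\eps^{10}$ for small $\eps$. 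Summing over the at most $\card{\optScHeavy}$ relevant sets $j$ and using linearity of expectation gives $\E[A] \le \tfrac12\eps^{10}\card{\optScHeavy}$.

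The main obstacle is $\E[B]$, for which I would exploit that $\optSC$ is a minimum, hence minimal, cover: every $S_j$ with $j \in \optSC$ contains a \emph{private} element $e_j$ belonging to no other set of $\optSC$. If $j$ is skipped for reason (ii), then $e_j$ is covered by some already-selected $j' \in \learnedSolnSc$; since $j' \ne j$ and $e_j$ lies in no other set of $\optSC$, necessarily $j' \notin \optSC$. Hence every type-(ii) set $j$ is witnessed by a false-positive heavy set $j^* = j^*(j)$ with $b_{j^*} = 1$, $e_j \in S_{j^*}$, and $\card{S_{j^*}} \ge \card{S_j}$ (as $j^*$ is processed before $j$). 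I would then run a charging argument parallel to \cref{lemma:expected-number-of-false-positive-heavy-sets}: distinct type-(ii) sets carry distinct private elements $e_j \in S_{j^*}$, so at most $\card{S_{j^*}}$ of them are witnessed by a fixed $j^*$, and each contributes at most $\card{S_{j^*}}$ to $B$; combining this with $\Pr[j^* \in \learnedSolnSc] \le \Pr[b_{j^*} = 1] \le \exp(-2\eps^2\card{S_{j^*}})$ and the fact that $t \mapsto t^2\exp(-2\eps^2 t)$ is minuscule for $t \ge \Delta$, and then controlling the number of relevant false positives via \cref{lemma:expected-number-of-false-positive-heavy-sets} (each selected false positive must cover a fresh element, which is what lets its count be tied back to $\card{\optSC} \ge \card{\optScHeavy}$), yields $\E[B] \le \tfrac12\eps^{10}\card{\optScHeavy}$ after absorbing constants into the exponents. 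Adding the two bounds proves the lemma. I expect reason (i) to be entirely routine concentration, and reason (ii) — in particular, ruling out that one badly mispredicted huge set renders many heavy optimal sets redundant — to be the delicate point; a useful sanity check is that type-(ii) sets are already fully covered and hence never leave an element uncovered for the final $\ln m$-approximation step, so the quantity $\card{S_j}$ attributed to them in the statement is deliberately generous.
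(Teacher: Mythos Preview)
Your case-(i) bound is exactly the paper's argument. The divergence is at case (ii), where the paper takes a much shorter route: instead of a global charging scheme, it proves a \emph{per-set} probability bound $\Pr[j \in J_{FN}] \le 2\exp(-2\eps^2\card{S_j})$ (packaged as \cref{lemma:incorrect-prob-set-cover-prediction}) that handles cases (i) and (ii) uniformly, so the lemma follows from the very same monotonicity-and-sum computation you already carry out for case (i). For case (ii) the paper uses precisely your private-element observation --- since $\optSC$ is minimal, $S_j$ has a private element; if $j$ is skipped while $b_j = 1$ then some $j' \notin \optSC$ with $\card{S_{j'}} \ge \card{S_j}$ was selected, hence $b_{j'} = 1$ --- and directly bounds $\Pr[\text{case (ii) for } j] \le \Pr[b_{j'} = 1] \le \exp(-2\eps^2\card{S_{j'}}) \le \exp(-2\eps^2\card{S_j})$. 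No aggregation over false positives is needed at all.

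Your global charging argument, by contrast, does not close as written. After charging each type-(ii) set $j$ to its witness $j^*$ you arrive at $B \le \sum_{j^* \in J_{FP}} \card{S_{j^*}}^2$ and need $\E\bigl[\sum_{j^* \in J_{FP}} \card{S_{j^*}}^2\bigr] \le \tfrac12\eps^{10}\card{\optScHeavy}$. But \cref{lemma:expected-number-of-false-positive-heavy-sets} bounds only $\E[\card{J_{FP}}]$; it says nothing about a size-squared weighted sum. The relevant outer sum runs over all heavy $j^* \notin \optSC$, of which there may be $\Omega(n)$, so $\sum_{j^*} \card{S_{j^*}}^2 \exp(-2\eps^2\card{S_{j^*}})$ is not controlled by $\card{\optScHeavy}$ alone. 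The ``fresh element'' reasoning you invoke is exactly what ties the \emph{cardinality} $\card{J_{FP}}$ back to $\card{\optSC}$ in the proof of \cref{lemma:expected-number-of-false-positive-heavy-sets}, but it gives no handle on $\sum_{j^* \in J_{FP}} \card{S_{j^*}}^2$. Your closing sanity check is on point --- type-(ii) sets are already covered by $\learnedSolnSc$ and are irrelevant for the downstream \cref{lemma:size-of-optimal-cover-of-elements-covered-by-light-sets,lemma:set-cover-fix-size} --- but the lemma as stated still demands the full bound, and for that the paper's per-$j$ route is both shorter and avoids this difficulty.
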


\begin{lemma}
    \label{lemma:size-of-optimal-cover-of-elements-covered-by-light-sets}
    Let $\optScApproxSc$ be the optimal set cover for $\paren{\elemInApproxSc, \scLight}$.
    Then 
    \begin{equation}
        \E \left[ \card{\optScApproxSc} \right]
            \le \optScLight + \eps^{10} \cdot \card{\optScHeavy}.
    \end{equation}
\end{lemma}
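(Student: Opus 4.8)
The plan is to exhibit an explicit (random) set cover of $(\elemInApproxSc, \scLight)$ whose expected size is at most $\optScLight + \eps^{10} \cdot \card{\optScHeavy}$; since $\optScApproxSc$ is the optimal such cover, this suffices. The natural candidate is built from the optimal cover $\optSC$: take all the light optimal sets, i.e. $\optScLight \doteq \{ j \in \optSC : \card{S_j} < \Delta \}$, together with a correction term that handles those elements of $\elemInApproxSc$ that $\optSC$ only covers using heavy sets.

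Here is the decomposition I would carry out. By definition $\elemInApproxSc = \bigl( \cup_{j \in \scLight} S_j \bigr) \setminus \elemInlearnedSc$, so every element $i \in \elemInApproxSc$ lies in some light set and is not yet covered by $\learnedSolnSc$. Split $\elemInApproxSc$ into (a) elements covered by some set in $\optScLight$, and (b) elements of $\elemInApproxSc$ covered by $\optSC$ only through sets in $\optScHeavy$. Part (a) is covered by $\optScLight$ at cost $\card{\optScLight}$. For part (b): if $i \in \elemInApproxSc$ is covered in $\optSC$ only by heavy sets, pick one such heavy set $S_j$ with $j \in \optScHeavy$; since $i \notin \elemInlearnedSc$, the set $S_j$ was \emph{not} added to $\learnedSolnSc$, which (inspecting Line 5 of \cref{alg:learned_sc}) means either $b_j = 0$ (a false negative on $j$), or $b_j = 1$ but $S_j \subseteq \cup_{j' \in \learnedSolnSc, \text{ processed before } j} S_{j'}$ — but in the latter case $i$ would already be in $\elemInlearnedSc$, a contradiction. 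Hence $j \in J_{FN} \doteq \optScHeavy \setminus \learnedSolnSc$. So all of part (b) is contained in $\cup_{j \in J_{FN}} S_j$, and can be covered by replacing each such heavy set $S_j$ with $\card{S_j}$ singleton-style light sets (each element of a small set is itself in light sets, since $i$ was assumed to be in some light set — more carefully, for each $i$ in part (b) we just need \emph{one} light set containing it, and such a set exists because $i \in \elemInApproxSc \subseteq \cup_{j \in \scLight} S_j$). Thus part (b) is coverable by at most $\sum_{j \in J_{FN}} \card{S_j}$ light sets.

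Combining, $\optScApproxSc$ has size at most $\card{\optScLight} + \sum_{j \in J_{FN}} \card{S_j}$ deterministically, and taking expectations and invoking \cref{lemma:expected-error-of-false-nagative-heavy-sets} gives
\begin{equation*}
    \E\left[ \card{\optScApproxSc} \right] \le \card{\optScLight} + \E\left[ \sum_{j \in J_{FN}} \card{S_j} \right] \le \optScLight + \eps^{10} \cdot \card{\optScHeavy},
\end{equation*}
as desired. The one genuinely delicate point — and the step I expect to need the most care — is the argument that every element in part (b) is witnessed by a set in $J_{FN}$: one must rule out the possibility that the heavy optimal set covering $i$ was skipped in Line 5 merely because it was redundant \emph{at the time it was processed}, by using that $i$ itself is still uncovered (i.e. $i \notin \elemInlearnedSc$) to force $j \in J_{FN}$. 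Everything else is bookkeeping and an application of the already-established \cref{lemma:expected-error-of-false-nagative-heavy-sets}.
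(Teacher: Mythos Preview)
Your proposal is correct and follows essentially the same route as the paper: exhibit a feasible cover of $(\elemInApproxSc,\scLight)$ of size $\card{\optScLight}+\sum_{j\in J_{FN}}\card{S_j}$, then take expectations and invoke \cref{lemma:expected-error-of-false-nagative-heavy-sets}. One small simplification: the ``delicate point'' you flag is not actually delicate, since $J_{FN}$ is \emph{defined} as $\optScHeavy\setminus\learnedSolnSc$; so once you observe that $i\in S_j$ and $i\notin\elemInlearnedSc$ force $j\notin\learnedSolnSc$, you immediately have $j\in J_{FN}$ without any case analysis on whether $b_j=0$ or $S_j$ was skipped for redundancy (and \cref{lemma:expected-error-of-false-nagative-heavy-sets} already accounts for both cases).
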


\begin{lemma}
    \label{lemma:set-cover-fix-size}
    Define $\elemInFixSc \doteq [m] \setminus \cup_{j \in \learnedSolnSc \cup \approxSolnSc} S_j$. 
    \begin{equation}
        \E \left[ \card{\elemInFixSc} \right]
            \le \eps^{10} \cdot \card{\optScHeavy}.
    \end{equation}
\end{lemma}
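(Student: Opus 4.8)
The plan is to reduce this to Lemma~\ref{lemma:expected-error-of-false-nagative-heavy-sets} via a purely structural set-inclusion argument, so that no new probabilistic estimate is required. \emph{First}, I would argue that $\elemInFixSc \subseteq [m]\setminus(\elemInlearnedSc \cup \elemInApproxSc)$: since $\approxSolnSc$ is by construction a valid set cover of the instance $(\elemInApproxSc, \scLight)$, it covers every element of $\elemInApproxSc$, and $\cup_{j\in\learnedSolnSc}S_j=\elemInlearnedSc$ by definition, so $\cup_{j\in\learnedSolnSc\cup\approxSolnSc}S_j \supseteq \elemInlearnedSc\cup\elemInApproxSc$; taking complements gives the inclusion. (Here one must be careful that the ground set of the $\approxSolnSc$ instance is $\elemInApproxSc$, not $[m]$, so only those elements are guaranteed covered by $\approxSolnSc$.)

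\emph{Second} — the crux — I would show that every $i \in [m]\setminus(\elemInlearnedSc\cup\elemInApproxSc)$ lies in $S_{j^*}$ for some $j^* \in J_{FN} \doteq \optScHeavy\setminus\learnedSolnSc$. Unwinding $\elemInApproxSc = \paren{\cup_{j\in\scLight}S_j}\setminus\elemInlearnedSc$ together with $i\notin\elemInlearnedSc$ forces $i\notin\cup_{j\in\scLight}S_j$, i.e.\ $i$ belongs to no light subset. Since $\optSC$ covers $i$, the optimal subset $j^*$ containing $i$ must therefore be heavy, $j^*\in\optScHeavy$; and $j^*\notin\learnedSolnSc$, since otherwise $i\in S_{j^*}\subseteq\elemInlearnedSc$. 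Hence $j^*\in J_{FN}$ and $i\in S_{j^*}$, as claimed. Combining with the first step, $\elemInFixSc\subseteq\bigcup_{j\in J_{FN}}S_j$, so pointwise $\card{\elemInFixSc}\le\sum_{j\in J_{FN}}\card{S_j}$; taking expectations and invoking Lemma~\ref{lemma:expected-error-of-false-nagative-heavy-sets} yields $\E[\card{\elemInFixSc}]\le\eps^{10}\cdot\card{\optScHeavy}$.

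The main obstacle is making the second step airtight: one has to rule out that the algorithm's guard $S_j \nsubseteq \cup_{j'\in\learnedSolnSc}S_{j'}$ silently drops a heavy optimal subset containing some $i\in\elemInFixSc$ while leaving $i$ uncovered — but this cannot happen, since if that guard fired for $j^*$ then $S_{j^*}$ (hence $i$) was already covered, contradicting $i\in\elemInFixSc$. As an alternative that avoids citing Lemma~\ref{lemma:expected-error-of-false-nagative-heavy-sets}, one can finish directly: each $i\in\elemInFixSc$ is charged to a heavy optimal $j^*$ with $b_{j^*}=0$ (if $b_{j^*}=1$ the guard argument above shows $i$ already covered), giving $\card{\elemInFixSc}\le\sum_{j\in\optScHeavy}\card{S_j}\,\indicator{b_j=0}$; then Hoeffding gives $\P{b_j=0}\le\exp(-2\eps^2\card{S_j})$, and since $d\mapsto d\,e^{-2\eps^2 d}$ is decreasing for $d\ge\Delta\ge 1/(2\eps^2)$, each summand is at most $\Delta e^{-2\eps^2\Delta}=\Delta\eps^{200}\le\eps^{10}$, and summing over $\optScHeavy$ gives the bound.
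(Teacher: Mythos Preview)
Your proposal is correct and follows essentially the same route as the paper: establish $\elemInFixSc \subseteq \bigcup_{j\in J_{FN}} S_j$, bound $\card{\elemInFixSc}$ by $\sum_{j\in J_{FN}}\card{S_j}$, then invoke Lemma~\ref{lemma:expected-error-of-false-nagative-heavy-sets}. Your argument is in fact more careful than the paper's, which silently writes $\elemInFixSc = [m]\setminus(\elemInlearnedSc\cup\elemInApproxSc)$ as an equality when, as you note, only the inclusion $\subseteq$ holds (and only the inclusion is needed); your alternative direct Hoeffding argument is also valid but not used in the paper.
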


Before we prove \cref{lemma:expected-number-of-false-positive-heavy-sets,lemma:expected-error-of-false-nagative-heavy-sets,lemma:size-of-optimal-cover-of-elements-covered-by-light-sets,lemma:set-cover-fix-size}, we apply them to finish the proof of our main result~\cref{thm:Set-Cover-Main}.
\begin{proof}[Proof of \cref{thm:Set-Cover-Main}]
    Observe that 
    \begin{align*}
        \card{J} 
            &\le \card{\learnedSolnSc} + \card{\approxSolnSc} + \card{\fixSolnSc} 
            \le \card{\optScHeavy} + \card{\learnedSolnSc \setminus \optScHeavy} + \card{\approxSolnSc} + \card{\fixSolnSc}.
    \end{align*}
    Bounding the second term with \cref{lemma:expected-error-of-false-nagative-heavy-sets}, 
    the third term with \cref{lemma:size-of-optimal-cover-of-elements-covered-by-light-sets} and that $\approxSolnSc$ being an $\paren{1 + \log \Delta}$ approximate solution, 
    and the last term with \cref{lemma:set-cover-fix-size} gives 
    \begin{align*}
        \card{J}
            &\le \card{\optScHeavy} + \eps^{10} \cdot \card{\optSC} 
             + \PAREN{1 + \ln \Delta} \PAREN{
                \optScLight + \eps^{10} \cdot \card{\optScHeavy}
            } + \eps^{10} \cdot \card{\optScHeavy} 
            \le \PAREN{1 + \eps^{10} + \ln \Delta} \cdot \card{\optSC}.
    \end{align*}
\end{proof}

\subsection*{Proof of \cref{lemma:expected-number-of-false-positive-heavy-sets,lemma:expected-error-of-false-nagative-heavy-sets,lemma:size-of-optimal-cover-of-elements-covered-by-light-sets,lemma:set-cover-fix-size}
}

In order to prove these four lemmas, we require the following technical result.

\begin{lemma}[Incorrect predictions]
    \label{lemma:incorrect-prob-set-cover-prediction}
    For each $j \in [n]$ s.t., $\card{S_j} \ge \Delta$, it holds that 
    \begin{equation}
        \P{\indicator{j \in \learnedSolnSc} \neq \indicator{j \in \optSC}}
            \le 2 \cdot \exp \PAREN{
                - 2 \cdot \card{S_j} \cdot \eps^2
            }.
    \end{equation}
\end{lemma}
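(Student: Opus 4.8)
The plan is to reduce the claim to a two-sided Hoeffding bound on the majority vote $b_j \doteq \text{Majority}(\{b_j(i)\}_{i \in S_j})$, splitting into cases according to whether $j \in \optSC$. Write $N \doteq \card{S_j} \ge \Delta$ and $Y \doteq \sum_{i \in S_j} b_j(i)$, so that $b_j = 1$ exactly when $Y > N/2$ (ties broken to $0$). The bits $\{b_j(i)\}_{i \in S_j}$ are independent, and they are independent of all bits used for the sets processed before $j$ in \cref{alg:learned_sc}.

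If $j \notin \optSC$, then line~5 of \cref{alg:learned_sc} puts $j$ into $\learnedSolnSc$ only when $b_j = 1$, so the error event is contained in $\{b_j = 1\}$. Here each $b_j(i)$ has mean $1/2 - \eps$, so $\E{Y} = N(1/2 - \eps)$ and $\{b_j = 1\} = \{Y - \E{Y} > N\eps\}$; Hoeffding's inequality gives $\P{b_j = 1} \le \exp(-2 N \eps^2)$, already stronger than the claim. If $j \in \optSC$, the error event is $\{j \notin \learnedSolnSc\}$, which by line~5 occurs only if (i) $b_j = 0$, or (ii) $b_j = 1$ but $S_j \subseteq \bigcup_{j' \in \learnedSolnSc} S_{j'}$ at the moment $j$ is considered. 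Case~(i) is symmetric to the above: now $\E{Y} = N(1/2 + \eps)$ and $\{b_j = 0\} = \{Y - \E{Y} \le -N\eps\}$, so $\P{b_j = 0} \le \exp(-2 N \eps^2)$ by Hoeffding. For case~(ii) I would invoke minimality of $\optSC$: if every set currently in $\learnedSolnSc$ also belonged to $\optSC$, then, since $j$ is not yet in $\learnedSolnSc$, the family $\optSC \setminus \{j\}$ would still cover $S_j$ and hence all of $[m]$, contradicting optimality; thus case~(ii) forces some earlier-processed set --- necessarily of size $\ge N$, by the sorting in line~2 --- to be a false positive, an event controlled by the very same $\exp(-2 N \eps^2)$ tail. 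Adding up the at most two relevant events yields the factor $2$ in the statement.

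The main obstacle is precisely exception~(ii): the Hoeffding argument only controls whether $b_j$ agrees with $\indicator{j \in \optSC}$, whereas the lemma speaks about membership in $\learnedSolnSc$, and the ``already covered'' branch decouples the two. I expect the cleanest resolution is the minimality argument above (together with a short union bound over the earlier false positives, absorbed into constants), but one may alternatively note that case~(ii) is harmless for every downstream use of this lemma: whenever $S_j$ is pre-covered, all of its elements are already accounted for by $\learnedSolnSc$, so such a $j$ contributes nothing to the uncovered-element counts bounded in \cref{lemma:expected-error-of-false-nagative-heavy-sets,lemma:size-of-optimal-cover-of-elements-covered-by-light-sets,lemma:set-cover-fix-size}. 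Either way, the technical core is just a routine Hoeffding estimate on $\Theta(N)$ independent $\eps$-biased bits, which in fact only requires bounded independence, as in the remark following \cref{lemma:False-Positive-Prediction}.
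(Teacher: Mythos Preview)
Your approach is essentially identical to the paper's: split into $j \notin \optSC$ versus $j \in \optSC$, bound the majority-vote error $\Pr[b_j \neq \indicator{j \in \optSC}]$ by Hoeffding over the $\card{S_j}$ independent $\eps$-biased bits, and for the ``already covered'' sub-case~(ii) invoke minimality of $\optSC$ to exhibit an earlier-processed false positive $j'$ with $\card{S_{j'}} \ge \card{S_j}$ and $b_{j'}=1$, contributing a second $\exp(-2\card{S_j}\eps^2)$ term and hence the factor~$2$.

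On the obstacle you flag, you and the paper are equally informal: the witness $j'$ depends on the randomness, and your proposed ``short union bound absorbed into constants'' does not actually go through in general, since arbitrarily many large non-optimal sets may contain the critical element $i^* \in S_j \setminus \bigcup_{j'' \in \optSC \setminus \{j\}} S_{j''}$; the paper simply writes $\Pr[b_{j'}=1]$ for a random $j'$, which is the same gap. Your second suggestion is the clean resolution and goes beyond what the paper makes explicit: case~(ii) is harmless everywhere the lemma is invoked. A pre-covered $S_j$ contributes no uncovered elements, so \cref{lemma:size-of-optimal-cover-of-elements-covered-by-light-sets,lemma:set-cover-fix-size} (and hence \cref{lemma:expected-error-of-false-nagative-heavy-sets} as used there) only need the case-(i) bound $\Pr[b_j=0]\le\exp(-2\card{S_j}\eps^2)$; and in case~2 of the proof of \cref{lemma:expected-number-of-false-positive-heavy-sets}, the event $X_{j,j'}=1$ forces some $i\in S_{j'}$ to remain uncovered after $j'$ was processed, which rules out sub-case~(ii) for $j'$ and again reduces to the plain Hoeffding bound $\Pr[b_{j'}=0]\le\exp(-2\card{S_{j'}}\eps^2)$.
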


\begin{proof}[Proof of \cref{lemma:incorrect-prob-set-cover-prediction}]
    For each $i \in S_j$, let $X_i \in \set{0, 1}$ be the prediction by element $i$ of the event $j \in \optSC$.

    We consider two cases.
    If $j \notin \optSC$, then $b_j \neq \indicator{j \in \optSC}$ if $\sum_{i \in S_j} X_i > \card{S_j} / 2$.
    Note that $\mathbb{E}\left[ \sum_{i \in S_j} X_i \right] = \card{S_j} \cdot (1 / 2 - \eps)$. 
    By Hoeffding's inequality, when $\card{S_j} \ge \Delta$:
    \begin{equation*}
        \P{ b_j \neq \indicator{j \in \optSC} }
        =
        \P{ \sum_{i \in S_j} X_i > \card{S_j} / 2 }
            \le \exp \left( 
                - 2 \cdot \frac{
                    \card{S_j}^2 \cdot \eps^2
                }{ \card{S_j} }
            \right)
            = \exp \PAREN{
                - 2 \cdot \card{S_j} \cdot \eps^2
            }.
    \end{equation*}
    Therefore, 
    \begin{equation}
        \P{\indicator{j \in \learnedSolnSc} \neq \indicator{j \in \optSC}}
                = \P{j \in \learnedSolnSc}
                \le \P{b_j = 1}
                \le \exp \PAREN{
                - 2 \cdot \card{S_j} \cdot \eps^2
                }.
    \end{equation}
    
    If $j \in \optSC$, then $b_j \neq \indicator{j \in \optSC}$ if $\sum_{i \in S_j} X_i \le \card{S_j} / 2$.
    Note that $\mathbb{E}\left[ \sum_{i \in S_j} X_i \right] = \card{S_j} \cdot (1 / 2 + \eps)$. 
    By Hoeffding's inequality, when $\card{S_j} \ge \Delta$:
    \begin{equation*}
        \P{ b_j \neq \indicator{j \in \optSC} }
        =
        \P{ \sum_{i \in S_j} X_i \le \card{S_j} / 2 }
            \le \exp \left( 
                - 2 \cdot \frac{
                    \card{S_j}^2 \cdot \eps^2
                }{ \card{S_j} }
            \right)
            = \exp \PAREN{
                - 2 \cdot \card{S_j} \cdot \eps^2
            }.
    \end{equation*}    
    
    Observe that 
    \begin{align}
        \P{\indicator{j \in \learnedSolnSc} \neq \indicator{j \in \optSC}}
                &= \P{j \notin \learnedSolnSc} \\
                &= \P{j \notin \learnedSolnSc \wedge b_j = 0} + \P{j \notin \learnedSolnSc \wedge b_j = 1}.
    \end{align}
    The former probability is bounded by 
    $$
        \P{j \notin \learnedSolnSc \wedge b_j = 0} = \P{b_j = 0} \le \exp \PAREN{
                - 2 \cdot \card{S_j} \cdot \eps^2
            }.
    $$
    When $b_j = 1$, the algorithm decides not to add $j$ to $\learnedSolnSc$ (therefore $j \notin \learnedSolnSc$) if $S_j \subseteq \bigcup_{j' \in \learnedSolnSc} S_{j'}$.
    However, for the optimal cover $\optSC$, it holds that $S_j \nsubseteq \cup_{j' \in \optSC \setminus \set{j}} S_{j'}$, as otherwise we can remove $j$ from $\optSC$.
    Therefore, $\learnedSolnSc \setminus \optSC \neq \varnothing$.
    Let $j'$ be some index from $\learnedSolnSc \setminus \optSC$.
    It is added to $\learnedSolnSc$ only if $b_{j'} = 1$.
    As $j'$ is processed before $j$, we have $\card{S_{j'}} \ge \card{S_j}$.
    It concludes that 
    \begin{align*}
        \P{j \notin \learnedSolnSc \wedge b_j = 1} 
            &\le \P{b_{j'} = 1} 
            \le \exp \PAREN{
                - 2 \cdot \card{S_{j'}} \cdot \eps^2
            } 
            \le \P{b_{j} = 1} \le \exp \PAREN{
                - 2 \cdot \card{S_j} \cdot \eps^2
            }.
    \end{align*}
\end{proof}

We can now provide the proofs of~\cref{lemma:expected-number-of-false-positive-heavy-sets,lemma:expected-error-of-false-nagative-heavy-sets,lemma:size-of-optimal-cover-of-elements-covered-by-light-sets,lemma:set-cover-fix-size}.

\begin{proof}[Proof of Lemma~\ref{lemma:expected-number-of-false-positive-heavy-sets}]
    
    Define $\mathcal{B} \doteq \{ j \in [n] \setminus \optSC : \card{S_j} \ge \Delta \}$.
    We are going to charge the cost of selecting sets from $\mathcal{B}$, to sets in $\optSC$, as follows:
    Consider a fix $j \in \mathcal{B}$ and assume that $j$ is selected into $\learnedSolnSc$.
    Since $j$ is selected only if $S_j \nsubseteq \bigcup_{j' \in \learnedSolnSc} S_{j'}$ (at the time $j$ is selected), 
    there exists $i \in S_j \setminus \PAREN{ \bigcup_{j' \in \learnedSolnSc} S_{j'} }$.
    
    Further, there exists some $S_{j'}$ for $j' \in \optSC$ which covers element $i$.
    It is easy to see $j' \notin \bigcup_{j' \in \learnedSolnSc} S_{j'}$ when $j$ is selected.
    We will charge the cost selecting set $S_j$ to $S_{j'}$, by setting $X_{j, j'} = 1$.
    Since \cref{alg:learned_sc} selects sets into $\learnedSolnSc$ in decreasing order according to their sizes, either of one the following happens: 
    \begin{enumerate}
        \item $\card{S_{j'}} \le \card{S_j}$, then $\P{X_{j, j'} = 1} \le \P{b_j = 1} 
            \le \exp \PAREN{
            - 2 \cdot \card{S_j} \cdot \eps^2 
            }
            \le \exp \PAREN{
                - 2 \cdot \Delta \cdot \eps^2 
            }.
        $ 
        
        \item $\card{S_{j'}} > \card{S_j}$ and $j' \notin \learnedSolnSc$, then $\P{X_{j, j'} = 1} \le \P{j' \notin \learnedSolnSc} \le \exp \PAREN{
            - 2 \cdot \card{S_{j'}} \cdot \eps^2 
        }.$
    \end{enumerate}
    To make $X_{j, j'}$ well defined, we set $X_{j, j'} = 0$ for all other cases.
    Combining both cases, we have 
    \begin{equation}
        \P{X_{j, j'} = 1} \le \exp \PAREN{
            - 2 \cdot \max \set{ \card{S_j}, \card{S_{j'}}, \Delta } \cdot \eps^2 
        }.
    \end{equation}
    Finally, since, 
    \begin{equation*}
        \card{J_{FP}}
        = \sum_{j \in \mathcal{B}, j' \in \optSC} X_{j, j'}
        = \sum_{j' \in \optSC} \PAREN{ \sum_{j \in \mathcal{B}} X_{j, j'}}.
    \end{equation*}
    we conclude that 
    \begin{align}
        \E \left[ \card{J_{FP}} \right] 
            \le  &\sum_{j' \in \optSC} \card{S_{j'}} \cdot \exp \PAREN{
                - 2 \cdot \max \set{ \card{S_j}, \card{S_{j'}}, \Delta } \cdot \eps^2 
            } 
            \le  \card{\optSC} \cdot \eps^{10}.
    \end{align}
\end{proof}

\begin{proof}[Proof of \cref{lemma:expected-error-of-false-nagative-heavy-sets}]
    Consider a fixed $j \in \optScHeavy$.
    Via \cref{lemma:incorrect-prob-set-cover-prediction}, it holds that 
    \begin{align*}
        \E[ \indicator{j \in J_{FN}} \cdot \card{S_j} ]
            = \card{S_j} \cdot \P{\indicator{j \in \learnedSolnSc} \neq \indicator{j \in \optSC}}
            \le \card{S_j} \cdot \exp \PAREN{
                - 2 \cdot \card{S_j} \cdot \eps^2
            } 
            \le \Delta \cdot \exp \PAREN{
                - 2 \cdot \Delta \cdot \eps^2
            },
    \end{align*}
    where the final inequality holds since $y = - 2 \cdot x \cdot \eps^2 + \ln x$ decreases when $x \ge 1 / (2 \cdot \eps^2)$, and $\card{S_j} \ge \Delta \ge 1 / (2 \cdot \eps^2)$.
    It follows that 
    \begin{equation}
        \E\left[ 
            \sum_{j \in J_{FN}} \card{S_j} 
        \right]
            = \E \left[ 
                \sum_{j \in \optScHeavy} \indicator{j \in J_{FN}} \cdot \card{S_j} 
            \right]
            \le \card{\optScHeavy} \cdot \Delta \cdot \exp \PAREN{
                - 2 \cdot \Delta \cdot \eps^2
            }
            \le \card{\optScHeavy} \cdot \eps^{10},
    \end{equation}
    where the final inequality holds since $\Delta \ge 100  / \eps^2 \cdot \log (1 / \eps)$.
\end{proof}

\begin{proof}[Proof of \cref{lemma:size-of-optimal-cover-of-elements-covered-by-light-sets}]
    Define $\elemInOptScHeavy \doteq \bigcup_{j \in \optScHeavy} S_j$, $\elemInOptScLight \doteq \bigcup_{j \in \optScLight} S_j$.
    Then $\elemInOptScHeavy$ and $\elemInOptScLight$ is a partition of $[m]$.
    Since $\optScHeavy \subseteq \learnedSolnSc \cup J_{FN}$, it holds that 
    \begin{equation}
        [m] \setminus \bigcup_{j \in \learnedSolnSc} S_j
            \subseteq \elemInOptScLight \cup \PAREN{ \cup_{j \in J_{FN}} S_j }.
    \end{equation}
    Since $\optScLight$ covers $\elemInOptScLight$, and $\cup_{j \in J_{FN}} S_j$ can be covered by at most $\sum_{j \in J_{FN}} \card{S_j}$ sets, 
    \begin{equation}
        \card{\optScApproxSc}
            \le \card{\optScLight} + \sum_{j \in J_{FN}} \card{S_j}.
    \end{equation}
    Taking expectation of both sides and applying \cref{lemma:expected-error-of-false-nagative-heavy-sets} finish the proof.
\end{proof}

\begin{proof}[Proof of \cref{lemma:set-cover-fix-size}]
    Since
    \begin{equation}
        \elemInFixSc 
            = [m] \setminus \PAREN{
                 \elemInlearnedSc \cup \elemInApproxSc
            }
            \subseteq \cup_{j \in J_{FN}} S_j,
    \end{equation}
    we have 
    \begin{equation}
        \card{\elemInFixSc} \le \sum_{j \in J_{FN}} \card{S_j}.
    \end{equation}
    Taking expectation of both sides and applying \cref{lemma:expected-error-of-false-nagative-heavy-sets} finish the proof.
\end{proof}

\newpage
\section{Maximum Cut}
\label{sec:maximum-cut}

Let $G= (V, E)$ be a undirected and weighted graph without self loops, and $A \in \R_{\ge 0}^{n \times n}$ be a symmetric adjacent matrix, where $A_{i, j} = w_{i, j} \ge 0$, the weight of edge $(i, j)$ if it exists, and $0$ otherwise. 
The \MaxCut problem looks for the cut with maximum edge weight, and can be formulated as the following maximization problem
\begin{equation}
    \label{opt:max-cut-maximization-problem}
    \max_{x \,\in \set{-1, 1}^n} \, \PAREN{1 / 4} \cdot \sum_{i, j \in [n]} w_{i, j} \cdot \paren{x_i - x_j}^2.
\end{equation}

\begin{definition}[Prediction Model]
    \label{def:prediction-model-max-cut}
  Given an input weight and undirected graph $G$, 
  we fix some optimal solution $\optMaxCut = \paren{\optMaxCut_1, \ldots, \optMaxCut_n} \in \set{-1, 1}^n.$
  Every edge $e = (i, j)$ outputs two bits $(b_i (e), b_j (e))$ of predictions.
  $b_i (e) = \optMaxCut_i$ with probability $1/2 + \eps$ and $- \optMaxCut_i$ otherwise. 
  $b_j (e)$ is similarly sampled and all predictions are i.i.d. across all edges and bits.  
\end{definition}

Our main result is stated as follows.

\begin{theorem}
    \label{thm:approximation-ratio-of-learned-max-cut}
    There is a randomized algorithm for the \MaxCut problem under our prediction model which achieve approximation ratio of $\alpha_{GW} + \tilde{\Omega} (\eps^2)$.
\end{theorem}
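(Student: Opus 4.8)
The plan is to realize the high-degree/low-degree template of \cref{fig:framework} on top of the Goemans--Williamson SDP relaxation~\eqref{opt:max-cut-maximization-problem}, and to output the best of a constant number of candidate cuts. Fix a threshold $\Delta \doteq \Theta(\eps^{-2}\log(1/\eps))$ and split $V$ into heavy vertices $H$ (degree $\ge \Delta$) and light vertices $L = V \setminus H$. For every $v\in H$ define $\hat x_v \doteq \mathrm{Majority}(\{b_v(e)\}_{e \ni v})$; exactly as in the analyses behind \cref{lemma:incorrect-prob-set-cover-prediction} and \cref{thm:learned_mis}, Hoeffding's inequality gives $\Pr[\hat x_v \ne \optMaxCut_v] \le \exp(-2\eps^2\Delta) \le \eps^{10}$, so on heavy--heavy edges the restriction $\hat x|_H$ agrees with the fixed optimum $\optMaxCut$ up to a set of edges of expected total weight $O(\eps^{10} W)$, where $W \doteq \sum_{i,j} w_{i,j}$. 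For light vertices we will only use the weaker fact that a majority over the (at least one) incident edge bits is $\Omega(\eps)$-correlated with $\optMaxCut_v$. The candidates we compare are: $C_0$, a greedy/local-search cut with $\mathrm{val}(C_0)\ge W/2$; $C_1$, the Goemans--Williamson cut from solving~\eqref{opt:max-cut-maximization-problem} as an SDP and rounding with a random hyperplane, so $\mathbb{E}[\mathrm{val}(C_1)]\ge \alpha_{GW}\cdot\mathrm{SDP}\ge\alpha_{GW}\cdot\mathrm{OPT}$; and a prediction-aware cut $C_2$ that sets $\hat x|_H$ on the heavy vertices and, on the light vertices, solves the SDP with the heavy vectors pinned to $\pm e_1$ according to $\hat x|_H$ (so heavy--light edges act as an external field) and rounds the free vectors with a random hyperplane. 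The algorithm returns $\argmax_k \mathrm{val}(C_k)$, so it suffices to show that on every instance some candidate achieves value $(\alpha_{GW}+\tilde{\Omega}(\eps^2))\,\mathrm{OPT}$ in expectation.

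The analysis is a case split on $\mathrm{OPT}$. Write $\mathrm{OPT} = W/2+\gamma$ with $\gamma\ge 0$. If $\gamma \le c_0 W$ for a small absolute constant $c_0 < \tfrac12(\alpha_{GW}^{-1}-1)$, then $\mathrm{val}(C_0)\ge W/2 \ge \tfrac{1}{1+2c_0}\,\mathrm{OPT} \ge (\alpha_{GW}+\Omega(1))\,\mathrm{OPT}$, which beats the target once $\eps$ is small enough. So assume $\gamma \ge c_0 W$, i.e.\ $\mathrm{OPT}=\Theta(W)$; then all additive $O(\eps^{10}W)$ errors are $O(\eps^{10}\mathrm{OPT})$ and negligible against $\tilde{\Omega}(\eps^2)\mathrm{OPT}$. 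Next compare $C_1$ with the target: if $\mathbb{E}[\mathrm{val}(C_1)] \ge (\alpha_{GW}+\tilde{\Omega}(\eps^2))\mathrm{OPT}$ we are done, so we may assume the GW rounding is near-tight against both the SDP and $\mathrm{OPT}$, i.e.\ $\mathrm{SDP} = (1+\tilde{O}(\eps^2))\mathrm{OPT}$ and the rounded cut loses essentially the full factor $\alpha_{GW}$. A tight-instance structural lemma for hyperplane rounding then forces that, outside an edge set of total weight $\tilde{O}(\eps^2)\mathrm{OPT}$, every edge's SDP inner product is pinned near the worst-case angle $\theta^\ast$. Now split this mass by edge type: if an $\Omega(1)$ fraction of $\mathrm{OPT}$ sits on heavy--heavy edges, then $C_2$ attains their optimal contribution exactly (up to the $O(\eps^{10}W)$ recovery error) while $C_1$ attains only $\alpha_{GW}$ of it, so $C_2$ beats $C_1$ by $\Omega(1)\cdot\mathrm{OPT}\gg \tilde{\Omega}(\eps^2)\mathrm{OPT}$; otherwise the bulk of $\mathrm{OPT}$ lies on light-incident edges, where we use the $\Omega(\eps)$-correlation of the light majorities: the cut ``$\hat x$ everywhere'' cuts each optimally-cut light-incident edge with probability $\tfrac12+\Omega(\eps^2)$ and each optimally-uncut one with probability $\tfrac12-\Omega(\eps^2)$, beating a random cut by $\Omega(\eps^2)\cdot(2\mathrm{OPT}-W)=\Omega(\eps^2)\gamma=\Omega(\eps^2)\mathrm{OPT}$ on that edge mass; the random-hyperplane interpolation between this prediction cut and $C_1$ on the light part that $C_2$ performs via its pinned SDP then yields $\mathbb{E}[\mathrm{val}(C_2)]\ge (\alpha_{GW}+\tilde{\Omega}(\eps^2))\mathrm{OPT}$.

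The main obstacle is making this last step precise. Concretely one must (i) prove the tight-instance structural lemma quantitatively, bounding the weight of the exceptional edge set by $\tilde{O}(\eps^2)\mathrm{OPT}$; (ii) reconcile the global SDP value with its restriction to each edge type, since the SDP is optimized jointly and $\mathrm{SDP}_{LL}$ need not dominate $\mathrm{OPT}_{LL}$ --- this needs a charging argument between the heavy and light parts, analogous to the vertex-cover charging of \cref{sec:vertex-cover}; and (iii) analyze the pinned-SDP hyperplane rounding of $C_2$ so that it simultaneously inherits the $\alpha_{GW}$ guarantee on the ``good'' edges and the $\Omega(\eps^2)$ boost from the predictions on the worst-angle edges, without losing on heavy--light edges whose optimal cut value happens to be below $W_{HL}/2$. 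It is exactly the passage from the per-edge correlation $4\eps^2$ of $b_u(e)\,b_v(e)$ with $\optMaxCut_u\optMaxCut_v$ to a global gain of the same order $\tilde{\Omega}(\eps^2)$ --- rather than its square $\eps^4$, as in the vertex-prediction setting of~\citet{cohen2024learning} --- that relies on the fresh per-edge independence of our prediction model, and this is the crux of the improvement.
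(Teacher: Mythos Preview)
Your proposal is not a proof but a plan with three explicitly acknowledged gaps (your items (i)--(iii)), and each of those gaps is substantial. In particular, the ``tight-instance structural lemma'' you invoke is not standard in the form you need: near-tightness of Goemans--Williamson does \emph{not} pin almost all the edge mass to angle $\theta^\ast$ up to a $\tilde{O}(\eps^2)\mathrm{OPT}$ exceptional set, because the angle-by-angle ratio $\theta/\pi \cdot (1-\cos\theta)^{-1}$ is flat to second order around $\theta^\ast$, so a $\tilde{O}(\eps^2)$ slack in the GW ratio only constrains a second moment of the angle distribution, not an $L_1$ deviation. Likewise, your pinned-SDP analysis for $C_2$ has to simultaneously inherit the $\alpha_{GW}$ guarantee on the light--light mass and the $\Omega(\eps^2)$ boost from predictions, and you give no mechanism for this; the ``random-hyperplane interpolation'' you allude to does not exist in the literature in that form.

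The paper's proof takes a completely different route and avoids all of these difficulties. It does \emph{not} work on top of the GW SDP at all. Instead it classifies the input as $(\Delta,\eta)$-\emph{narrow} or $(\Delta,\eta)$-\emph{wide} (a weighted notion: a vertex is wide if its $\Delta$ heaviest edges carry at most an $\eta$-fraction of its weighted degree). On narrow graphs it applies, as a black box, a prediction-free result of \citet{cohen2024learning} giving ratio $\alpha_{GW}+\tilde{\Omega}(\eta^5/\Delta^2)$. On wide graphs it runs a convex program (not an SDP): it truncates the adjacency matrix, forms the \emph{average} prediction $z_i=\frac{1}{\deg(i)}\sum_{e\ni i}\frac{b_i(e)}{2\eps}$ (not a majority vote), solves $\min_{x\in[-1,1]^n} x^T\tilde{A}z+\|\tilde{A}z-\tilde{A}x\|_1$, and sequentially rounds. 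This yields ratio $1-O(\sqrt{\eta}/(\eps\Delta)+\eta)$ on wide graphs, which is far stronger than $\alpha_{GW}+\tilde{\Omega}(\eps^2)$. Taking $\eta$ a small constant and $\Delta=\Theta(1/\eps)$ (not $\Theta(\eps^{-2}\log(1/\eps))$ as you chose) balances the two cases and gives the claimed bound. The $\eps^2$-versus-$\eps^4$ improvement over \citet{cohen2024learning} comes from bounding $\mathrm{Var}(z_i)=O(1/(\deg(i)\eps^2))$ and pushing this through an $\ell_1$/variance estimate on $\tilde{A}z$, which lets one take $\Delta=\Theta(1/\eps)$ rather than $\Theta(1/\eps^2)$.
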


\subsection{Algorithm}

We present the road-map of the proof for the \cref{thm:approximation-ratio-of-learned-max-cut}. 
Let $\Delta \in \N^+, \eta \in \paren{0, 1}$ be fixed parameters to be set later. 

\begin{definition}
    For each vertex $i \in [n]$, the $\Delta$-prefix for $i$ comprises the $\Delta$ heaviest edges incident to $i$ (breaking ties arbitrarily), while the $\Delta$-suffix comprises the remaining edges.    
\end{definition}

\begin{definition}[$(\Delta, \eta)$-Narrow/Wide Vertex]
    A vertex $i$ is $(\Delta, \eta)$-wide if the total weight of edges in its $\Delta$-prefix is at most $\eta \cdot W_i$, where $W_i \doteq \sum_{j \in N(i)} A_{ij}$ is the weighted degree of $i$.
    Otherwise, the vertex $i$ is $(\Delta, \eta)$-narrow.
\end{definition}

\noindent Intuitively, a $(\Delta, \eta)$-wide vertex is one where the weights of the edges incident to $i$ are evenly distributed. 

\begin{definition}[$(\Delta, \eta)$-Narrow/Wide Graph]
    A graph is $(\Delta, \eta)$-wide if the sum of weighted degrees of $(\Delta, \eta)$-narrow vertices is at most $\eta \cdot W$, where $W \doteq \sum_{i \in [n]} W_i.$
    Otherwise, the graph $i$ is $(\Delta, \eta)$-narrow.
\end{definition}

\paragraph{$(\Delta, \eta)$-Narrow Graph.}
We apply the following result from~\cite{cohen2024learning}, which does not rely on the predicted information.

\begin{proposition}[\cite{cohen2024learning}]
    \label{proposition:max-cut-narrow-graph-without-prediction}
    Given a $(\Delta, \eta)$-narrow graph, there is a randomized algorithm for the \MaxCut problem with an (expected) approximation ratio of $\alpha_{GW} + \Tilde{\Omega}\PAREN{\eta^5 / \Delta^2}$.
\end{proposition}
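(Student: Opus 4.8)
The plan is to reduce \cref{proposition:max-cut-narrow-graph-without-prediction} to a classical, prediction-free fact: on a graph of maximum degree $d$ one can approximate \MaxCut within $\alpha_{GW}$ plus a positive amount depending only on $d$. So the first step is to show that a $(\Delta,\eta)$-narrow graph contains a subgraph $H$ of maximum degree $\mathrm{poly}(\Delta)$ that still carries a $\tilde{\Omega}(\eta^2)$ fraction of the total weight $W=\sum_i W_i$. To extract $H$, I would start from the set $N$ of $(\Delta,\eta)$-narrow vertices, which by definition of a narrow graph satisfies $\sum_{i\in N}W_i>\eta W$; for each $i\in N$ its $\Delta$-prefix $P_i$ has at most $\Delta$ edges and weight more than $\eta W_i$, so $\sum_{i\in N}w(P_i)>\eta^2 W$ and hence the edge set $\bigcup_{i\in N}P_i$ (each edge counted at most twice) has weight more than $\eta^2 W/2$. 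Orienting every prefix edge away from the narrow endpoint that selected it makes every vertex have out-degree at most $\Delta$, so this edge-induced graph is $O(\Delta)$-degenerate; bucketing its edges into $O(\log(w_{\max}/w_{\min}))$ geometric weight classes and keeping the heaviest class, then discarding vertices whose degree within that class exceeds a $\mathrm{poly}(\Delta)$ threshold (a degenerate graph sheds only a constant fraction of its edge weight this way) leaves $H$ with maximum degree $d=\mathrm{poly}(\Delta)$, edge weights within a factor $2$, and $w(H)=\tilde{\Omega}(\eta^2 W)$.

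Given such an $H$ with edge set $F$, the goal becomes to output a cut of the whole graph whose expected weight is at least $\alpha_{GW}\cdot\text{OPT}(G)+\gamma\cdot w(H)$ for some $\gamma$ polynomial in $1/\Delta$. I would solve the Goemans--Williamson SDP on $G$ once, obtaining unit vectors $\{v_i\}$, and let $\theta^*\approx 2.33$ be the angle at which the GW rounding curve attains its minimum ratio $\alpha_{GW}$. Let $q$ be the weighted fraction of $F$-edges whose SDP angle $\theta_{ij}=\arccos\langle v_i,v_j\rangle$ lies within a small absolute constant $\delta_0$ of $\theta^*$. If $q\le 1/2$, at least half of $w(F)$ lies on edges where the ratio curve exceeds $\alpha_{GW}$ by $\Omega(\delta_0^2)$ (the curve has a quadratic minimum at $\theta^*$ and blows up near angle $0$), so plain hyperplane rounding on $G$ already cuts, in expectation, at least $\alpha_{GW}\cdot\text{SDP}(G)+\Omega(\delta_0^2)\cdot w(F)\ge \alpha_{GW}\cdot\text{OPT}(G)+\Omega(\delta_0^2)\cdot w(F)$. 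If instead $q>1/2$, restrict to the near-critical part $H'\subseteq H$ (weight at least $w(F)/2$, maximum degree at most $d$), where every SDP inner product is $\approx \cos\theta^*$ --- exactly the worst case for hyperplane rounding --- and invoke the bounded-degree improvement: run hyperplane rounding and then a local-improvement post-processing confined to $H'$, which on a degree-$d$ graph raises the number of cut $H'$-edges by an additive $\Omega(\mathrm{poly}(1/d))$ fraction of $w(H')$ while leaving every other edge cut with its GW probability. In either case the output beats $\alpha_{GW}\cdot\text{OPT}(G)$ by $\gamma\cdot w(H)$.

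The hard part is the second case: establishing (or invoking with care) the bounded-degree \MaxCut improvement, and, more subtly, arranging the rounding so that the gain harvested on $H'$ is not paid back on $E\setminus F$ --- i.e. showing the $H'$ post-processing can be layered on top of GW hyperplane rounding on $G$ without degrading the $\alpha_{GW}$ guarantee on the remaining edges. Everything else is bookkeeping: combining the two cases yields an algorithm whose expected cut weight is at least $\alpha_{GW}\cdot\text{OPT}(G)+\gamma\cdot w(H)$, where $\gamma$ is polynomial in $1/\Delta$ (and $1/\eta$, coming from the degree-truncation threshold and weight-bucketing), and since $\text{OPT}(G)\le W$ and $w(H)=\tilde{\Omega}(\eta^2 W)$ this is an approximation ratio of $\alpha_{GW}+\tilde{\Omega}(\gamma\eta^2)$; tracking the exponents through the $\eta^2$ captured by the prefixes, the polylogarithmic bucketing loss, and the $\mathrm{poly}(1/\Delta)$ from degree truncation and the bounded-degree rounding gives the claimed $\alpha_{GW}+\tilde{\Omega}(\eta^5/\Delta^2)$.
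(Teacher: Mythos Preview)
The paper does not prove \cref{proposition:max-cut-narrow-graph-without-prediction} at all: it is quoted verbatim as a result of \cite{cohen2024learning} and used as a black box in the proof of \cref{thm:approximation-ratio-of-learned-max-cut}. There is therefore no ``paper's own proof'' to compare your proposal against.

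As a standalone sketch, your outline is in the right spirit (extract a heavy bounded-degree piece from the narrow structure, then beat $\alpha_{GW}$ on that piece via SDP rounding plus a degree-dependent local improvement), but two steps are genuine gaps rather than bookkeeping. First, bucketing the prefix edges into $O(\log(w_{\max}/w_{\min}))$ weight classes and keeping one class costs a factor that depends on the spread of edge weights, not on $\Delta$ or $\eta$; the $\tilde{\Omega}$ in the proposition hides only polylogarithmic factors in $\Delta$ and $1/\eta$, so this step as written does not give $w(H)=\tilde{\Omega}(\eta^2 W)$ in general weighted graphs. Second, and as you yourself flag, the ``local-improvement post-processing confined to $H'$'' that simultaneously (i) gains $\mathrm{poly}(1/d)\cdot w(H')$ on $H'$ and (ii) preserves the $\alpha_{GW}$ guarantee edge-by-edge on $E\setminus F$ is exactly the nontrivial content of the result in \cite{cohen2024learning}; flipping vertices to improve $H'$ can flip non-$H'$ edges from cut to uncut, and a generic local search gives no control on that loss. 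Without a concrete mechanism there---e.g.\ a correlated rounding that couples the hyperplane direction with the $H'$ structure, or an amortization showing the loss on $E\setminus F$ is dominated by the gain---the sketch does not close. The final ``tracking the exponents'' paragraph is also not a derivation: nothing in the preceding steps pins down $\eta^5$ or $\Delta^2$ specifically.
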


\paragraph{$(\Delta, \eta)$-Wide Graph.}

We are going to prove the following key result. 
Algorithm~\ref{alg:learned_maxcut} is adapted from \citep{ghoshal2025constraint}.  
The original work is based on the vertex prediction model and does not include a proof for $(\Delta, \eta)$-wide graphs.  
We extend their proof to cover this case.

\begin{theorem}
    \label{thm:delta-eta-wide-graph-max-cut}
    Given a $(\Delta, \eta)$-wide graph and the prediction model of \cref{def:prediction-model-max-cut},  there is a randomized algorithm (Algorithm~\ref{alg:learned_maxcut}) for the \MaxCut problem with an (expected) approximation ratio of 
    \begin{equation}
        { 1 - O \PAREN{ \frac{\sqrt{\eta}}{\eps \cdot \Delta} + \eta} } 
    \end{equation}
\end{theorem}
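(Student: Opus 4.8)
The plan is to analyze Algorithm~\ref{alg:learned_maxcut}, which (following the template of \citep{ghoshal2025constraint}) assigns each vertex $i$ the sign obtained by a majority vote over its prediction bits $\{b_i(e)\}_{e\ni i}$ — but crucially, restricted to edges in the $\Delta$-\emph{suffix} of $i$, so that no single edge's bit carries too much weight in the comparison against the true optimal value. The output cut value is $\mathrm{ALG} = (1/4)\sum_{i,j} w_{ij}(x_i-x_j)^2$ with $x_i$ the majority sign. I would bound $\mathrm{OPT} - \mathrm{ALG}$ from above and compare against $\mathrm{OPT}$. The first step is to relate $\mathrm{OPT}$ to $W$: since $G$ is $(\Delta,\eta)$-wide, the narrow vertices contribute at most $\eta W$ to the total weighted degree, so up to an $O(\eta)W$ additive loss we may pretend every vertex is wide and every vertex's vote is over a prefix of total weight at most $\eta W_i$; also trivially $\mathrm{OPT}\ge W/2$ (a random cut), which converts additive errors of the form $cW$ into multiplicative errors $2c$ in the approximation ratio.

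The core estimate is a per-edge (or per-vertex) bound on the probability that the majority vote disagrees with $x^*$. For a wide vertex $i$, the suffix edges have total weight $\ge (1-\eta)W_i$ and each has weight $\le \eta W_i / \Delta$ times... more precisely the $\Delta$-prefix/suffix split guarantees each suffix edge is no heavier than the lightest prefix edge, and combined with wideness, each individual suffix weight is small relative to $W_i$; thus a weighted Hoeffding bound over the i.i.d. bits gives $\Pr[x_i \ne x^*_i] \le \exp\big(-\Omega(\eps^2 \cdot (\text{effective number of suffix edges}))\big)$, where the effective count is $\Omega(\Delta/\eta)$ by the weight-balance. Hence $\Pr[x_i\ne x^*_i] \le \exp(-\Omega(\eps^2 \Delta/\eta))$ — wait, this needs care: the relevant quantity controlling the Hoeffding exponent is $(\sum_e w_e)^2 / \sum_e w_e^2$ over suffix edges $e\ni i$, which is at least $\Omega(\Delta)$ when weights are balanced but could be as small as $\Omega(1/\eta)$ in the worst case; the honest bound I expect is $\Pr[x_i\ne x^*_i]\le \exp(-\Omega(\eps^2/\eta))$ refined to account for how the error, when it happens, is charged against the \emph{weighted} degree. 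Then $\E[\mathrm{OPT}-\mathrm{ALG}]$ is bounded by summing, over edges $(i,j)$ cut in $x^*$, the weight $w_{ij}$ times $\Pr[x_i\ne x_i^*\text{ or }x_j\ne x_j^*]$, which is at most $\sum_i W_i \Pr[x_i\ne x_i^*]$, a quantity I would show is $O\big(\sqrt{\eta}/(\eps\Delta) + \eta\big)\cdot W$. The $\sqrt{\eta}$ (rather than $\eta$) and the $1/(\eps\Delta)$ come from optimizing a split between vertices whose prefix is genuinely tiny (giving a strong Hoeffding bound $\sim \exp(-\eps^2\Delta)$, contributing the $1/(\eps\Delta)$-type term after a $\sum x e^{-x}$ estimate) versus a Cauchy–Schwarz / tail argument over the remaining weight mass (contributing $\sqrt\eta$).

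Dividing by $\mathrm{OPT}\ge W/2$ yields the claimed ratio $1 - O(\sqrt\eta/(\eps\Delta) + \eta)$. The main obstacle I anticipate is exactly the bookkeeping in that last step: the naive Hoeffding exponent for a vertex's vote depends on the \emph{shape} of its suffix weight distribution, not just its total, so one cannot get a clean $\exp(-\Omega(\eps^2\Delta))$ bound uniformly; instead one must partition vertices (or weight mass) by how concentrated their incident weights are, apply the strong bound where the suffix is ``flat'' and a cruder second-moment bound elsewhere, and then balance the two error terms — this balancing is what produces the somewhat unusual $\sqrt{\eta}$ dependence and is the technically delicate part the paper says it is contributing beyond \citep{ghoshal2025constraint}. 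A secondary subtlety is handling the narrow vertices and tie-breaking in the majority vote cleanly so they only cost an additive $O(\eta)W$; this I would dispatch up front using the wideness hypothesis and the $\mathrm{OPT}\ge W/2$ bound.
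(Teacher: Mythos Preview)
Your proposal misidentifies the algorithm. Algorithm~\ref{alg:learned_maxcut} is \emph{not} a majority vote: it forms the real-valued unbiased estimator $z_i = \frac{1}{\deg(i)}\sum_{e\ni i} b_i(e)/(2\eps)$ (so $\E[z_i]=\optMaxCut_i$ and $\Var{z_i}=O(1/(\deg(i)\eps^2))$), then solves the convex program $\min_{x\in[-1,1]^n} x^T\tilde A z + \|\tilde A z - \tilde A x\|_1$ over the \emph{truncated} adjacency matrix $\tilde A$ (rows of narrow vertices zeroed, wide-vertex entries clipped to $\eta W_i/\Delta$), and finally rounds $x$ to $\{\pm1\}$ coordinate by coordinate. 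None of your Hoeffding-on-the-vote analysis applies to this pipeline, and in particular there is no event ``$x_i\neq x_i^*$'' to union-bound over: the intermediate $x$ is fractional, and the rounding is analyzed via the linearity of $v\mapsto v^T\tilde A v$ in each coordinate (Lemma~\ref{lem:error-due-to-rounding-max-cut}), not via agreement with $\optMaxCut$.

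The actual argument (Lemmas~\ref{lem:gap-between-rounding-and-opt}--\ref{lem:bound-on-l-2-norm-for-max-cut}) has two pieces. First, optimality of $x$ in the convex program plus $\|\cdot\|_\infty\le 1$ gives $x^T\tilde A x - (\optMaxCut)^T\tilde A \optMaxCut \le (\optMaxCut)^T\tilde A(z-\optMaxCut) + \|\tilde A z - \tilde A\optMaxCut\|_1$, whose expectation is controlled by $\errMaxCut := \sum_i \sqrt{\sum_j \tilde A_{ij}^2 \Var{z_j}}$ via Jensen. Passing from $\tilde A$ back to $A$ costs $2\eta W$ by construction of the truncation. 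Second, the $\sqrt{\eta}/(\eps\Delta)$ factor does not arise from any ``balancing'' of Hoeffding against Cauchy--Schwarz; it falls out directly from the clip: for wide $i$ one has $\tilde A_{ij}\le \eta W_i/\Delta$ and $\deg(j)\ge \Delta$, so $\sum_j \tilde A_{ij}^2/\deg(j) \le (\eta W_i/\Delta)\cdot W_i/\Delta$, giving $\sqrt{\eta}\,W_i/\Delta$ per vertex and hence $\errMaxCut \le \sqrt{\eta}\,W/(\eps\Delta)$. Your proposal would need to be rewritten from scratch around these two lemmas; the majority-vote route you sketch does not match the algorithm and, even on its own terms, the exponent bookkeeping you flag as ``delicate'' is left unresolved.
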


\begin{proof}[Proof of \cref{thm:approximation-ratio-of-learned-max-cut}]
    Combining \cref{proposition:max-cut-narrow-graph-without-prediction} and \cref{thm:delta-eta-wide-graph-max-cut}, and by setting $\eta$ to be suitably small universal constant, and setting $\Delta = 1 / (c \cdot \eps)$ for some suitably small universal constant $c$ finish the proof.
\end{proof}

\begin{remark}
    The paper \cite{cohen2024learning} achieves a ration of 
    \begin{equation}
        0.98 \PAREN{ 1 - O \PAREN{ \frac{1}{\eps \cdot \sqrt{\Delta}} + \eta} } + 0.02 \cdot \alpha_{GW}
    \end{equation}
    for $(\Delta, \eta)$-wide graphs. 
    Therefore, they need to set $\Delta = 1 / (c\eps^2)$, resulting in a final ratio of $\alpha_{GW} + \Tilde{\Omega} (\eps^4)$ for $(\Delta, \eta)$-narrow graphs. 
\end{remark}

\begin{algorithm}[h]
    \caption{Learning Based \MaxCut}
    \label{alg:learned_maxcut}
    \begin{algorithmic}[1]
        \State {\bf Truncation.}
            Define the matrix $\tilde{A}_{i, j}$ by 
            \begin{equation}
                \tilde{A}_{i, j} =
                \begin{cases}
                    0, 
                    &
                    \forall j \in [n], \,
                    \text{ if } i \text{ is } \PAREN{\Delta, \eta} \text{-narrow } \\
                    \min \set{ A_{i, j},  \, \eta \cdot W_i / \Delta }, 
                    &
                    \forall j \in [n], \,
                    \text{ if } i \text{ is } \PAREN{\Delta, \eta} \text{-wide }
                \end{cases}
            \end{equation}
        \State {\bf Prediction.}
            \vspace{-2mm}
            \begin{equation}
                \forall i \in [n], 
                \quad 
                z_i \doteq \frac{1}{\deg(i)} \cdot \sum_{e \in E, i \in e} \frac{b_i (e)}{2 \eps}.
            \vspace{-3mm}
            \end{equation}
            
        \State {\bf Optimization.}
            Solve the convex optimization problem
            \vspace{-1mm}
            \begin{equation}
                \label{opt:optimization-based-on-truncated-matrix}
                \min_{x \in [-1, 1]^n} x^T \tilde{A} z 
                +
                \norm{ \tilde{A} z - \tilde{A} x }_1 
            \vspace{-3.5mm}
            \end{equation}

        \vspace{2mm}
        \State {\bf Rounding.}
            For each $i \in [n]$, choose $y_i \in \set{-1, 1}$, which minimizes
            \vspace{-2mm}
            \begin{equation}
                (y_{1 : i - 1}, y_i, x_{i + 1 : n})^T  \tilde{A} \, (y_{1 : i - 1}, y_i, x_{i + 1 : n}),
            \vspace{-2mm}
            \end{equation}
            where $(y_{1 : i - 1}, y_i, x_{i + 1 : n})$ is the vector whose first $i - 1$ entries comprise the chosen $y_1, \ldots, y_{i - 1}$, the $i^{(th)}$ entry is $y_i$, and the remaining entries are $x_{i + 1}, \ldots, x_n$.

    \end{algorithmic}
\end{algorithm}

The proof of Theorem~\ref{thm:delta-eta-wide-graph-max-cut} relies on the following lemmas.

\begin{lemma}
    \label{lem:gap-between-rounding-and-opt}
    and let $\errMaxCut \doteq \sum_{i \in [n] } \sqrt{ \sum_{j \in [n]} \tilde{A}_{i, j}^2 \cdot \Var{z_j}}.$
    Algorithm~\ref{alg:learned_maxcut} returns a solution $y'$ satisfying
    \begin{equation}
        \label{ineq:approximate-solution-with-truncated-adjacent-matrix}
        \E{z}{y^T A y - (\optMaxCut)^T A \optMaxCut}
            \le 2 \eta W + \errMaxCut .
    \end{equation}
\end{lemma}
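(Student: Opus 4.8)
}
The plan is to establish~\eqref{ineq:approximate-solution-with-truncated-adjacent-matrix} in four moves. First, pass from the true weighted adjacency matrix $A$ to the truncated matrix $\tilde A$ at an additive cost of $O(\eta W)$, on both the algorithm's solution and the optimum. Second, show that the greedy rounding step of Algorithm~\ref{alg:learned_maxcut} never increases the quadratic form $v\mapsto v^{T}\tilde A v$, so $y^{T}\tilde A y\le x^{T}\tilde A x$. Third, use optimality of $x$ in the convex program~\eqref{opt:optimization-based-on-truncated-matrix} together with feasibility of $\optMaxCut\in\{-1,1\}^{n}$ to bound $x^{T}\tilde A x$ by the objective value of that program at $\optMaxCut$. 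Fourth, take expectation over the predictions $z$: since $\E{z_{i}}=\optMaxCut_{i}$, the linear part of the objective collapses to $(\optMaxCut)^{T}\tilde A\optMaxCut$ and the $\ell_{1}$ part turns into exactly $\errMaxCut$.

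For the first move, set $D\doteq A-\tilde A\ge 0$ entrywise. For any $v\in\{-1,1\}^{n}$ we have $|v^{T}Av-v^{T}\tilde A v|=|v^{T}Dv|\le\sum_{i,j}D_{i,j}$, and a row-by-row accounting bounds $\sum_{i,j}D_{i,j}$ by $O(\eta W)$: rows of $(\Delta,\eta)$-narrow vertices are zeroed and have total weighted degree at most $\eta W$ by the wide-graph hypothesis, while for a $(\Delta,\eta)$-wide vertex $i$ only edges in its $\Delta$-prefix can exceed the cap $\eta W_{i}/\Delta$ (otherwise the prefix weight would exceed $\eta W_{i}$, contradicting wideness), so row $i$ of $D$ sums to at most the prefix weight $\le \eta W_{i}$. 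Applying this to $v=y$ and to $v=\optMaxCut$ reduces the lemma, up to the additive $O(\eta W)$ term, to showing $\E{z}{y^{T}\tilde A y-(\optMaxCut)^{T}\tilde A\optMaxCut}\le\errMaxCut$.

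For the second move, note $\tilde A_{i,i}=0$ (no self-loops), so $v\mapsto v^{T}\tilde A v$ is affine in each coordinate when the others are fixed; hence on the segment between $(y_{1:i-1},-1,x_{i+1:n})$ and $(y_{1:i-1},+1,x_{i+1:n})$ the form is affine, its value at the interior point $x_{i}\in[-1,1]$ is a convex combination of the two endpoint values, and the greedy choice of $y_{i}$ attains at most that value. Telescoping over $i=1,\dots,n$ gives $y^{T}\tilde A y\le x^{T}\tilde A x$. For the third move, write $g(u)\doteq u^{T}\tilde A z+\norm{\tilde A z-\tilde A u}_{1}$. Then $g(u)-u^{T}\tilde A u=u^{T}\tilde A(z-u)+\norm{\tilde A(z-u)}_{1}\ge 0$ for every $u$ with $\norm{u}_{\infty}\le 1$, by $\ell_{\infty}$--$\ell_{1}$ duality; combined with optimality of $x$ and feasibility of $\optMaxCut$, this yields $x^{T}\tilde A x\le g(x)\le g(\optMaxCut)$, hence $y^{T}\tilde A y\le g(\optMaxCut)$ pointwise in $z$.

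For the fourth move, the prediction model gives $\E{b_{i}(e)/(2\eps)}=\optMaxCut_{i}$, so $\E{z_{i}}=\optMaxCut_{i}$, and for $i\ne j$ the variables $z_{i},z_{j}$ depend on disjoint prediction bits and are independent. Therefore $\E{z}{g(\optMaxCut)}=(\optMaxCut)^{T}\tilde A\optMaxCut+\sum_{i}\E{\bigl|\sum_{j}\tilde A_{i,j}(z_{j}-\optMaxCut_{j})\bigr|}$, and each mean-zero inner term is at most $\sqrt{\Var{\sum_{j}\tilde A_{i,j}z_{j}}}=\sqrt{\sum_{j}\tilde A_{i,j}^{2}\,\Var{z_{j}}}$ by Jensen's inequality and independence, so the whole sum is at most $\errMaxCut$; chaining the four moves gives~\eqref{ineq:approximate-solution-with-truncated-adjacent-matrix}. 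The move I expect to be the main obstacle is the third one: the surrogate objective $g$ must be chosen so precisely that the rounded value is squeezed between $x^{T}\tilde A x$ and a quantity whose $z$-expectation is exactly $(\optMaxCut)^{T}\tilde A\optMaxCut+\errMaxCut$, with no stray constant and, crucially, no factor of $1/\eps$ --- recall $z$ itself is only $O(1/\eps)$ in $\ell_{\infty}$, so $\tilde A z$ must never be paired against an unbounded vector. The constant-tracking in the truncation step (with the asymmetric definition of $\tilde A$) also requires some care, but is routine.
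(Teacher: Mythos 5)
Your proof is correct and follows the same four-step structure as the paper: pay $O(\eta W)$ for the truncation from $A$ to $\tilde A$, show the coordinate-wise greedy rounding is monotone for the quadratic form (the paper's \cref{lem:error-due-to-rounding-max-cut}), compare $x^T\tilde A x$ against the convex program evaluated at the feasible point $\optMaxCut$, and take expectations to turn the $\ell_1$ penalty into $\errMaxCut$ via Jensen and pairwise independence of the $z_j$ (the paper's \cref{lem:ell-one-difference-between-product-of-truncated-adj-matrix}). The only differences are cosmetic: your surrogate $g(u)=u^T\tilde A z+\|\tilde A z-\tilde A u\|_1$ with the observation $u^T\tilde A u\le g(u)$ for $\|u\|_\infty\le 1$ is a slightly tidier packaging of the paper's chain of inequalities, and you fold the $\ell_1$-expectation lemma into the main argument rather than stating it separately.
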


\begin{lemma}
    \label{lem:bound-on-l-2-norm-for-max-cut}
    Given prediction model in \cref{def:prediction-model-max-cut}, and an $(\Delta, \eta)$-wide graph, it holds that
    \begin{equation}
        \errMaxCut \le \frac{\sqrt{\eta}}{ \eps \cdot \Delta} \cdot W
    \end{equation}
\end{lemma}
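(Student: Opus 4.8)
\textbf{Setup and goal.} Recall that $\errMaxCut = \sum_{i \in [n]} \sqrt{\sum_{j \in [n]} \tilde A_{i,j}^2 \cdot \Var{z_j}}$, where $z_j = \frac{1}{\deg(j)} \sum_{e \in E, j \in e} \frac{b_j(e)}{2\eps}$. The plan is to first control $\Var{z_j}$ for each fixed $j$, then plug this into the definition of $\errMaxCut$ and use the structural properties of the truncated matrix $\tilde A$ on a $(\Delta,\eta)$-wide graph. The truncation is doing the heavy lifting: on a wide vertex $i$ we have $\tilde A_{i,j} \le \eta W_i/\Delta$ for every $j$, and on a narrow vertex $i$ the whole row is zero.

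\textbf{Step 1: Bound $\Var{z_j}$.} Each $b_j(e)$ is a $\pm 1$-type bit (shifted/scaled), and the $\deg(j)$ summands are i.i.d.\ (across edges incident to $j$) with each term $\frac{b_j(e)}{2\eps}$ having variance $O(1/\eps^2)$, since $b_j(e) \in \{-1,1\}$ gives $\Var{b_j(e)/(2\eps)} \le \frac{1}{4\eps^2}$. By independence, $\Var{z_j} = \frac{1}{\deg(j)^2}\sum_{e} \Var{b_j(e)/(2\eps)} \le \frac{1}{4\eps^2 \deg(j)}$. So $\Var{z_j} = O\!\big(\frac{1}{\eps^2 \deg(j)}\big)$; for a crude bound that already suffices here, $\Var{z_j} \le \frac{1}{\eps^2}$ (we may not even need the $1/\deg(j)$ savings, depending on how the weights line up — I would keep the $1/\deg(j)$ factor in reserve in case the clean calculation needs it, but I expect the truncation alone to give the claimed bound).

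\textbf{Step 2: Bound each row term.} Fix a vertex $i$. If $i$ is $(\Delta,\eta)$-narrow, the entire row $\tilde A_{i,\cdot}$ is zero, contributing nothing. If $i$ is $(\Delta,\eta)$-wide, then $\tilde A_{i,j} \le \min\{A_{i,j}, \eta W_i/\Delta\}$ for all $j$. Using $\tilde A_{i,j}^2 \le \tilde A_{i,j} \cdot (\eta W_i/\Delta)$ and $\Var{z_j} \le \frac{1}{\eps^2}$,
\[
\sum_{j} \tilde A_{i,j}^2 \cdot \Var{z_j}
\;\le\; \frac{\eta W_i}{\Delta}\cdot\frac{1}{\eps^2}\sum_{j}\tilde A_{i,j}
\;\le\; \frac{\eta W_i}{\Delta \eps^2}\cdot W_i
\;=\; \frac{\eta W_i^2}{\Delta^2 \eps^2},
\]
where I used $\sum_j \tilde A_{i,j} \le \sum_j A_{i,j} = W_i$. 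Taking square roots, the contribution of row $i$ to $\errMaxCut$ is at most $\frac{\sqrt{\eta}}{\eps\Delta} W_i$.

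\textbf{Step 3: Sum over $i$.} Summing the per-row bound over all $i \in [n]$ (narrow vertices contributing $0$, so the bound over wide vertices only strengthens the inequality):
\[
\errMaxCut \;=\; \sum_{i\in[n]}\sqrt{\sum_j \tilde A_{i,j}^2\Var{z_j}}
\;\le\; \sum_{i\in[n]} \frac{\sqrt{\eta}}{\eps\Delta} W_i
\;=\; \frac{\sqrt{\eta}}{\eps\Delta}\cdot W,
\]
which is exactly the claimed bound.

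\textbf{Main obstacle.} The only delicate point is Step~1 — getting the constant and the dependence on $\eps$ in $\Var{z_j}$ right, and making sure the i.i.d.\ structure of the prediction bits across edges incident to $j$ is used correctly (in particular that $z_j$ is an average of $\deg(j)$ independent bounded terms, so no cross-edge correlations sneak in). Everything downstream (Steps 2--3) is a mechanical consequence of the truncation threshold $\eta W_i/\Delta$ and the telescoping $\sum_j \tilde A_{i,j} \le W_i$; the wide/narrow dichotomy is precisely engineered so that the $\ell_2 \to \ell_1$ comparison $\tilde A_{i,j}^2 \le \tilde A_{i,j}\cdot \max_j \tilde A_{i,j}$ gives up only a factor $\eta W_i/\Delta$ per row. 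One should double-check that no additional $\sqrt{n}$ loss appears when summing the square roots over rows — it does not, because we bound each row's $\ell_2$ aggregate directly rather than pulling a global Cauchy--Schwarz.
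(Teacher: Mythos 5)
There is a genuine arithmetic error in Step~2 that, once corrected, shows your argument does \emph{not} reach the stated bound: you write
\[
\frac{\eta W_i}{\Delta \eps^2}\cdot W_i \;=\; \frac{\eta W_i^2}{\Delta^2 \eps^2},
\]
but the left side equals $\frac{\eta W_i^2}{\Delta\,\eps^2}$, with only one power of $\Delta$ in the denominator. Taking a square root then gives a per-row contribution of $\frac{\sqrt{\eta}}{\eps\sqrt{\Delta}}W_i$, and hence $\errMaxCut \le \frac{\sqrt{\eta}}{\eps\sqrt{\Delta}}W$. That is a factor $\sqrt{\Delta}$ weaker than the lemma's claim of $\frac{\sqrt{\eta}}{\eps\Delta}W$, and this gap actually matters downstream: the proof of \cref{thm:approximation-ratio-of-learned-max-cut} sets $\Delta = \Theta(1/\eps)$, and with your bound the error term becomes $\Theta(\sqrt{\eta/\eps})\cdot W$, which does not vanish and ruins the final approximation ratio.

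The correct route is precisely the $1/\deg(j)$ savings you flagged in your ``main obstacle'' paragraph and then discarded. The paper keeps $\Var{z_j} = O\bigl(\frac{1}{\deg(j)\,\eps^2}\bigr)$ and then replaces $\deg(j)$ by $\Delta$ inside the sum $\sum_j \tilde A_{i,j}^2/\deg(j)$, yielding $\sqrt{\sum_j \tilde A_{i,j}^2/\Delta} \le \sqrt{\sum_j \tilde A_{i,j}\cdot \eta W_i/\Delta^2} \le \sqrt{\eta}\,W_i/\Delta$. Your Steps~2 and~3 would then go through essentially as you wrote them, with the extra $\Delta$ in the denominator coming from the variance term rather than from any algebra on the $\tilde A_{i,j}$'s. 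So the structure of your proof is right; the crude $\Var{z_j}\le 1/\eps^2$ is exactly the step where the missing $1/\sqrt{\Delta}$ is lost. (One further point to be explicit about, in either version: replacing $\deg(j)$ by $\Delta$ requires that every $j$ with $\tilde A_{i,j}\ne 0$ satisfies $\deg(j)\ge\Delta$; that is where being a neighbor of a wide vertex, or wideness of $j$ itself, must be invoked.)
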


\begin{proof}[Proof of \cref{thm:delta-eta-wide-graph-max-cut}]
    First, since $\PAREN{\optMaxCut_i}^2 = y_i^2 = 1$, we have 
    \begin{align*}
        \sum_{i \neq j} w_{i, j} (\optMaxCut_i - \optMaxCut_j)^2 
            &= \sum_{i \neq j} w_{i, j} \PAREN{ \PAREN{\optMaxCut_i}^2 + \PAREN{\optMaxCut_j}^2 - 2 \cdot \optMaxCut_i \cdot \optMaxCut_j } \\
            &= \sum_{i \neq j} w_{i, j} \PAREN{ y_i^2 + y_j^2 - 2 \cdot \optMaxCut_i \cdot \optMaxCut_j } \\
            &= \sum_{i \neq j} w_{i, j} \PAREN{ y_i^2 + y_j^2 - 2 \cdot y_i \cdot y_j + 2 \cdot y_i \cdot y_j - 2 \cdot \optMaxCut_i \cdot \optMaxCut_j } \\
            &= \sum_{i \neq j} w_{i, j} (y_i - y_j)^2 + 2 \cdot y^T A y - 2 \cdot (\optMaxCut)^T A \optMaxCut.
    \end{align*}
    Therefore, 
    \begin{align*}
        \frac{1}{4} \cdot \sum_{i \neq j} w_{i, j} (y_i - y_j)^2
            = \frac{1}{4} \sum_{i \neq j} w_{i, j} (\optMaxCut_i - \optMaxCut_j)^2 - \frac{1}{2} \cdot \PAREN{ y^T A y - (\optMaxCut)^T A \optMaxCut }
            = opt -  \frac{1}{2} \cdot \PAREN{ y^T A y - (\optMaxCut)^T A \optMaxCut }.
    \end{align*}
    Combing \cref{lem:gap-between-rounding-and-opt,lem:bound-on-l-2-norm-for-max-cut} and noting that $opt \ge W / 2$ 
    gives
    \begin{equation}
        y^T A y - (\optMaxCut)^T A \optMaxCut 
            \in O \PAREN{ \PAREN{ \frac{\sqrt{\eta}}{\eps \cdot \Delta} + \eta } \cdot opt },
    \end{equation}
    which proves the claim.
\end{proof}

To prove \cref{lem:gap-between-rounding-and-opt}, we need the following lemmas.

\begin{lemma}
    \label{lem:ell-one-difference-between-product-of-truncated-adj-matrix}
    \begin{equation}
        \E{ \norm{\tilde{A} \optMaxCut - \tilde{A} z}_1 } 
            \le { \errMaxCut }  
    \end{equation}
\end{lemma}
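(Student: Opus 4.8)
The plan is to reduce the claimed $\ell_1$ bound to a per-coordinate second-moment estimate. Writing the $i$-th entry of $\tilde A \optMaxCut - \tilde A z$ as $\sum_{j\in[n]} \tilde A_{i,j}(\optMaxCut_j - z_j)$, linearity of expectation together with the definition of the $\ell_1$ norm gives
\begin{equation*}
    \E{\norm{\tilde A \optMaxCut - \tilde A z}_1} = \sum_{i\in[n]} \E{\left|\sum_{j\in[n]} \tilde A_{i,j}(\optMaxCut_j - z_j)\right|}.
\end{equation*}
For each fixed $i$ I would apply $\E{|X|}\le\sqrt{\E{X^2}}$ (Jensen, or Cauchy--Schwarz) to $X_i \doteq \sum_j \tilde A_{i,j}(\optMaxCut_j - z_j)$. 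The whole argument then hinges on the identity $\E{X_i^2} = \sum_j \tilde A_{i,j}^2\,\Var{z_j}$; once this is in hand, summing $\sqrt{\sum_j \tilde A_{i,j}^2\Var{z_j}}$ over $i$ produces exactly $\errMaxCut$.

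To obtain that identity I need two properties of $z$. First, $z$ is a coordinatewise unbiased estimator of $\optMaxCut$: from \cref{def:prediction-model-max-cut}, $\E{b_j(e)} = (1/2+\eps)\optMaxCut_j + (1/2-\eps)(-\optMaxCut_j) = 2\eps\,\optMaxCut_j$, so $\E{b_j(e)/(2\eps)} = \optMaxCut_j$ and averaging over the incident edges yields $\E{z_j} = \optMaxCut_j$ (vertices of degree $0$ are irrelevant, since then $\tilde A_{i,j}=0$). Hence $X_i = \sum_j \tilde A_{i,j}(z_j - \E{z_j})$ is centered and $\E{X_i^2} = \Var{\sum_j \tilde A_{i,j}z_j}$. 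Second — and this is the essential use of the edge-prediction model — the variables $z_1,\dots,z_n$ are mutually independent: $z_j$ is a deterministic function of the bits $\{b_j(e): e\ni j\}$, and for $j\neq j'$ these bit sets are disjoint, since a shared edge $e=(j,j')$ still supplies two distinct, independent bits $b_j(e)$ and $b_{j'}(e)$. Independence kills all cross terms, leaving $\Var{\sum_j \tilde A_{i,j}z_j} = \sum_j \tilde A_{i,j}^2\,\Var{z_j}$.

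I expect the only genuine obstacle to be precisely this coordinatewise independence: it is what makes the covariance structure of $z$ diagonal and thereby collapses the per-row estimate to $\sqrt{\sum_j \tilde A_{i,j}^2\Var{z_j}}$. It is specific to edge predictions — under a vertex-prediction model a single bit per vertex would be shared across all incident edges, the $z_j$ would be correlated, and the argument would break. Everything else (the Jensen step, the mean computation, and re-summing over $i$) is routine; the complementary task of bounding $\errMaxCut$ itself in terms of $\eta$, $\eps$, and $\Delta$ using the $(\Delta,\eta)$-wideness of the graph is handled separately in \cref{lem:bound-on-l-2-norm-for-max-cut}.
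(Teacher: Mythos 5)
Your proposal is correct and follows essentially the same route as the paper: unbiasedness of $z$, pairwise (or mutual) independence of the $z_j$'s to diagonalize the covariance, Jensen's inequality per coordinate, and a sum over $i$. Your explicit observation that $z_j$ and $z_{j'}$ for $j\neq j'$ depend on disjoint sets of prediction bits (even when they share an edge) is a slightly clearer justification of the independence step than the paper's terse remark, but it is the same argument.
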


\begin{lemma}[Rounding Error]
    \label{lem:error-due-to-rounding-max-cut}
    \begin{align}
        y^T \tilde{A} y \le x^T \tilde{A} x.
    \end{align}
\end{lemma}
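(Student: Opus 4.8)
The claim is that the vector $y \in \{-1,1\}^n$ produced by the rounding step of Algorithm~\ref{alg:learned_maxcut} satisfies $y^T \tilde{A} y \le x^T \tilde{A} x$, where $x \in [-1,1]^n$ is the solution of the convex program~\eqref{opt:optimization-based-on-truncated-matrix}. The natural approach is a coordinate-by-coordinate exchange (conditional expectation / pipage-style) argument, exploiting that the quadratic form $v \mapsto v^T \tilde{A} v$ is, as a function of a single coordinate $v_i$ with the others held fixed, an \emph{affine} function (the diagonal of $\tilde A$ is zero since $G$ has no self-loops, so there is no $v_i^2$ term). An affine function of $v_i$ on the interval $[-1,1]$ attains its minimum at an endpoint $\{-1,1\}$; and the rounding step is defined precisely to pick, at coordinate $i$, the endpoint value $y_i$ minimizing the resulting form.

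First I would set up the hybrid vectors $h^{(i)} \doteq (y_{1:i}, x_{i+1:n})$ for $i = 0, 1, \dots, n$, so that $h^{(0)} = x$ and $h^{(n)} = y$. The plan is to show $\big(h^{(i)}\big)^T \tilde A\, h^{(i)} \le \big(h^{(i-1)}\big)^T \tilde A\, h^{(i-1)}$ for every $i$, and then chain these inequalities. For the single step, fix all coordinates of $h^{(i-1)}$ except the $i$-th, and consider $\phi(t) \doteq (y_{1:i-1}, t, x_{i+1:n})^T \tilde A\, (y_{1:i-1}, t, x_{i+1:n})$ as a function of $t \in [-1,1]$. Because $\tilde A_{i,i} = 0$, expanding the quadratic form shows $\phi(t) = 2t \cdot \big(\sum_{j \ne i} \tilde A_{i,j}\, h_j\big) + (\text{const independent of } t)$, which is affine in $t$. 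Hence $\min_{t \in \{-1,1\}} \phi(t) \le \phi(x_i) $, since $x_i \in [-1,1]$ and the minimum of an affine function over $[-1,1]$ is attained at an endpoint and is $\le$ its value at any interior point. The rounding rule chooses $y_i \in \{-1,1\}$ to be exactly $\argmin_{t \in \{-1,1\}} \phi(t)$, so $\big(h^{(i)}\big)^T \tilde A\, h^{(i)} = \phi(y_i) \le \phi(x_i) = \big(h^{(i-1)}\big)^T \tilde A\, h^{(i-1)}$.

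Chaining over $i = 1, \dots, n$ then yields $y^T \tilde A y = \big(h^{(n)}\big)^T \tilde A\, h^{(n)} \le \big(h^{(0)}\big)^T \tilde A\, h^{(0)} = x^T \tilde A x$, which is the claim. The only point requiring care — and the one I would state explicitly — is the vanishing diagonal of $\tilde A$: the matrix $A$ is the adjacency matrix of a graph without self-loops, and the truncation in the algorithm only shrinks or zeroes existing entries, so $\tilde A_{i,i} = 0$ for all $i$; this is exactly what makes each single-coordinate restriction affine rather than a general (possibly convex-up or concave) quadratic. I do not anticipate a genuine obstacle here — it is a clean deterministic argument — but if one wanted the rounding to be of "conditional expectation" flavor instead, one could alternatively note that picking a uniformly random $y_i \in \{-1,1\}$ does not preserve the bound, which is precisely why the algorithm takes the deterministic $\min$ and why the affine structure is essential.
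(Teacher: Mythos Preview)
Your proposal is correct and follows essentially the same approach as the paper's own proof: an inductive coordinate-by-coordinate argument using the fact that, since $\tilde A_{i,i}=0$, the quadratic form is affine in each single coordinate, so the rounding choice $y_i\in\{-1,1\}$ that minimizes the form cannot increase it over the value at $x_i$. Your hybrid-vector notation $h^{(i)}$ is a tidy packaging of the same chain of inequalities the paper writes out; one small caveat is that $\tilde A$ as defined need not be symmetric, so the coefficient of $t$ in $\phi(t)$ is really $\sum_{j\neq i}(\tilde A_{i,j}+\tilde A_{j,i})h_j$ rather than $2\sum_{j\neq i}\tilde A_{i,j}h_j$, but this does not affect the affineness and hence the argument.
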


\noindent {\it Proof of \cref{lem:gap-between-rounding-and-opt}}:
First, note that 
\begin{align*}
    y^T A y - (\optMaxCut)^T A \optMaxCut
        &= \sum_{i \neq j} A_{i, j} \PAREN{ y_i \cdot y_j - \optMaxCut_i \cdot \optMaxCut_j } \\
        &= \sum_{i \neq j} \PAREN{ A_{i, j} - \tilde{A}_{i, j} } \cdot \PAREN{ y_i \cdot y_j - \optMaxCut_i \cdot \optMaxCut_j } + \sum_{i \neq j} \tilde{A}_{i, j} \PAREN{ y_i \cdot y_j - \optMaxCut_i \cdot \optMaxCut_j } \\
        &\le 2 \cdot \sum_{i \neq j} \PAREN{ A_{i, j} - \tilde{A}_{i, j} } + \sum_{i \neq j} \tilde{A}_{i, j} \PAREN{ y_i \cdot y_j - \optMaxCut_i \cdot \optMaxCut_j } 
\end{align*}
We will bound $\sum_{i \neq j} \PAREN{ A_{i, j} - \tilde{A}_{i, j} }$ by $\eta$, and $\sum_{i \neq j} \tilde{A}_{i, j} \PAREN{ y_i \cdot y_j - \optMaxCut_i \cdot \optMaxCut_j }$ by $\errMaxCut$, which proves \cref{lem:gap-between-rounding-and-opt}.

\paragraph{Bounding $\sum_{i \neq j} \PAREN{ A_{i, j} - \tilde{A}_{i, j} }$.}
Denote $V_{>}$ the collection of $(\Delta, \eta)$-narrow vertices. 
Since the graph is $(\Delta, \eta)$-wide, we have 
For each $i \in V_{>}$, 
\begin{equation}
    \sum_{i \in V_{>} } \sum_{j \in [n]} A_{i, j}
    = \sum_{i \in V_{>} } W_i
    \le \eta \cdot W.
\end{equation}

Next, consider a $(\Delta, \eta)$-wide vertex $i$.
It holds that $\deg(i) \ge \Delta$, and the maximum edge weight in its $\Delta$-suffix is bounded by $\eta \cdot W_i / \Delta$.
Since
\begin{equation}
    \tilde{A}_{i, j} = \min \set{ A_{i, j},  \, \eta \cdot W_i / \Delta }, 
    \, 
    \forall j \in [n],
\end{equation}
$A_{i, j} - \tilde{A}_{i, j} = 0$ for each $j$ in the $\Delta$-suffix.
Therefore, the gap $\sum_{i \neq j} \PAREN{ A_{i, j} - \tilde{A}_{i, j} }$ is at most the total weight of edges in its $\Delta$-prefix, which is at most $\eta \cdot W_i$.
It follows that 
\begin{equation}
    \sum_{i \in [n] \setminus V_{>} } \sum_{j \in [n] \setminus \set{i}} \PAREN{ A_{i, j} - \tilde{A}_{i, j} }
        \le \sum_{i \in [n] \setminus V_{>} } \eta \cdot W_i
        \le \eta \cdot W.
\end{equation}
        
\paragraph{Bounding $\sum_{i \neq j} \tilde{A}_{i, j} \PAREN{ y_i \cdot y_j - \optMaxCut_i \cdot \optMaxCut_j }$.}
First, based on \cref{lem:error-due-to-rounding-max-cut}:
\begin{align*}
    y^T \tilde{A} y - (\optMaxCut)^T \tilde{A} \optMaxCut 
        &= y^T \tilde{A} y - x^T \tilde{A} x + x^T \tilde{A} x - (\optMaxCut)^T \tilde{A} \optMaxCut 
        \le x^T \tilde{A} x - (\optMaxCut)^T \tilde{A} \optMaxCut.
\end{align*}
It remains to bound $x^T \tilde{A} x - (\optMaxCut)^T \tilde{A} \optMaxCut$. 
First, observe that $\optMaxCut$ is also solution for the optimization problem specified in \cref{opt:optimization-based-on-truncated-matrix}.
Since $x$ is the optimal one,
\begin{align*}
    x^T \tilde{A} z + \norm{ \tilde{A} z - \tilde{A} x }_1 
        \le (\optMaxCut)^T \tilde{A} z + \norm{ \tilde{A} z - \tilde{A} \optMaxCut }_1 
\end{align*}
Further, since $x \in \set{-1, 1}^n$,
\begin{align*}
    x^T \tilde{A} x - x^T \tilde{A} z 
        &= x^T (\tilde{A} x - \tilde{A} z) 
        \le \norm{x}_\infty \cdot \norm{\tilde{A} x - \tilde{A} z}_1 
        \le \norm{\tilde{A} x - \tilde{A} z}_1.
\end{align*}
Therefore, 
\begin{align*}
    \hspace{-2mm}
    x^T \tilde{A} x - (\optMaxCut)^T \tilde{A} \optMaxCut
        &= x^T \tilde{A} x - x^T \tilde{A} z + x^T \tilde{A} z - (\optMaxCut)^T \tilde{A} \optMaxCut \\
        &\le \norm{\tilde{A} x - \tilde{A} z}_1 + x^T \tilde{A} z - (\optMaxCut)^T \tilde{A} \optMaxCut 
        \le (\optMaxCut)^T \tilde{A} z + \norm{ \tilde{A} z - \tilde{A} \optMaxCut }_1 - (\optMaxCut)^T \tilde{A} \optMaxCut.
\end{align*}
Taking expectation of both sides and observing that $\E{z_i} = \optMaxCut_i$ gives 
\begin{align*}
    \E{ x^T \tilde{A} x - (\optMaxCut)^T \tilde{A} \optMaxCut }
        &\le \E{ \norm{ \tilde{A} z - \tilde{A} \optMaxCut }_1 }.
\end{align*}

Applying \cref{lem:ell-one-difference-between-product-of-truncated-adj-matrix} finishes the proof.

\begin{proof}[Proof of \cref{lem:ell-one-difference-between-product-of-truncated-adj-matrix}]
    First, note that $\forall i \in [n]$, it holds that 
    \begin{align*}
        \E{ z_i } 
            &= \optMaxCut_i, \\
        \E{ (\tilde{A} z)_i }
            &= \E{ \sum_{j \in [n]} \tilde{A}_{i, j} z_j }    
            = (\tilde{A} \optMaxCut)_i.
    \end{align*}
    Since the $b_i (e)$'s are at least pairwise independent, so are the $z_j$.
    Therefore, 
    \begin{equation*}
        \Var{ (\tilde{A} z)_i }
            = \sum_{j \in [n]} \tilde{A}_{i, j}^2 \cdot \Var{z_j}. 
    \end{equation*}
    By Jensen's inequality
    \begin{align*}
        \E \left[
            \sqrt{ \card{ (\tilde{A} z)_i - (\tilde{A} \optMaxCut)_i }^2 }
        \right] 
        &\le \sqrt{ 
            \E \left[
                \card{ (\tilde{A} z)_i - (\tilde{A} \optMaxCut)_i }^2 
            \right]
        } 
        =\sqrt{ 
            \E \left[
                \card{ (\tilde{A} z)_i -  \E{(\tilde{A} z)_i} }^2 
            \right]
        } 
        = \sqrt{ 
            \sum_{j \in [n]} \tilde{A}_{i, j}^2 \cdot \Var{z_j}
        } .
    \end{align*}
    Finally, 
    \begin{align*}
        \E { \norm{\tilde{A} \optMaxCut - \tilde{A} z}_1 }
            &= \sum_{i \in [n]} \E\left[ \card{ (\tilde{A} \optMaxCut)_i - (\tilde{A} z)_i } \right] 
            \le \sum_{i \in [n]} \sqrt{ 
                \sum_{j \in [n]} \tilde{A}_{i, j}^2 \cdot \Var{z_j}
            } .
    \end{align*}

    \begin{proof}[Proof of \cref{lem:bound-on-l-2-norm-for-max-cut}] 
        First, the variance of the $z_i$ satisfies
        \begin{equation*}
            \Var{z_i} \in O \PAREN{ \frac{1}{\deg(i) \cdot \eps^2}}.
        \end{equation*}
        Therefore, 
        \begin{align*}
            \errMaxCut 
            &= \sum_{i \in [n]} \sqrt{ 
                \sum_{j \in [n]} \tilde{A}_{i, j}^2 \cdot \Var{z_j}
            } 
            \in O \PAREN{
                \frac{1}{\eps}
                \sum_{i \in [n] } \sqrt{
                    \sum_{j \in [n]} 
                    \frac{
                        \tilde{A}_{i, j}^2
                    }{
                        \deg(j)
                    }
                }
            }.
        \end{align*}
        Denote $V_{>}$ the collection of $(\Delta, \eta)$-narrow vertices. 
        For each $i \in V_{>}$, since $\tilde{A}_{i, j} = 0$,
        \begin{equation}
            \sqrt{
                \sum_{j \in [n]} 
                \frac{
                    \tilde{A}_{i, j}^2
                }{
                    \deg(j)
                }
            } 
            = 0.
        \end{equation}
        On the other hand, for each $(\Delta, \eta)$-wide vertex $i$, it holds that $\deg(i) \ge \Delta$, and 
        \begin{equation}
            \max_{j \in [n]} \tilde{A}_{i, j} \le \eta \cdot W_i / \Delta.
        \end{equation}
        Therefore, 
        \begin{align*}
            \sqrt{
                \sum_{j \in [n]} 
                \frac{
                    \tilde{A}_{i, j}^2
                }{
                    \deg(j)
                }
            } 
            &\le \sqrt{
                \sum_{j \in [n]} 
                \frac{
                    \tilde{A}_{i, j}^2
                }{
                    \Delta
                }
            } 
            \le \sqrt{
                \sum_{j \in [n]} 
                \frac{
                    \tilde{A}_{i, j} \cdot \eta \cdot W_i
                }{
                    \Delta^2
                }
            } 
            \le \frac{
                    \sqrt{\eta} \cdot W_i
                }{
                    \Delta
                }. 
        \end{align*}
        And
        \begin{align*}
            \sum_{i \notin V_{>} } \sqrt{
               \sum_{j \in [n]} 
                \frac{
                    \tilde{A}_{i, j}^2
                }{
                    \deg(j)
                }
            } 
            &\le \sum_{i \notin V_{>}} \sum_{j \in [n]} 
                \frac{
                    \sqrt{\eta} \cdot W_i
                }{
                    \Delta
                } 
            \le \frac{
                    \sqrt{\eta} \cdot W
                }{
                    \Delta
                }.
        \end{align*}
    \end{proof}

\end{proof}

\begin{proof}[Proof of \cref{lem:error-due-to-rounding-max-cut}]
    We prove by induction on $i$ that 
    \begin{equation}
         (y_{1 : i - 1}, y_i, x_{i + 1 : n})^T  \tilde{A} \, (y_{1 : i - 1}, y_i, x_{i + 1 : n})
            \le x^T \tilde{A} x.
    \end{equation}
    When $i = 1$, since $\tilde{A}_{1, 1} = 0$, it holds that 
    $$
         f(y_1) 
            = (y_{1}, x_{2 : n})^T  \tilde{A} \, (y_{1}, x_{2 : n})
            = y_1 \cdot \sum_{i \neq 1} 2 \tilde{A}_{1, i} x_i + C,
    $$
    where $C$ does not depend on $y_1$.
    Therefore, $f$ is linearly in $y_1$, and either $f(-1) \le f(x_1)$ or $f(1) \le f(x_1)$ holds. 
    We can pick $y_1 \in \set{-1, 1}$ so that $f(y_1) \le f(x_1)$.

    Via similar argument, for each $i > 1$, either: 
    \begin{align*}
        &(y_{1 : i - 1}, -1, x_{i + 1 : n})^T  \tilde{A} \, (y_{1 : i - 1}, -1, x_{i + 1 : n}) 
        \le (y_{1 : i - 1}, x_{i : n})^T  \tilde{A} \, (y_{1 : i - 1}, x_{i : n}) 
        \le x^T \tilde{A} x,
    \end{align*}
    or
    \begin{align*}
        &(y_{1 : i - 1}, 1, x_{i + 1 : n})^T  \tilde{A} \, (y_{1 : i - 1}, 1, x_{i + 1 : n}) 
        \le (y_{1 : i - 1}, x_{i : n})^T  \tilde{A} \, (y_{1 : i - 1}, x_{i : n}) 
        \le x^T \tilde{A} x.
    \end{align*}
    hold.
    This proves our claim.
\end{proof}

\newpage
\section{Probability Facts}

\begin{fact}
    \label{fact:4th moment}
    Let $X_1, \ldots, X_n$ be $4$-wise independent random variables with mean $0$.
    Then for each $t \in \R_{\ge 0}$, 
    \begin{equation}
        \P{ \left| \sum_{i \in [n]} X_i \right| \ge t }
            \le \frac{ \sum_{i \in [n]} \E[X_i^4] + 6 \cdot \sum_{i \neq j} \E[X_i^2] \E[ X_j^2] }{t^4}.
    \end{equation}
\end{fact}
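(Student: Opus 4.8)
The plan is to reduce to a fourth-moment computation: apply Markov's inequality to the nonnegative random variable $S^4$, where $S \doteq \sum_{i \in [n]} X_i$, and then expand $\E\left[S^4\right]$ and use the $4$-wise independence together with the mean-zero hypothesis to discard the cross terms that do not appear in the claimed bound.

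First I would note that, since $S^4 \ge 0$, Markov's inequality gives $\P{\left|S\right| \ge t} = \P{S^4 \ge t^4} \le \E\left[S^4\right] / t^4$ (for $t = 0$ the inequality is trivial, so assume $t > 0$). It therefore suffices to prove $\E\left[S^4\right] \le \sum_{i \in [n]} \E[X_i^4] + 6 \sum_{i \neq j} \E[X_i^2]\E[X_j^2]$.

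Next I would expand $\E\left[S^4\right] = \sum_{i_1, i_2, i_3, i_4 \in [n]} \E[X_{i_1} X_{i_2} X_{i_3} X_{i_4}]$ and classify the index tuples by the multiplicity pattern of the distinct indices they contain. If some index occurs exactly once in a tuple, then by $4$-wise independence (which implies mutual independence of any subset of size at most $4$) that factor splits off and contributes $\E[X_i] = 0$, so such a term vanishes; this eliminates the patterns $(1,1,1,1)$, $(2,1,1)$, and $(3,1)$. The surviving contributions are: (a) all four indices equal, giving $\sum_{i \in [n]} \E[X_i^4]$; and (b) two distinct indices $i \neq j$ each occurring exactly twice, giving terms $\E[X_i^2 X_j^2] = \E[X_i^2]\E[X_j^2]$ by pairwise independence. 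For each unordered pair $\{i,j\}$ there are $\binom{4}{2} = 6$ ordered tuples realizing pattern (b), so this contributes $6 \sum_{\{i,j\}: i \neq j} \E[X_i^2]\E[X_j^2] = 3 \sum_{i \neq j} \E[X_i^2]\E[X_j^2]$ (the final sum over ordered pairs), which is at most $6 \sum_{i \neq j} \E[X_i^2]\E[X_j^2]$. Combining (a) and (b) yields the required bound on $\E\left[S^4\right]$, and hence the lemma.

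There is no genuine obstacle here; the only point requiring care is the combinatorial bookkeeping --- correctly arguing that the $(3,1)$ and $(2,1,1)$ patterns vanish (the singleton factor is independent of the remaining factors and has mean zero) and getting the multiplicity $\binom{4}{2}$ of the $(2,2)$ pattern right. Note the constant $6$ in the statement is not tight (the exact constant is $3$), but since only an upper bound is needed this is harmless.
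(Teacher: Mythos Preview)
Your proof is correct and follows essentially the same approach as the paper: apply Markov's inequality to $S^4$ and expand $\E[S^4]$ using $4$-wise independence and the mean-zero assumption to kill all terms with a singleton index. Your remark that the tight constant is $3$ rather than $6$ is also correct; the paper's expansion simply arrives directly at the (slightly loose) coefficient $\binom{4}{2}=6$ over ordered pairs.
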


\begin{proof}
    By Markov's inequality,
    \begin{align}
        \P{ \left| \sum_{i \in [n]} X_i \right| \ge t } 
            \le \frac{ 
                \E \left[ \left| \sum_{i \in [n]} X_i \right|^4 \right]
            }{t^4}.
    \end{align}
    On the other hand, since the $X_i$ are $4$-wise independent mean zero random variables, 
    \begin{align}
        \E \left[ \left| \sum_{i \in [n]} X_i \right|^4 \right]
            &= \E \left[ 
                \sum_{i \in [n]} X_i^4 
                + \binom{4}{2} \cdot \sum_{i \neq j} X_i^2 X_j^2
                + 2 \cdot \binom{4}{2} \sum_{i \neq j \neq k} X_i X_j X_k^2
                + \binom{4}{1} \cdot \sum_{i \neq j} X_i X_j^3
            \right] \\
            &= \sum_{i \in [n]} \E[X_i^4]
            + \binom{4}{2} \cdot \sum_{i \neq j} \E[X_i^2] \E[ X_j^2].
    \end{align}
\end{proof}

\section{Proofs from \cref{sec:vertex-cover}}
\label{appendix:proofs-for-vertex-cover}

\begin{proof}[Proof of \cref{thm:vc-main}]
    We first claim that $S \gets S_0 \cup S_1 \cup S_2$ is a vertex cover.  
    This claim is straightforward: $S_0 \cup S_1$ forms a vertex cover for heavy-heavy and heavy-light edges, while $S_2$ covers light-light edges.  
    Together, they cover all edges.
    
    Next, combining \cref{lemma:False-Positive-Prediction}, \cref{lemma:VC-Heavy-Fix}, and \cref{lemma:VC-Light} yields:
    \begin{align}
        \E[ \card{S} ]
            &\le \E[ \card{S_0} + \card{S_1} + \card{S_2} ] \\
            &\le \E[ \card{S_0 \cap \optVC}  + \card{S_0 \setminus \optVC} + \card{S_1} + \card{S_2} ] \\
            & \le \E \left[ \card{S_0 \cap \optVC}  + \card{S_0 \setminus \optVC} + \card{S_1} + \card{\optVcLight \setminus \paren{S_0 \cup S_1} } \cdot \PAREN{ 2 - 2 \, \frac{\log \log \Delta}{\log \Delta } } \right] \\
            & \le \E \left[ \card{S_0 \cap \optVC} + \card{\optVcLight \setminus \paren{S_0 \cup S_1} } \cdot \PAREN{ 2 - 2 \, \frac{\log \log \Delta}{\log \Delta } } \right] 
            + \eps^{10} \cdot \card{\optVC} 
            + 2 \cdot \eps^{200} \cdot \card{\optVcHeavy} \\
            &\le \card{\optVC} \cdot \PAREN{ 2 - 2 \, \frac{\log \log \Delta}{\log \Delta } } 
            + \eps^{10} \cdot \card{\optVC} 
            + 2 \cdot \eps^{200} \cdot \card{\optVcHeavy}\\
            &\le \card{\optVC} \cdot \PAREN{ 2 + 3 \cdot \eps^{10} - 2 \, \frac{\log \log \Delta}{\log \Delta } }
            = \card{\optVC} \cdot \PAREN{2 - \Omega \PAREN{ \frac{\log \log 1/\eps }{\log 1/\eps} }}.
    \end{align}
\end{proof}

Proving \cref{lemma:False-Positive-Prediction,lemma:VC-Heavy-Fix,lemma:VC-Light} require the following additional lemma.

\begin{lemma}\label{lem:misclassify_prob}
    Let $v$ be a fixed vertex satisfying $\deg(v) \ge \Delta$. 
    Then 
    \begin{equation}
        \P{m_v \neq \indicator{v \in \optVC}} \le 
            \exp \PAREN{
                - 2 \cdot 
                    \deg(v) \cdot \eps^2
            }.
    \end{equation}
\end{lemma}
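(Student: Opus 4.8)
\textbf{Proof proposal for \cref{lem:misclassify_prob}.}
The plan is to reduce the claim to a single application of Hoeffding's inequality, after splitting on whether $v \in \optVC$. Fix $v$ with $d \doteq \deg(v) \ge \Delta$, and let $\{X_e\}_{e \ni v}$ denote the i.i.d.\ bits $b_v(e)$ over the $d$ edges incident to $v$; each $X_e \in \{0,1\}$, so in particular $X_e \in [0,1]$, which is what Hoeffding needs. Recall $m_v = \mathrm{Majority}(\{X_e\})$, breaking ties toward $0$, i.e. $m_v = 1$ exactly when $\sum_{e \ni v} X_e > d/2$.

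First I would handle the case $v \in \optVC$, so that $\indicator{v \in \optVC} = 1$ and each $X_e$ is Bernoulli$(1/2+\eps)$ with $\E\big[\sum_e X_e\big] = d(1/2+\eps)$. Here $m_v \neq \indicator{v\in\optVC}$ means $m_v = 0$, i.e. $\sum_e X_e \le d/2$. This is a downward deviation of at least $d\eps$ below the mean, so Hoeffding gives
\[
  \P{\textstyle\sum_{e} X_e \le d/2} \;\le\; \exp\!\PAREN{-\,\frac{2 (d\eps)^2}{d}} \;=\; \exp\!\PAREN{-2 d \eps^2}.
\]
Next, the case $v \notin \optVC$: now $\indicator{v \in \optVC} = 0$, each $X_e$ is Bernoulli$(1/2-\eps)$ with mean $d(1/2-\eps)$, and $m_v \neq \indicator{v\in\optVC}$ means $m_v = 1$, i.e. $\sum_e X_e > d/2$, an upward deviation of more than $d\eps$ above the mean. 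Hoeffding again yields $\P{\sum_e X_e > d/2} \le \exp(-2 d \eps^2)$. Combining the two cases proves the bound.

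The argument is essentially routine; the only point requiring care is the tie-breaking convention, which determines the direction of each inequality ($\le d/2$ vs.\ $> d/2$) but not the magnitude of the deviation, so it does not affect the final bound. A secondary point worth a sentence is that the lemma only uses pairwise/4-wise independence in spirit — here full independence makes Hoeffding directly applicable — matching the remark after \cref{lemma:False-Positive-Prediction}; if one wanted to weaken the independence assumption one would instead invoke \cref{fact:4th moment}, at the cost of a polynomially (rather than exponentially) small bound, which would then need $\Delta$ to be chosen as a larger polynomial in $1/\eps$.
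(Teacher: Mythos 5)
Your proposal is correct and is essentially the same argument the paper gives: identify the majority vote as a sum of i.i.d.\ $\{0,1\}$ bits with mean shifted by $\eps$ away from $1/2$, and bound the misclassification event by a single Hoeffding deviation of magnitude $\deg(v)\eps$. The only cosmetic difference is that the paper asserts ``without loss of generality $v \in \optVC$'' and works out only that case, while you explicitly write out both cases and flag that the tie-breaking rule flips the inequality direction without affecting the deviation magnitude — a slightly more careful treatment of a point the paper glosses over, but the same route in substance.
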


\begin{proof}[Proof of \cref{lem:misclassify_prob}]
    Without loss of generality, assume that $v \in \optVC$.
    For each $u \in N(v)$, let $X_u \in \set{0, 1}$ be the prediction of whether $v \in \optVC$, by the edge $e = (u, v)$.
    Then $m_v \neq \indicator{v \in \optVC}$ if $\sum_{u \in N(v)} X_u \le \deg(v) / 2$.
    Note that $\mathbb{E}\left[ \sum_{u \in N(v)} X_u \right] = \deg(v) \cdot (1 / 2 + \eps)$. 
    By Hoeffding's inequality, when $deg(v) \ge \Delta$
    \begin{align*}
        \P{ \sum_{u \in N(v)} X_u \le \deg(v) / 2 }
            &\le \exp \left( 
                - 2 \cdot \frac{
                    \deg(v)^2 \cdot \eps^2
                }{ \deg(v) }
            \right) 
            = \exp \left( 
                - 2 \cdot 
                    \deg(v) \cdot \eps^2
            \right).
    \end{align*}
\end{proof}

We are ready to prove \cref{lemma:False-Positive-Prediction,lemma:VC-Heavy-Fix,lemma:VC-Light}.

\begin{proof}[Proof of \cref{lemma:False-Positive-Prediction}]
    Vertices from $V \setminus \optVC$ are added to $S_0$ when one of the following events occurs:  
    \begin{enumerate}
        \item[(1)] There exists a vertex $v \in \optVcHeavy$ such that $m_v = 0$.  
        In this case, \cref{alg:learned_vc}, line 12 adds $N(v)$ to $S_0$, which may include vertices from $V \setminus \optVC$.
    
        \item[(2)] 
        A vertex $v \notin \optVC$ with $\deg(v) \ge \Delta$ and $m_v = 1$ can also be added to $S_0$ in \cref{alg:learned_vc}, line 10.  
        This occurs only if either:  
        \begin{enumerate}
            \item[(a)] $v$ has a neighbor $u$ with $\deg(u) < \Delta$; or  
            \item[(b)] all its neighbors $u$ satisfy $\deg(u) \ge \Delta$, but at least one neighbor $u$ has $m_u = 0$.  
            In this case, $v$ is also added to $S_0$ when \cref{alg:learned_vc} adds $N(u)$ to $S_0$. \emph{This is already accounted for in Case (1).}
        \end{enumerate}
    \end{enumerate}
    To bound the number of vertices added to $S_0$ from $V \setminus \optVC$, it suffices to separately bound the numbers added from cases $(1)$ and $(2.a)$. 

    \vspace{3mm}
    \noindent \textit{Case (1):}
    In this case, based on \cref{lem:misclassify_prob}, the expected number is bounded by 
    \begin{align}
        \E \left[ \sum_{v \in \optVcHeavy} \indicator{m_v = 0} \cdot \deg(v) \right]
            &= \sum_{v \in \optVcHeavy} \P{m_v = 0}  \cdot \deg(v) \\
            &\le \sum_{v \in \optVcHeavy} \exp \PAREN{
                - 2 \cdot 
                    \deg(v) \cdot \eps^2
            } \cdot \deg(v) \\
            &\le \card{\optVcHeavy} \cdot \exp \PAREN{
                - 2 \cdot 
                    \Delta \cdot \eps^2
            } \cdot \Delta 
            \label{ineq-a:false-negative-neighbor-size-lower-bound}
            \\
            &\le \eps^{10} \cdot \card{\optVcHeavy},
            \label{ineq-b:false-negative-neighbor-size-lower-bound}
    \end{align}
    where \cref{ineq-a:false-negative-neighbor-size-lower-bound} follows since the function $y = x \cdot \exp \paren{ - 2 \cdot x \cdot \eps^2 }$ decreases when $x \ge 1 / (2 \cdot \eps^2)$, and \cref{ineq-b:false-negative-neighbor-size-lower-bound} follows since $\Delta \ge 1 / \eps^2$, for moderately small $\eps$, it holds that 
    $\exp \PAREN{
                - 2 \cdot 
                    \Delta \cdot \eps^2
            } \cdot \Delta \le \eps^{10}.
    $

    \vspace{3mm}
    \noindent \textit{Case (2.a):}
    We partition the vertices $v$ added to $S_0$ in the this case, into a collection of $\card{\optVC}$ subsets, denoted as $\set{P_u : u \in \optVcLight}$, as follows.  
    Since $v \notin \optVC$, it holds that $N(v) \subseteq \optVC$, otherwise the edges incident to $v$ are not covered. 
    Therefore, $N(v) \cap \optVcLight \neq \varnothing$.
    We pick an arbitrary vertex $u \in N(v) \cap \optVcLight$ and place $v$ into $P_u$.  
    Based on \cref{lem:misclassify_prob}, we see that 
    \[
        \P{v \in P_u} \le \P{m_v = 1} \le \exp \PAREN{-2 \cdot \Delta \cdot \eps^2}.
    \]
    It remains to bound
    $
        \sum_{u \in \optVcLight} \card{P_u}.
    $
    Since each $\card{P_u}$ can be viewed as a sum of at most $\deg(u) \le \Delta$ indicator random variables, each taking one with probability at most $\exp \PAREN{-2 \cdot \Delta \cdot \eps^2}$,
    it concludes that 
    \begin{align}
        \sum_{u \in \optVcLight} \E[ \card{P_u} ] 
            &= \sum_{u \in \optVcLight} \Delta \cdot \exp \PAREN{-2 \cdot \Delta \cdot \eps^2} 
            \le \eps^{10} \cdot \card{\optVcLight},
    \end{align}
    where the last inequality holds if $\eps$ is moderately small.

    \noindent \textit{Case (2.b):} Note that such a vertex $v$ is already accounted for in Case (1) since it has neighbor $u\in \mathcal{C}_{\geq \Delta}$ with $m_u=0$.

\end{proof}

\vspace{4mm}
\begin{remark}
    Indeed $4$-wise independence in the prediction model of \cref{def:prediction-model-vc} is enough to establish this proof. 
    For example, let $v \in \optVcHeavy$, and we would like to prove that 
    \begin{equation}
        \deg(v) \cdot \P{m_v = 0} \in o(1).
    \end{equation}
    
    Let $X_u \doteq \indicator{b_v \PAREN{(u, v)} = 0} - \PAREN{\frac{1}{2} - \eps}, u \in N(v)$ be $4$-wise independent random variables, where $b_v \PAREN{(u, v)}$ is the prediction of $v$ from the edge $(u, v)$.
    Then 
    \begin{align*}
        \E{X_u} 
            &= 0, \\
        \E[X_u^2] 
            &= \PAREN{\frac{1}{2} + \eps} \PAREN{\frac{1}{2} - \eps}, \\
        \E[X_u^4] 
            &= \PAREN{\frac{1}{2} + \eps} \cdot \PAREN{\frac{1}{2} - \eps}^4 + \PAREN{\frac{1}{2} - \eps} \cdot \PAREN{\frac{1}{2} + \eps}^4 = \PAREN{\frac{1}{2} + \eps} \cdot \PAREN{\frac{1}{2} - \eps} \cdot 2 \cdot \PAREN{\frac{1}{4} + \eps^2} 
            \le \frac{1}{8}.
    \end{align*}
    Based on \cref{fact:4th moment}, there exists some universal constant $c > 0$, s.t., 
    \begin{align*}
        \P{m_v = 0} \cdot \deg(v)
            &= \P{\sum_{u \in N(v)} X_u \ge \deg(v) \cdot \eps} \cdot \deg(v) \\
            &\le \frac{ \deg(v) / 8 + 6 \cdot \deg(v) \cdot \PAREN{\deg(v) - 1} / 16}{ \deg(v)^4 \eps^4 } \cdot \deg(v)
            \le \frac{c}{\deg(v) \eps^4}.
    \end{align*}
    If we pick a threshold $\Delta$ properly, e.g., $\Delta \in \Omega(1 / \eps^{14}$, then $\deg(v) \ge \Delta$ implies that 
    \begin{equation}
        \frac{c}{\deg(v) \eps^4} \in O( \eps^{10} ). 
    \end{equation}
\end{remark}

\begin{proof}[Proof of \cref{lemma:VC-Heavy-Fix}]
    We will show that 
    \begin{enumerate}
        \item[(1)] $\optVcHeavy \cup S_0$ covers all heavy-heavy and heavy-light edges, and 
        \item[(2)] $\E [ \card{\optVcHeavy \setminus S_0} ] \le \eps^{200} \cdot \card{\optVcHeavy}$.
    \end{enumerate}
    
    Let $\optVC_1$ be the optimal vertex cover for the collection of edges that are incident to at most one vertex with degree $\ge \Delta$ and not covered by $S_0$.
    It follows that $\card{\optVC_1} \le \card{\optVcHeavy \setminus S_0}$
    \begin{align}
        \E[ \card{S_1} ] 
            &\le \E[ 2 \cdot \card{\optVC_1} ] 
            \le \E[ 2 \cdot \card{\optVcHeavy \setminus S_0} ]
            \le 2 \cdot \eps^{200} \cdot \card{\optVcHeavy}.
    \end{align}

    \noindent \textit{Proving (1):}
    Each heavy-heavy edge is trivially covered by $\optVcHeavy$, as at least one of its endpoints belongs to $\optVC$ and, consequently, to $\optVcHeavy$.  
    For each heavy-light edge $(u, v)$, if one of its endpoints belongs to $\optVcHeavy$, then the edge is covered.  
    Otherwise, without loss of generality, assume $\deg(v) \ge \Delta$ and $v \notin \optVcHeavy$.  
    We claim that $(v, u)$ must already be covered by $S_0$.  
    
    Note that $v$ does not satisfy Line 7 of \cref{alg:learned_vc}, as one of its neighbors, $u$, has degree $\deg(u) < \Delta$.  
    Therefore, \cref{alg:learned_vc} either adds $v$ to $S_0$, or it adds $N(v)$ to $S_0$.  
    In both cases, $(v, u)$ is covered by $S_0$.

    \vspace{3mm}
    \noindent \textit{Proving (2):}
    Consider a fixed $v \in \optVcHeavy$.
    If $v \notin S_0$, then either
    \begin{itemize}
        \item $m_v = 0$, which happens with probability (according to \cref{lem:misclassify_prob}) 
        \begin{align}
            \P{m_v = 0} 
                &\le \exp \left( 
                    - 2 \cdot 
                        \deg(v) \cdot \eps^2
                \right) 
                \le \exp \left( 
                    - 2 \cdot 
                        \Delta \cdot \eps^2
                \right).
        \end{align}
        \item Or all its neighbors $u$ have degree $\ge \Delta$ and $m_u = 1$.  
        In this case, at least one neighbor $u \notin \optVC$, as otherwise, we could remove $v$ from $\optVC$ while maintaining it as a valid cover.  
        Hence, 
        \[
            \P{v \notin S_0} 
            \le \P{m_u = 1} \le \exp \left( 
                - 2 \cdot 
                    \deg(u) \cdot \eps^2
            \right) \le \exp \left( 
                - 2 \cdot 
                    \Delta \cdot \eps^2
            \right).
        \]
    \end{itemize}
    We conclude that 
    \begin{align*}
        \E [ \card{\optVC \setminus S_0} ]
            &= \sum_{v \in \optVcHeavy} \E[ \indicator{v \notin S_0} ]
            \le \card{\optVcHeavy} \cdot \exp \PAREN{ 
                - 2 \cdot 
                    \Delta \cdot \eps^2
            } 
            = \card{\optVcHeavy} \cdot \exp \left( 
                - 200 \cdot \ln (1 / \eps) 
            \right)
            \le \card{\optVcHeavy} \cdot \eps^{200}.
    \end{align*}
\end{proof}

\begin{proof}[Proof of \cref{lemma:VC-Light}]
    By definition, $S_0 \cup S_1$ covers all heavy-heavy and heavy-light edges.  
    Since $\optVcLight$ is a cover for the light-light edges, it also covers any edges not covered by $S_0 \cup S_1$.  
    Furthermore, since $S_2$ is a $\PAREN{ 2 - 2 \, \frac{\log \log \Delta}{\log \Delta } }$-approximate cover for the edges not covered by $S_0 \cup S_1$, the claim follows.
\end{proof}

\end{document}